%% file: main.tex
\PassOptionsToPackage{dvipsnames,table}{xcolor}
\documentclass[11pt,letterpaper]{article}
\usepackage[margin=1in]{geometry}
\usepackage{mathpazo}
\usepackage[T1]{fontenc} %

\newcommand{\Unif}{\mathrm{Unif}}

\title{Proper Positive Learning}

\input{macros}

\makeatletter
\renewcommand\paragraph{\@startsection{paragraph}{4}{\z@}%
  {\smallskipamount}%
  {-1em}%
  {\normalfont\normalsize\bfseries}}
\makeatother
\hypersetup{
  colorlinks = true,
  urlcolor = {blueGrotto},
  linkcolor = {royalBlue},
  citecolor = {navyBlue}
}
\usepackage[
  backref=true,
  backend=biber,
  natbib=true,
  style=alphabetic,
  sorting=alphabeticlabel,
  sortcites=true,
  minbibnames=3,
  maxbibnames=999,
  mincitenames=4,
  maxcitenames=4,
  minalphanames=4,
  maxalphanames=4,
  doi=false,
  isbn=false,
]{biblatex}
\DeclareSortingTemplate{alphabeticlabel}{
  \sort[final]{%
    \field{labelalpha}
  }
  \sort{%
    \field{year}
  }
  \sort{%
    \field{title}
  }
}
\AtEveryBibitem{%
  \clearname{editor}%
}
\AtEveryBibitem{%
  \clearlist{location}%
}

\AtBeginRefsection{\GenRefcontextData{sorting=ynt}}
\AtEveryCite{\localrefcontext[sorting=ynt]}
\usepackage{soul}
\usepackage[framemethod=TikZ]{mdframed}
\mdfsetup{%
backgroundcolor=royalBlue!0, %
roundcorner=4pt,
skipabove=5pt,
skipbelow=5pt,
linewidth=1pt}

\newcommand{\mpos}{m} %

\newcommand{\highlight}[1]{\textcolor{royalBlue}{\textit{#1}}}

\setul{2pt}{0.4pt}

\newcommand{\paperTitle}{Surprises in Proper Positive-Only Learning}

\title{\paperTitle{}}
\author{
    \begin{tabular}{@{}C{7.8cm}@{\hspace{0.3cm}}C{7.8cm}@{}}
        {\bf Shai Ben-David}
            & {\bf Farnam Mansouri}\\
        {\small University of Waterloo and Vector Institute}
            & {\small University of Waterloo and Vector Institute}\\
        {\small\mbox{{\href{mailto:shai@uwaterloo.ca}{shai@uwaterloo.ca}}}}
            & {\small\mbox{{\href{mailto:f5mansou@uwaterloo.ca}{f5mansou@uwaterloo.ca}}}}\\[4mm]
        {\bf Anay Mehrotra}
            & {\bf Manolis Zampetakis}\\
        {\small Stanford University}
            & {\small Yale University}\\
        {\small\mbox{{\href{mailto:anaymehrotra1@gmail.com}{anaymehrotra1@gmail.com}}}}
            & {\small\mbox{{\href{mailto:manolis.zampetakis@yale.edu}{manolis.zampetakis@yale.edu}}}}
    \end{tabular}
}
\date{}

\begin{document}

\hypersetup{pageanchor=false}
\pagenumbering{gobble}
\maketitle

\begin{abstract}
 
    Binary classification from positive-only samples is a variant of PAC learning in which the learner receives i.i.d.\ samples from the positive region of an unknown target concept, but is evaluated under the original distribution (which places mass on both positive and negative regions). 
    This model dates back to \cite[STOC]{natarajan1987learning}, and the characterization of improper learning is well-known -- it even appears in textbooks \citep[Exercise 3.7]{kearns1994introduction}.
    The characterization of \textit{proper} positive-only learning, however, has long remained open.
    In this work, we revisit and settle this question: a concept class is properly learnable from positive-only samples if and only if it has finite VC dimension and satisfies a new combinatorial condition, which we call uniform exterior separability. 
    Together with several separation results, this characterization reveals a surprisingly rich landscape that differs sharply from standard PAC learning: proper and improper learning are separated, randomized and deterministic proper learning are separated, there are classes for which \textit{no} ERM is a learner, and finite VC dimension does not suffice even for non-uniform learning.
    Along the way, we introduce new combinatorial dimensions that we believe can be of broader interest in learning theory.
\end{abstract}

\hypersetup{pageanchor=true}
\pagenumbering{arabic}

\section{Introduction}
\label{sec:introduction}

In the celebrated PAC learning model \citep{valiant1984theory}, a learner observes positive and negative examples drawn from an unknown distribution and must output a classifier with small misclassification error. %
This formulation assumes that the learner has access to labeled samples from both sides of the target concept.
However, in practical scenarios, this assumption is often violated because negative examples are expensive to collect, scarce, or completely absent. 
This issue was already motivated by Valiant, who wrote: \emph{``While it may be reasonable to discuss the distribution of the attributes of elephants, we may prefer not discussing the distribution of the attributes of non-elephants'' --- \citep{valiant1984deductive}.}
It also arises in many modern applications: for instance, in medical diagnosis, confirmed cases of a condition are recorded but healthy controls may be unlabeled \citep{elkan2008learning}; in web search and recommendation systems, engagement provides positive feedback while non-engagement remains ambiguous \citep{liu2003building}.
These practical challenges can be modeled through a natural extension of the PAC framework, where the learner receives positive samples, but receives \textit{no} negative samples.

Let $\cD$ be an unknown distribution over the feature space $\cX$, and let $\hstar \in \hyH$ be the target concept.
The learner receives a multiset of $n$ \iid{} samples from the conditional distribution $\cD_+ = \cD \mid (\hstar(X) = 1)$, and outputs a hypothesis whose error is measured under the original distribution $\cD$.
\begin{definition}[PAC Learning with Positive-Only Samples; cf.\ {\citep[Exercise 3.7]{kearns1994introduction}}]
    \label{def:positiveOnly}
    A class $\hyH$ is \emph{learnable from positive-only samples} if there exist a sample-complexity function $n\colon (0,1)^2 \to \N$ and a learner $\cA$ such that, for every accuracy and confidence parameters $\eps,\delta \in (0,1)$, every distribution $\cD$ over $\cX$, and every target $\hstar \in \hyH$ with $\Pr_{X \sim \cD}[\hstar(X) = 1] > 0$, given $n(\eps,\delta)$ \iid{} samples from $\cD_+$, $\cA$ outputs a hypothesis $\wh{h}$ satisfying
    \[
        \Pr\nolimits_{X \sim \cD}\sinparen{\wh{h}(X) \neq \hstar(X)} \leq \eps\,,
    \]
    with probability at least $1-\delta$ over the sample and the learner's internal randomness.
\end{definition}
\noindent Note that although the learner only sees samples from $\cD_+$, the hypothesis $\wh{h}$ is evaluated under the full distribution $\cD$, which has mass on both the positive and the negative regions.
The positive samples identify points that must be labeled positive, but they give no information about the negative region, where the learner must still avoid too many false positives.

A learner is said to be \emph{proper} if it always outputs a hypothesis in $\hyH$, and \emph{improper} otherwise (if it can output a hypothesis that is outside of $\hyH$).
In ordinary PAC learning, both types of learners are equally powerful: a class is PAC learnable if and only if it has finite VC dimension $\vc(\hyH) < \infty$.
Moreover, whenever $\vc(\hyH) < \infty$, \textit{every} empirical risk minimization (ERM) rule, which is any rule that returns a hypothesis in $\hyH$ consistent with the labeled samples, is a PAC learning algorithm.\footnote{While every ERM is a PAC learner, ERM rules, in general, do not attain the optimal sample complexity \citep{hanneke2016optimal,hanneke2016refined,larsen2023bagging}.}

For positive-only learning, the improper version admits a clean characterization:
Let $\hyH_\cap$ denote the class of finite intersections of concepts in $\hyH$.
Then $\hyH$ is improperly learnable from positive-only samples if and only if $\vc(\hyH_\cap) < \infty$ \citep{kearns1994introduction}.
\citet{graus1989lower} gives an alternate combinatorial characterization, which even appears as a textbook exercise \citep[Exercise 3.7]{kearns1994introduction}.
Recent work of \citet{hanneke2024star}, as a corollary, pins down the tight sample complexity of improper positive-only learning via a dimension called the one-centered star number; see \citet{anay2026thesis} for an overview.
However, despite the improper case being very well understood, and there being decades of work on various variants of PAC learning \citep{haussler1990survey}, including variants of positive-only learning \citep{natarajan1987learning,shvaytser1990positiveonly,kivinen1995learning,FJK96,NR09,AGR13,DDS15,Kontonis2019EfficientTS,CDS20,mansouri2025learning,lee2026smoothed}, \mbox{the following natural question has remained unresolved:}

    \begin{center}
        \emph{What is the characterization of learnability for proper positive-only PAC learning?}
    \end{center}

\vspace{-2mm}

\paragraph{Our contributions.}
In this work, we settle this question, and show that proper positive-only learning has a very rich structure.
We make the following contributions. 
\vspace{-1mm}

\begin{enumerate}[itemsep=-1pt,leftmargin=15pt]
    \item \textbf{A characterization of proper positive-only learning (\cref{thm:main}).}
    We prove that $\hyH$ is properly learnable from positive-only samples if and only if $\vc(\hyH) < \infty$ and $\hyH$ satisfies a new combinatorial condition, \emph{uniform exterior separability} (\cref{def:ues}), which controls false positives in the parts of the input space not forced to be positive by the sample.
    \item \textbf{Separations from ordinary PAC learning (\cref{thm:separation-improper,thm:randomness-necessary,thm:ERM-separation,thm:nonuniform-separation}).}
    Together with several separation results, our characterization shows that the landscape of learnability for proper positive-only learning is considerably different from that of ordinary PAC learning.
    In particular:
    \begin{enumerate}[itemsep=-1pt]
        \item Proper and improper learning are separated (\cref{thm:separation-improper});
        \item Randomized and deterministic proper learning are separated (\cref{thm:randomness-necessary});
        \item There exist classes for which no deterministic ERM rule is a learner (\cref{thm:ERM-separation}); and 
        \item Finite VC dimension does not suffice even for non-uniform learning (\cref{thm:nonuniform-separation}).
    \end{enumerate}
    \item \textbf{Randomness as a resource (\cref{thm:few-random-bits-suffice,thm:randomness-necessary}).}
    We quantify the randomness needed for proper positive-only learning.
    On the positive side, we show that very few random bits suffice: if $\hyH$ admits a randomized proper learner from positive-only samples, then it admits one that uses only $\wt{O}\!\inparen{d + \log \nfrac{1}{\eps}}$ random bits (where $d = \vc(\hyH)$) and $\wt{O}\!\inparen{\nfrac{d}{\eps}}$ positive samples (\cref{thm:few-random-bits-suffice}).
    For classes of constant VC dimension (\ie{}, $d=O(1)$), this is exponentially fewer random bits than samples.
    On the negative side, as mentioned above, randomness cannot be eliminated entirely: there exist classes that are properly learnable by a randomized learner but by no deterministic one (\cref{thm:randomness-necessary}).
    \item \textbf{A characterization of stable proper positive-only learning (\cref{thm:stable-characterization}).}
    We also study a notion of \emph{stability} for proper learners, which requires the learner's output to remain unchanged when given additional samples consistent with its current hypothesis (\cref{def:stable}), in the spirit of stable sample compression schemes \citep{bousquet2020proper}.
    We show that $\hyH$ is properly learnable by a stable 
    learner if and only if $\vc(\hyH) < \infty$ and $\hyH$ satisfies \emph{exact exterior separation} (\cref{def:ees}), a strict strengthening of uniform exterior separability (\cref{thm:stable-characterization}).
\end{enumerate}
\noindent More broadly, we view our results as part of an effort to understand how the right combinatorial dimension changes under \mbox{different supervision models}. The dimensions and conditions we introduce here, particularly the \mbox{exterior separability} notions, do not coincide with VC dimension and other standard parameters, and we expect they will find use beyond \mbox{positive-only learning}, in settings where the learner must extrapolate from \mbox{limited supervision}.

\subsection{Related Work}
\label{sec:related-works}
We have already discussed key related work on learning from positive-only samples when motivating the model.
Here, we situate our results within several neighboring literature.

\smallskip
\noindent\textbf{Learning with One-Sided Error.}
Positive-only learning has also been studied under the additional restriction that the learner makes no false-positive errors, known as the \emph{one-sided error} model.
\citet{natarajan1987learning} initiated this line of work and characterized proper learnability in the one-sided error setting.
Subsequently \citet{shvaytser1990positiveonly,kivinen1995learning} characterized improper learning and bounded its sample complexity.
Natarajan conjectured that his characterization extends to the (two-sided) proper positive-only learning model studied in this paper.\footnote{In support, he analyzed the two-sided problem under the uniform distribution, remarking that ``if the distribution is allowed to be arbitrary, we can show easily that any learning algorithm can be forced to deviate from the function to be learned with probability approaching unity.''}
\highlight{We disprove this conjecture in \cref{thm:separation-natarajan} (which, in turn, utilizes our main characterization; \cref{thm:main}).}

\smallskip
\noindent\textbf{Distribution-Specific Positive-Only Learning.}
Several works study positive-only learning under specific distributional assumptions.
When the distribution is Gaussian, \citet{Kontonis2019EfficientTS,lee2024efficient} design computationally efficient algorithms with guarantees for concept classes with bounded {Gaussian surface area}.
For unknown distributions satisfying a mild smoothness condition, \citet{lee2026smoothed} gave methods for learning all finite-VC classes as well as classes approximable by polynomials, and \citet{kouridakis2026truncation} obtained a polynomial-time algorithm for finite-VC classes on the real line.
A separate line studies positive-only learning when the distribution of positive examples is explicitly given to the learner \citep{FJK96,NR09,RG09,AGR13,DDS15,CDS20,JKV23}.
In contrast to all these works, and as is standard in statistical learning theory, we do not attempt to design computationally efficient algorithms and, instead, obtain characterizations of learnability, with a particular focus on \emph{proper} learners.

\smallskip
\noindent\textbf{Learning from Positive and Unlabeled Examples.}
The \emph{positive-and-unlabeled} (PU) model gives the learner access to unlabeled samples in addition to positive examples \citep{denis1998pac,liu2002partially,elkan2008learning,blanchard2010semi,du2015convex,bekker2018estimating,lee2024efficient,mansouri2025learning,lee2026smoothed}.
Especially relevant to our work, \citet{mansouri2025learning} proved lower bounds on the number of unlabeled samples required for PU learning, demonstrating a separation between \mbox{PU learning and learning from positive examples alone.}

\smallskip
\noindent\textbf{Out-of-Distribution Detection.}
The problem of learning from positive samples is also closely related to theoretical work on
out-of-distribution detection.
In the binary case, OOD detection can be viewed as learning from positive-only examples.
The results of \citet{NEURIPS2022_f0e91b13,JMLR:v25:23-1257} imply that consistency is impossible
in full generality in the agnostic setup, and identify conditions under which consistency can be
recovered. 
While in some of our results, we also consider consistency, our focus is on the stronger requirement of bounding the sample complexity of learning instead of just consistency.

\smallskip
\noindent\textbf{Language Identification in the Limit.}
The line of work on language identification in the limit, \eg{}, \citet{gold1967language,angluin1979finding,angluin1988learning} can also be viewed as learning from positive examples only, but with a different success criterion: the sample size is allowed to depend on the target concept and on the order in which positive examples are presented.
Thus, this line of work is closer in spirit to non-uniform learnability than to the uniform PAC-style guarantees studied here.
Stochastic versions of these frameworks have also been studied \cite{angluin1988identifying,kalavasis2025limits,hogsgaard2026agnostic} in the universal-learning formulation of \cite{bousquet2021theory}, which is quite different from the PAC learning model that we study here.

\section{Preliminaries}
    \label{sec:preliminaries}
    In this section, we introduce basic notation and preliminaries.%

\paragraph{Notation.}\hspace{-2mm}
    We use $f\lesssim g$ and $f\simeq g$ to denote $f=O(g)$ and $f=\Theta(g)$ respectively.
    We use $\widetilde{O}(\cdot)$, $\widetilde{\Omega}(\cdot)$, and $\widetilde{\Theta}(\cdot)$ to hide poly-logarithmic factors.    
    Further, for a set $A$, we use $A^{[*]}$ to denote the set of all finite multi-sets of members of $A$, and $A^*$ as the set of all finite sequences of elements of $A$. (A multi-set contains repetitions of instances, but not their order. A sequence also records their order.)

\paragraph{Positive-only learning.}
    Let $\hyH$ denote a concept class over a domain $\cX$. (Formally, $\hyH$ is a set of subsets of $\cX$.) We identify each concept with the set of points that it labels positive. A realizable instance of positive-only learning is specified by an unknown distribution $\cD$ over $\cX$ and an unknown target concept $\ell \in \cH$ with $\cD(\ell)>0$. 
    For measurable $A \subseteq \cX$, we define %
\[
\mathcal{D}_+(A) \coloneqq \mathcal{D}\!\inparen{A \mid x \in \ell}
\qquadand
\mathcal{D}_-(A) \coloneqq  \mathcal{D}\!\inparen{A \mid x \notin \ell}
\]
whenever the conditioning is well-defined. 
The learner observes only positive examples, \ie{}, an i.i.d.\ sample from $\cD_+$.
A hypothesis is a subset $h \subseteq \cX$ and its loss or \textit{error} is evaluated under the full distribution $\mathcal{D}$:
\[
\err_{\mathcal{D}}\!\inparen{h,\ell} \coloneqq  \mathcal{D}\!\inparen{h \triangle \ell}
    \quadtext{for} h \subseteq \mathcal{X}\,.
\]\vspace{-3mm}

\paragraph{Learning algorithms and sample complexity.}
    A proper learning algorithm, or simply \textit{a proper learner} $\cA$,  is a function $\cA  \colon \cX^{[*]} \times \Omega \to \cH$ where $\Omega=\zo^{\N}$ provides a sequence of (potentially) random bits. For a fixed sample $\sampleP \in \cX^*$, we use $\cA(\sampleP)$ as shorthand for the random variable $\omega \mapsto \cA(\sampleP,\omega)$.
    We say that a concept class $\cH$ is learnable from positive-only samples by a proper learner $\cA$ if there exists a function
    $\mpos{}_{\cH}: (0, 1) \times (0, 1) \rightarrow \Nat$ so that for every distribution $\cD$ over $\cX$ and every $\ell \in \cH$, every accuracy and confidence $\eps,\delta \in \inparen{0,1}$, and every number $n \geq \mpos{}_{\cH}\!\inparen{\eps,\delta}$, with probability at least $1-\delta$, the learner has an error at most $\eps$, \ie{}, 
    \[
    \Pr\nolimits_{\sampleP \sim \cD_+^n, \cA}\inparen{
    \err_{\cD}\inparen{\cA \inparen{\sampleP},\ell} \leq \eps
    } \geq 1-\delta\,.
    \]
    \vspace{-3mm}

\paragraph{Version space and closure.}
    The concept of a closure of a sample set is frequently used in learning theory and will also be central in our characterizations. 
    Consider a finite sample of ``positive'' points $S \subseteq \mathcal{X}$ realized by $\cH$. 
    (Formally, $S$ is a positive sample set realizable by $\cH$ if there is some $h\in \hyH$ such that $S\subseteq h$.)
    The version space $\hyH_S$ of $S$ is the set of all hypotheses in $\hyH$ consistent with $S$.
    In other words, it is the set of all hypotheses that could be the target concept.
    Formally, 
    \[
    \cH_S \coloneqq  \inbrace{h \in \cH : S \subseteq h}\,.\tag{Version Space}
    \]
    \noindent The closure of $S$ with respect to $\hyH$ is the intersection of all hypotheses $h$ in the version space, \ie{}, 
    \[
    \Cl_{\cH}\!\inparen{S} \coloneqq  \bigcap\nolimits_{h \in \cH_S} h\,. \tag{Closure}
    \]
    \noindent In particular, the closure is the largest subset of the domain that is guaranteed to be a subset of all hypotheses in the version space $\hyH_S$.
    In particular, since the target concept $\ell$ is always in the version space, $\Cl_{\cH}\!\inparen{S}$ is a subset of $\ell$.
  
\section{Characterizing Conditions}\label{sec:characterizing-conditions}
In this section, we introduce the conditions used to characterize proper positive-only learning.

All of the conditions are stated using the closure operator from \cref{sec:preliminaries}.
Recall that if $S$ is a finite set of observed positive examples and $S\subseteq \ell\in \cH$, then $\Cl_{\cH}\!\inparen{S} \subseteq \ell.$
Thus every point in $\Cl_{\cH}\!\inparen{S}$ is forced to be positive by the sample and the realizability assumption.
The main challenge is what happens outside the closure.
A point $x\notin \Cl_{\cH}\!\inparen{S}$ is not forced to be positive by the sample, and a proper learner must still output a member of $\cH$ without labeling too much of this outside region as positive.
The conditions below capture different ways in which hypotheses from $\cH$ can control this outside region.

\paragraph{The improper benchmark.}
Before stating the conditions, it is useful to recall the characterization of \textit{improper} positive-only learning.
Let $\cH_{\cap}$ denote the finite-intersection closure of $\cH$:
\[
    \cH_{\cap}
    \coloneqq
    \inbrace{
        \bigcap\nolimits_{h\in \mathcal{A}} h
        :
        \mathcal{A}\subseteq \cH \text{ is finite}
    }\, . 
\]
(Here, the empty intersection is interpreted as $\cX$.) For instance, if $\cH$ is the class of halfspaces in $\R^d$, then $\cH_{\cap}$ contains all polyhedra obtained as intersections of finitely many halfspaces.
\begin{theorem}[Improper positive-only learning; \citep{graus1989lower}; see also \citep{kearns1994introduction,lee2026smoothed}]
\label{thm:improper-positive-only}
    A concept class $\cH$ is improperly learnable from positive-only samples if and only if
    $\vc{}\!\inparen{\cH_{\cap}} < \infty.$
\end{theorem}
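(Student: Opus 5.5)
Both directions go through the closure operator. For sufficiency we analyze the learner that outputs the smallest consistent set, and for necessity we build a hard distribution from a set shattered by $\cH_\cap$.

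\textbf{Sufficiency.} Assume $d \coloneqq \vc(\cH_\cap)<\infty$. Given a positive sample $S$, the learner outputs $\wh h \coloneqq \Cl_\cH(S)$. This set is the intersection of possibly infinitely many concepts, so it need not lie in $\cH_\cap$. We will either work with that set directly or replace it by a finite subintersection that agrees with it on $S$; the latter exists because shattering arguments only involve finitely many points.

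\emph{No false positives.} Since $\ell \in \cH_S$, we have $\wh h \subseteq \ell$, so every error of $\wh h$ lies in $\ell\setminus \wh h$. The error is therefore $\cD(\ell\setminus\wh h)=\cD(\ell)\,\cD_+(\ell\setminus \wh h)\le \cD_+(\ell\setminus\wh h)$, and it suffices to show $\cD_+(\wh h^c)\le\eps$ with high probability.

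\emph{One-sided realizable problem.} Under $\cD_+$, every sample point has label $1$ with respect to every set containing it. Consider the class $\cC\coloneqq\{c\in\cH_\cap\}$, or its complements, used as a one-sided realizable problem whose target is $\ell$ itself. Any $c\in\cH_\cap$ with $S\subseteq c\subseteq \ell$ is consistent with all samples, which are labeled positive under $\ell$. The VC $\eps$-net theorem for the class $\{\ell\setminus c : c\in\cH_\cap\}$, which has VC dimension at most $d$, says the following: with $n=O((d\log(1/\eps)+\log(1/\delta))/\eps)$ samples, every $c\in\cH_\cap$ that contains the sample satisfies $\cD_+(\ell\setminus c)\le\eps$.

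\emph{Handling an infinite intersection.} The remaining step is to transfer this bound to $\wh h=\bigcap_{h\in\cH_S}h$, which is an infinite intersection. Suppose $\cD_+(\ell\setminus\wh h)>\eps$. By measure continuity we cannot always pick countably many concepts directly. Instead, we use the fact that the $\eps$-net bound holds uniformly. On the good event, every finite intersection $c$ of concepts in $\cH_S$ (each such $c\supseteq S$) has $\cD_+(\ell\setminus c)\le \eps$. Then $\ell\setminus \wh h=\bigcup_c(\ell\setminus c)$ is a directed union. If measurability is an issue, the learner can output a finite intersection. An alternative route uses the standard improper learner: take $\wh h=\bigcap_{h\in \cH_S} h$ restricted to a fresh countable dense argument, or more simply run the argument with outer measure. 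I expect this measurability and infinite-intersection technicality to be the main nuisance, and I would resolve it by noting that $\cH_\cap$ having finite VC dimension implies that the closure's labeling on any finite set is realized by some finite intersection.

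\textbf{Necessity.} Suppose $\cH_\cap$ shatters an arbitrarily large set $T=\{x_1,\dots,x_m\}$. For each $B\subseteq T$ there is a finite intersection $c_B$ with $c_B\cap T=B$.

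\emph{Hard instances.} For a random subset $B$ of $T$ of size $m/2$, choose a concept $\ell_B \in \cH$ appearing in the intersection defining $c_{B'}$ for suitable $B'$. Concretely, for each point $x\notin B$ there is some $h_{B,x}\in\cH$ with $B\subseteq h_{B,x}$ and $x\notin h_{B,x}$.

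\emph{Distribution.} Put mass $1/2$ on a special point $x_0$ lying in all relevant concepts; we can add $x_0$ to the shattered set and use $B\cup\{x_0\}$ with a slight rewrite. Spread the remaining mass uniformly on $T$. Take the target to be $\ell=h_{B,x}$ for a uniformly random $x\in T\setminus B$, together with a random $B$. The positive samples reveal only points of $\ell\cap T$.

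\emph{Counting argument.} This is the delicate step, and it is where I expect the real work of this direction. The learner cannot distinguish which of many unseen points are negative. With $n\ll m$ samples, most of $T$ is unseen. A counting argument over the choice of $B$ then shows that the learner either labels many unseen points of $B$ negative (false negatives) or labels many of $T\setminus B$ positive. To get constant error, I would instead make the targets $c_B$-like by choosing, for each unseen point, an independent concept excluding it, and then apply a Yao-style averaging over $B$.
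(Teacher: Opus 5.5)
Your sufficiency direction is correct. It uses the closure learner that the paper points to (the paper cites this theorem rather than proving it). The infinite-intersection issue is simpler than your directed-union and outer-measure remarks suggest. For any $\mathcal{F}\subseteq\cH$ and finite $F\subseteq\cX$, the trace $(\bigcap\mathcal{F})\cap F$ equals the trace of a finite sub-intersection: keep one member of $\mathcal{F}$ excluding each point of $F\setminus\bigcap\mathcal{F}$. Hence arbitrary intersections have the same VC dimension $d$ as $\cH_\cap$. This holds for every class, not because $d<\infty$. So your $\eps$-net argument applies to $\Cl_\cH(S)$ verbatim.

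The genuine gap is necessity, and it is not merely unfinished: the instance family you set up does not give a lower bound in general. Your distribution does not depend on the target (mass $\frac12$ on an always-positive $x_0$, the rest uniform on $T$). Meanwhile a target $h_{B,x}\in\cH$ is only guaranteed to exclude the single point $x$ of $T$. Points of $T\setminus B$ other than $x$ may be positive under $h_{B,x}$, so labeling them positive is not an error. The ``$c_B$-like'' targets you would like are finite intersections, which are generally not in $\cH$. Concretely, take $\cH=\{\N\setminus\{x\}:x\in\N\}$. It has $\vc(\cH_\cap)=\infty$, yet against your instances the learner that always outputs $\N$ errs only on the point $x$, of mass about $\frac{1}{2m}$. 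No counting or Yao-type averaging can rescue this.

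The missing idea is to let the distribution depend on the target and put constant mass on the target's excluded point, which the learner never observes.
\begin{itemize}
\item \emph{Targets.} If $\cH_\cap$ shatters $T=\{x_1,\dots,x_m\}$, then for each $x\in T$ one of the concepts in the intersection realizing $T\setminus\{x\}$ must exclude $x$. This gives $h_x\in\cH$ with $h_x\cap T=T\setminus\{x\}$.
\item \emph{Distributions.} Set $\cD_x(x)=\frac12$ and spread the other $\frac12$ uniformly on $T\setminus\{x\}$, so $\cD_{x,+}=U_{T\setminus\{x\}}$ for every $x$.
\item \emph{Posterior.} Draw $x$ uniformly from $T$. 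Conditional on an $n$-point sample and the learner's coins, $x$ is uniform over the $u\ge m-n$ unseen points of $T$.
\item \emph{Case analysis.} If the output contains at least $u/2$ unseen points, it contains $x$ with conditional probability at least $\frac12$, costing error $\frac12$. Otherwise it either contains $x$, or omits more than $u/2-1$ positive points of mass $\frac{1}{2(m-1)}$ each.
\item \emph{Conclusion.} For $n\le m/2$ and $m\ge 16$, the error is at least $\frac1{10}$ with probability at least $\frac12$. So some fixed $x$ defeats any learner using $n\le m/2$ samples at $\eps<\frac1{10}$, $\delta<\frac12$. Since $m$ is unbounded, no finite sample complexity exists.
\end{itemize}
This is the same device the paper uses in its own necessity arguments: mass $\cD_x(x)=\frac12$ on the excluded point, with the remaining mass uniform on the forced-positive set. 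It appears in Part 2 of the proof of \cref{thm:main} and Part 1 of \cref{thm:consistency-characterization}.
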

\noindent The learning algorithm is simple: given positive samples $S$, it outputs $\Cl_{\cH}\!\inparen{S}$.
Since this closure lies in $\cH_{\cap}$ but generally not in $\cH$, the learning algorithm is improper.
A proper learner cannot do this, and the conditions below ask, in increasingly weaker forms, how well concepts in $\cH$ can mimic this.

The strongest condition asks that the closure itself is always proper.
\vspace{-1mm}
\begin{definition}[Exact Exterior Separation]\label{def:exact-exterior-separation}\label{def:ees}
    We say that $\cH$ satisfies \emph{exact exterior separation} if for every finite nonempty realizable set $S \subseteq \cX$,
    $\Cl_{\cH}\!\inparen{S} \in \cH .$%
\end{definition}
\noindent Under exact exterior separation, the proper learner can directly output $\Cl_{\cH}\!\inparen{S}$, recovering the improper strategy while being proper.
This condition, however, turns out to be too strong (there are classes which violate this property while still being properly learnable from positive examples).
Hence, we consider the following weaker version.
\begin{definition}[Uniform Exterior Separability]\label{def:uniform-exterior-separability}\label{def:ues}
    We say that $\cH$ satisfies \emph{uniform exterior separability} if for every $\eta>0$ there exists an integer $M\!\inparen{\eta}$ such that for every finite nonempty realizable set $S\subseteq \cX$, there are hypotheses
    $h_{S,1},\dots,h_{S,M\!\inparen{\eta}} \in \cH_S$
    satisfying 
    \[
        \sup\nolimits_{x\notin \Cl_{\cH}\!\inparen{S}}~
        \frac{1}{M\!\inparen{\eta}}
        \abs{\inbrace{i\in\inbrace{1,\dots,M\!\inparen{\eta}}:x\in h_{S,i}}}
        \leq \eta \,.
    \]
\end{definition}
\noindent Here, instead of a single consistent hypothesis avoiding every exterior point, we ask only for a bounded \emph{list} of consistent hypotheses, whose individual members may make false positives, but which on average covers any single exterior point only ``sparingly.''
The key point is that $M\!\inparen{\eta}$ depends only on $\eta$, and not on $S$.
Equivalently: a hypothesis drawn uniformly from the list labels any fixed exterior point as positive with probability at most $\eta$.
Finally, observe that exact exterior separation (\cref{def:ees}) is the special case $M\!\inparen{\eta}=1$ with $h_{S,1}=\Cl_{\cH}\!\inparen{S}$.

The next condition replaces the uniform distribution over a bounded list with an arbitrary distribution, which is a bit more natural for randomized learners. %
\begin{definition}[Distributional Exterior Separability]\label{def:distributional-exterior-separability}\label{def:des}
    We say that $\cH$ satisfies \emph{distributional exterior separability} if for every finite nonempty realizable set $S\subseteq \cX$ and every $\eta>0$, there exists a probability distribution $\mu_{S,\eta}$ supported on $\cH_S$ such that
    \[
        \sup\nolimits_{x\notin \Cl_{\cH}\!\inparen{S}}
        \Pr\nolimits_{h\sim \mu_{S,\eta}}\!\inparen{x\in h}
        \leq \eta \,.
    \]
    \vspace{-5mm}
\end{definition}
\noindent In the next section, we use these conditions to give our main characterization. 
Before that, we establish the following relationships between these conditions to build some intuition; the proof is deferred to \cref{sec:hierarchy}.
\begin{restatable}[Relations between the exterior conditions]{proposition}{exteriorSeparationHierarchy}
\label{prop:hierarchy}
For every concept class $\cH$,
\[
    \begin{aligned}
    \textnormal{exact exterior separation}
    &~~\Longrightarrow~~
    \textnormal{uniform exterior separability}
    \\
    &~~\Longrightarrow~~
    \textnormal{distributional exterior separability}\,.
    \end{aligned}
\]
\noindent Moreover, the first implication is strict.
\end{restatable}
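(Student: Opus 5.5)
The two implications are direct from the definitions; the real work is the strictness claim, which needs an explicit class that is uniformly exterior separable but not exactly exterior separable.

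\emph{Exact $\Rightarrow$ uniform.} Take $M(\eta)=1$ for every $\eta$, and set $h_{S,1}=\Cl_{\cH}(S)$. This lies in $\cH$ by \cref{def:ees}. It contains $S$, so it lies in $\cH_S$. For any $x\notin \Cl_{\cH}(S)$, the fraction of list members containing $x$ is $0\le\eta$.

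\emph{Uniform $\Rightarrow$ distributional.} Given $S$ and $\eta$, let $\mu_{S,\eta}$ be the uniform distribution over the multiset $\{h_{S,1},\dots,h_{S,M(\eta)}\}$. It is supported on $\cH_S$. For each exterior $x$, the probability $\Pr_{h\sim\mu_{S,\eta}}(x\in h)$ is exactly the displayed fraction in \cref{def:ues}, so it is at most $\eta$.

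\emph{Strictness of the first implication.} I need a class in which some closure is not a member of $\cH$, yet every version space contains boundedly many hypotheses whose false positives outside the closure are spread out.

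First try: $\cX=\{1,\dots,N\}\cup\{0\}$ (or an infinite ground set), with $\cH$ consisting of the sets $\{0\}\cup(\cX\setminus\{i\})$ for $i\ge1$. Take $S=\{0\}$. Its closure is $\bigcap_i h_i=\{0\}\notin\cH$, so exact separation fails. However, any single exterior point $x$ is missed by only one hypothesis, so it appears in nearly every hypothesis, and uniform separability fails too. The hypotheses must instead be "small".

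Better candidate: let $\cX=\{0\}\cup\{a_1,\dots,a_k,\dots\}$ split into blocks, with $\cH$ containing the sets $\{0\}\cup\{a_j\}$ for each $j$, together with a few larger sets.

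Cleaner idea: let $\cX=\N$ and $\cH=\{\{0,j\}: j\ge 1\}\cup\{\{j\}:j\ge1\}$.
\begin{itemize}
\item $S=\{0\}$: the closure is $\{0\}\notin\cH$, so exact separation fails. The list $\{0,1\},\dots,\{0,M\}$ with $M=\lceil1/\eta\rceil$ covers each exterior point at most once, giving fraction at most $\eta$.
\item $S=\{0,j\}$: the closure is $S$ itself, which is in $\cH$.
\item $S=\{j\}$: the closure is $\{j\}$, which is in $\cH$.
\end{itemize}
Sets with two positive points other than $0$ are unrealizable, so these are all the cases. For the uniform condition, pad every list to length $M(\eta)$ by repetition. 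A single hypothesis equal to the closure works in the last two cases, so the bound holds uniformly in $S$.

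The main obstacle is only this construction: checking realizability cases carefully so that $M(\eta)$ does not depend on $S$.
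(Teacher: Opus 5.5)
Your proposal is correct and follows the paper's proof. Both implications are argued exactly as in the paper: the closure itself as a one-element list, then the uniform distribution over the list. Your strictness example $\{\{0,j\}: j\ge 1\}\cup\{\{j\}: j\ge 1\}$ uses the same mechanism as the paper's $\cH_{\mathrm{spr}}=\{\{1\},\{2\}\}\cup\{\{1,2,i\}: i\ge 3\}$. In both, an improper closure ($\{0\}$ in yours, $\{1,2\}$ in the paper's) is replaced by $\lceil 1/\eta\rceil$ extensions that each add a distinct fresh point, and your case check over $\{0\}$, $\{j\}$, $\{0,j\}$ is complete.
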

\noindent The two implications follow from the definition of the conditions.
Strictness of the first implication is important to refute a conjecture of \cite{natarajan1987learning} (see \cref{sec:related-works}).  
The second implication may not be strict in general. However, interestingly, using tools from the study of sample compression schemes \cite{moran2016sample}, we are able to prove that distributional exterior separability implies uniform exterior separability for classes $\cH$ with finite VC dimension, and this result turns out to be crucial for our characterization.
The proof of this result appears in \cref{subsec:des-to-ues}.
\begin{restatable}[From distributional to uniform exterior separability]{theorem}{distributionalImpliesUniform}
\label{thm:distributional-implies-uniform}
    Suppose that $\vc{}\!\inparen{\cH}<\infty$.
    If $\cH$ satisfies distributional exterior separability, then $\cH$ satisfies uniform exterior separability.
\end{restatable}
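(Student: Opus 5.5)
The plan is to take the distribution $\mu_{S,\eta/2}$ given by distributional exterior separability and \emph{sparsify} it. We draw a bounded number of i.i.d.\ hypotheses from it and show that the resulting uniform list already achieves the required bound at every exterior point. This is the standard $\eps$-approximation argument for the dual class, the same device \cite{moran2016sample} use to turn a minimax/distributional guarantee into a bounded-size one.

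\textbf{Step 1 (dual class).} Fix a finite nonempty realizable $S$ and $\eta>0$, and put $\mu\coloneqq \mu_{S,\eta/2}$, which is supported on $\cH_S$. Consider the dual class over the ground set $\cH_S$:
\[
    \cH^{\perp}_S\coloneqq\inbrace{\,\inbrace{h\in\cH_S: x\in h} : x\in\cX\setminus\Cl_{\cH}(S)\,}.
\]
Its VC dimension is at most the dual VC dimension of $\cH$, which by Assouad's bound is less than $2^{d+1}$, where $d=\vc(\cH)<\infty$. The crucial point is that this bound $d^*\coloneqq 2^{d+1}$ does not depend on $S$.

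\textbf{Step 2 ($\eps$-approximation).} By the VC uniform convergence theorem (existence of $\eps$-approximations), there is
\[
    M=M(\eta)=O\!\inparen{\frac{d^*+\log(1/\eta)}{\eta^2}}
\]
such that i.i.d.\ $h_1,\dots,h_M\sim\mu$ satisfy, with positive probability,
\[
    \abs{\frac1M\abs{\inbrace{i:x\in h_i}}-\Pr\nolimits_{h\sim\mu}(x\in h)}\le \frac{\eta}{2}
    \qquad\text{for every }x\notin\Cl_{\cH}(S).
\]
Fix such a realization and set $h_{S,i}\coloneqq h_i$; repetitions are allowed in the list. Each $h_i$ lies in $\cH_S$ because $\mu$ is supported there. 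For every exterior $x$ we then have $\Pr_\mu(x\in h)\le\eta/2$, so the empirical fraction is at most $\eta$. Since $M$ depends only on $\eta$ and $d$, this is exactly uniform exterior separability (\cref{def:ues}).

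\textbf{Main obstacle: measurability.} The step I expect to need the most care is applying uniform convergence to an \emph{arbitrary} distribution $\mu$ on $\cH_S$, possibly over an uncountable family with no measurability structure. Only existence of a good sample is needed, not a high-probability statement. So I would first reduce to a finitely supported $\mu$, or else work with the weak $\eps$-approximation theorem stated for probability measures on the relevant finite-pattern $\sigma$-algebra.

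One route to the reduction is to handle each finite set of exterior points separately: project onto the (finite) pattern set of $\cH_S$ on those points and apply the discrete argument there. Then pass to all exterior points by a compactness argument over lists of fixed length $M$, analogous to the compactness step in \cite{moran2016sample}. If compactness is awkward, an alternative is to observe that the bound is needed only pointwise in $x$ and to invoke the Haussler packing / double-sampling proof, which uses only the shatter function of $\cH^{\perp}_S$ on finite samples.
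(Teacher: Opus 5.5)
Your Steps 1 and 2 are the paper's proof. You take the DES witness $\mu$ at level $\eta/2$, view the exterior ranges $\{h\in\cH_S : x\in h\}$ as a subfamily of the dual class restricted to $\cH_S$ (VC dimension at most $d^\star$, independent of $S$), and fix a good realization of an i.i.d.\ sample from $\mu$. The only difference is the approximation lemma. You use an additive $\eta/2$-approximation, which gives $M(\eta)=O(d^\star/\eta^2)$; your extra $\log(1/\eta)$ is unnecessary, since confidence $1/2$ suffices for existence. The paper uses a relative (second-order) approximation that only has to control ranges of $\mu$-mass at most $\eta/2$, which gives $M(\eta)=O(d^\star\log(1/\eta)/\eta)$. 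Either works for the qualitative statement, and the random-bit count only involves $\log M$, so nothing downstream changes beyond constants.

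What does not hold up is your last paragraph. You are right that Step 2 needs uniform convergence for $\mu$, and the paper invokes its approximation corollary for ``every probability distribution'' without comment. But this is not a technicality that a reduction can remove: under the literal reading of DES (arbitrary probability measures), the implication is false. Here is a counterexample.
Let $\cX=\{s\}\cup\{x_\alpha:\alpha<\omega_1\}$ and $\cH=\{h_\beta:\beta<\omega_1\}$ with $h_\beta=\{s\}\cup\{x_\alpha:\alpha\ge\beta\}$.
\begin{itemize}
\item $\cH$ is a chain, so $\vc(\cH)=1$.
\item If a finite realizable $S'$ contains some $x_\alpha$, then $\Cl_{\cH}(S')=h_{\alpha_0}\in\cH$ for the least such index $\alpha_0$, so a point mass witnesses DES.
\item The only other case is $S=\{s\}$, where $\Cl_{\cH}(S)=\{s\}$. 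The countable--cocountable measure $\mu$ on $\cH\cong\omega_1$ gives $\Pr_{h\sim\mu}(x_\alpha\in h)=\mu(\{h_\beta:\beta\le\alpha\})=0$ for every exterior point, so DES holds.
\item Yet for any list $h_{\beta_1},\dots,h_{\beta_M}$, the exterior point $x_{\alpha^*}$ with $\alpha^*=\max_i\beta_i$ lies in every member. So UES fails for every $M$.
\end{itemize}

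This example defeats each of your repairs.
\begin{itemize}
\item \emph{Reduction to finite support.} The atoms of any countably supported $\mu'$ on $\cH_S$ have indices bounded by some $\alpha^*<\omega_1$, so $x_{\alpha^*}$ has marginal $1$. No such reduction exists.
\item \emph{Finite projections plus compactness.} The projections succeed trivially: any finite set of exterior points is avoided by a single $h_\beta$, so $M=1$ works on each finite set. The compactness step then fails because $\cH_S$ is not closed in $\{0,1\}^{\cX}$: these lists converge pointwise to the improper set $\{s\}$.
\item \emph{Double sampling.} This fails at the Fubini step. For $M=1$ the symmetrized bad event is $\{(\beta,\gamma):\beta<\gamma\}$. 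Its iterated integrals under the countable--cocountable measure are $1$ and $0$, so it is not product-measurable.
\end{itemize}
The fix is to restrict the hypothesis rather than strengthen the argument. One option is to require the DES witnesses to be countably supported; then the sample space is countable, every event is measurable, and Step 2 is rigorous. The other is to impose a measurability condition on $\cH$ under which VC uniform convergence holds. Under either reading your Steps 1--2 are a complete proof, identical in substance to the paper's.
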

\vspace{-7mm}

\section{Our Results}
    \label{sec:results}
    In this section, we state our main results. 
    Our first result settles the question posed in \cref{sec:introduction}.
    \begin{restatable}[Main characterization]{theorem}{RandChar} \label{thm:random-characterization}\label{thm:main}
    The following are equivalent:
        \begin{enumerate}[itemsep=-2pt,leftmargin=15pt,topsep=0em]
            \item $\cH$ is properly learnable from positive-only samples;
            \item $\cH$ satisfies uniform exterior-separability (\cref{def:ues}) and $\vc{}\!\inparen{\cH}<\infty$.
        \end{enumerate}
    \end{restatable}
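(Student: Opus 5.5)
We prove the two directions separately.

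\textbf{From uniform exterior separability and finite VC dimension to a learner.} Given a sample $S$ of $n$ positive points, the learner draws $h$ uniformly from the list $h_{S,1},\dots,h_{S,M(\eta)}$ supplied by \cref{def:ues}, with $\eta=\eps/2$. If $S$ is empty, it may first add an arbitrary point of the sample, or treat that case separately. Every list member lies in $\cH_S$, so the learner is proper and has no false negatives on $S$.

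We split the error $\cD(h\triangle\ell)$ into two parts:
\begin{itemize}
\item[(i)] the mass of $\ell\setminus h$;
\item[(ii)] the mass of $h\setminus\ell$, which is contained in $h\setminus \Cl_{\cH}(S)$.
\end{itemize}

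For (ii), take the expectation over the internal randomness and use Fubini. For any fixed $S$,
\[
\mathbb{E}_h\,\cD(h\setminus\Cl_{\cH}(S))=\int_{x\notin \Cl_{\cH}(S)}\Pr_h(x\in h)\,d\cD(x)\le\eta.
\]
Markov's inequality then gives a constant-probability bound. To boost confidence we cannot validate using negatives, so we instead pick $\eta=\eps\delta/2$; this is allowed because $M$ only needs to exist.

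For (i), we use the fact that $h\in\cH_S$ is consistent with $S$, and $\cH$ has finite VC dimension. By the standard uniform-convergence/$\eps$-net argument applied to the class $\{\ell\setminus h: h\in\cH\}$ (which has VC dimension $O(d)$) under $\cD_+$, with probability $1-\delta/2$ every $h\in\cH_S$ satisfies $\cD_+(\ell\setminus h)\le\eps/2$. Hence $\cD(\ell\setminus h)\le\eps/2$. The sample complexity is $\tilde O((d+\log(1/\delta))/\eps)$, independent of $M$.

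\textbf{From a learner to the two conditions.} Necessity of $\vc(\cH)<\infty$ follows by taking $\cD$ uniform over a shattered set of size $2n/\eps$ and using the standard no-free-lunch argument. A random target labels about half the points, and the positive samples reveal few unseen points, which forces constant error. One can also argue via a reduction: a positive-only proper learner yields a PAC learner, since positives can be fed to it and negatives discarded.

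The main work is necessity of uniform exterior separability. By \cref{thm:distributional-implies-uniform}, it suffices to show distributional exterior separability. Fix a realizable $S$ and $\eta$, and a concept $\ell\in\cH_S$. Let $\cD$ place most of its mass uniformly on $S$ and a small mass on an arbitrary exterior point $x\notin\Cl_{\cH}(S)$. This is the main obstacle: the target must exclude $x$ while being consistent with $S$. Since $x\notin\Cl_{\cH}(S)$, there is some $\ell_x\in\cH_S$ with $x\notin\ell_x$. The difficulty is that the distribution over samples depends on $\ell_x$, whereas we need a single $\mu_{S,\eta}$ valid for all $x$.

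To handle this, choose $\cD_+$ uniform on $S$ for every target in $\cH_S$, so that the sample distribution does not depend on which $\ell_x$ is the target. Take $\cD=(1-\eps')\,\Unif(S)+\eps'\delta_x$ with target $\ell_x$. Then $\cD_+=\Unif(S)$ and the sample is $n$ i.i.d.\ draws from $S$. With high probability all of $S$ appears, once $n\gtrsim|S|\log|S|$. If $|S|$ is large, we replace $S$ by a finite subset with the same closure via compactness; alternatively, we simply condition on the sample multiset equalling a specific multiset $T$ with $\mathrm{set}(T)=S$, which has positive probability.

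Let $\mu_{S,\eta}$ be the law of $\cA(T)$ conditioned on $T$ ranging over such full samples, restricted to outputs in $\cH_S$. Learning with accuracy $\eps'\eta$ and confidence $\eta/2$ forces $\Pr(x\in\cA(T))\le\eta$ for every exterior $x$ simultaneously, since the law of $T$ is the same for every $x$. Outputs that miss a point of $S$ cost mass $(1-\eps')/|S|$, which exceeds the accuracy threshold, so they occur with probability at most the confidence parameter. We then drop them and renormalize. This gives distributional exterior separability, and \cref{thm:distributional-implies-uniform} completes the proof.
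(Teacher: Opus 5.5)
Your proposal is correct and follows essentially the same route as the paper. For sufficiency you randomize over the exterior-separability witnesses, bounding false positives by Fubini plus Markov and false negatives by the realizable VC bound under $\cD_+$. For necessity you use instances $\cD_x$ supported on $S\cup\{x\}$, so that $\cD_+=U_S$ for every exterior $x$; you read off a DES witness from the learner's output law and upgrade it to UES via \cref{thm:distributional-implies-uniform}.

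The remaining differences are cosmetic. The paper does not condition on the sample covering $S$, and it moves the (at most $\delta$) mass outside $\cH_S$ onto a fixed $h_S$ instead of renormalizing. It also takes $\eta=e^{-|S_+|}$, so its learner need not know $\eps,\delta$. You should, however, drop your aside about conditioning on one specific multiset $T$. The output law on a single sample need not be a DES witness: for a deterministic learner it is a point mass, which is useless whenever $\Cl_{\cH}(S)\notin\cH$. Only conditioning on a high-probability event such as ``all of $S$ appears'' is valid.
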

    \noindent Thus, uniform exterior-separability plays the role for proper positive-only learning that finite VC dimension plays for ordinary PAC learning.
    This characterization is strictly stronger than that of improper positive-only learning (\cref{thm:separation-improper}; see \cref{sec:additional-results:separation-improper} for a proof).
    \highlight{Thus, unlike ordinary PAC learning, where requiring the learner to be proper does not change which classes are learnable, characterizations of proper and improper learning differ in the positive-only model, \eg{}, \citep{shalev2014understanding}.}
    The proof of \cref{thm:random-characterization} appears in \cref{sec:random-characterization}.
    
    The learner in \cref{thm:main} is randomized, raising the question:  \textit{how much randomness is needed for learning?}
    Randomness is a fundamental algorithmic resource (alongside time and memory) and is widely studied in theoretical computer science \citep{arora2009computational}.
    It has also been extensively studied in learning theory both as a resource to minimize \citep{hopkins2025the,larsen2024derandomizing,chase2023stability,filmus2023optimal} and through models in which the learner or environment must be deterministic \citep{maass1991online,oliveira2018pseudo,sitharam1997derandomized}.

    Our next result shows that a very small number of random bits suffice for learning.
    To state our result, we need to recall the definition of the dual concept class $\hyH^\star$: for a binary class $\hyH \subseteq \zo{}^{\cX}$, its dual class is $\hyH^\star \coloneqq \{h \mapsto h(x) : x \in \cX\} \subseteq \zo{}^{\hyH}.$
    (Here, each point $x \in \cX$ defines a binary function on hypotheses by evaluating them at $x$.)
    The dual class $\hyH^*$ is useful because $\VC(\hyH^*)$ controls how many sample points are needed to distinguish hypotheses in $\hyH$; this parameter appears throughout learning theory, perhaps most famously in bounds on the size of sample-compression schemes \citep{moran2016sample}.
    \begin{restatable}[Few random bits suffice]{theorem}{FewBits}
    \label{thm:few-random-bits-suffice}
    Consider a concept class $\cH$ and its dual $\cH^\star$.
    Let $d=\vc{}\!\inparen{\cH}$ and $d^\star=\vc{}\!\inparen{\cH^\star}$.
    The following are equivalent:
    \begin{enumerate}[itemsep=-1pt,leftmargin=15pt,topsep=0pt]
        \item $\cH$ is properly PAC learnable from positive-only examples by a randomized learner.
    
        \item $\cH$ is properly PAC learnable from positive-only examples by a learner using at most
        \vspace{-1mm}
        \[
            r\inparen{\eps,\delta}
            =
            \wt{O}\!\inparen{
                \log\inparen{\nfrac{d^\star}{\eps\delta}}
            }~\text{random bits}
            \quadand
            m\inparen{\eps,\delta}
            =
            \wt{O}\!\inparen{
                (\nfrac{1}{\eps})\, \inparen{d+\log\nfrac1\delta}
            }
            \text{~positive samples}\,.
        \]
        \vspace{-7mm}
    \end{enumerate}
    \noindent In particular, by Assouad's bound \citep{Assouad1983},
    $d^\star<2^{d+1}$, so 
    $
        r\inparen{\eps,\delta} = \wt{O}\!\inparen{ d + \log\inparen{\nfrac{1}{\eps\delta} }}.
    $   
    \end{restatable}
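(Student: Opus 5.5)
The direction (2)$\Rightarrow$(1) is immediate, since a learner using few random bits is in particular a randomized proper learner. For (1)$\Rightarrow$(2), I will first invoke \cref{thm:main}: a randomized proper learner exists, so $\vc(\cH)=d<\infty$ and $\cH$ satisfies uniform exterior separability. The remaining work is to build, from these two facts, a learner whose list size $M$ depends only on $d^\star$, $\eps$ and $\delta$. Drawing a uniform index from such a list costs $\lceil\log_2 M\rceil$ random bits.

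\textbf{The learner.} Given $m$ positive samples $S$, output $h_{S,i}$ for a uniformly random index $i$. Fix the target $\ell$. Every $h_{S,i}\in\cH_S$ contains $\Cl_\cH(S)$, so the expected error splits as
\[
\mathbb{E}_i\,\cD(h_{S,i}\triangle\ell)\;\le\;\cD\big(\ell\setminus\Cl_\cH(S)\big)+\mathbb{E}_i\,\cD\big(h_{S,i}\setminus\Cl_\cH(S)\big).
\]
\emph{First term (false negatives).} I bound $\cD(\ell\setminus \Cl_\cH(S))$ by $\eps/4$ with probability $1-\delta/2$ using $\wt O((d+\log(1/\delta))/\eps)$ samples. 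This should follow from a sample compression / one-inclusion style argument for the closure with respect to $\cD_+$. Concretely, I would show that some $h\in\cH_S$ minimizing $\cD_+$-mass has small $\cD_+$-mass outside $\Cl_\cH(S)$, for instance via a consistent-learner bound for intersections restricted to the version space. I am not sure this is exactly right, and the paper's proof of \cref{thm:main} presumably contains this sample bound, which I would reuse.

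\emph{Second term (false positives).} By Fubini, $\mathbb{E}_i\,\cD(h_{S,i}\setminus\Cl_\cH(S))\le \eta$ if the list covers each exterior point with frequency at most $\eta$. Taking $\eta=\eps\delta/4$ and applying Markov over $i$ makes the error at most $\eps/2$ except with probability $\delta/2$.

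\textbf{Main obstacle: bounding $M(\eta)$ by $\mathrm{poly}(d^\star/\eta)$.} This is the quantitative form of \cref{thm:distributional-implies-uniform}. Start from the distribution $\mu_{S,\eta/2}$ guaranteed by distributional exterior separability, and sparsify it as follows. Consider the dual class restricted to exterior points: the functions $h\mapsto \mathbf 1[x\in h]$ for $x\notin\Cl_\cH(S)$, a class on the domain $\cH_S$ of VC dimension at most $d^\star$. By the $\eps$-approximation theorem applied to $\mu_{S,\eta/2}$, a sample of $M=O\!\big((d^\star\log(d^\star/\eta)+\log(1/\eta))/\eta^2\big)$ hypotheses from $\mu$ approximates every such function's mean within $\eta/2$ simultaneously. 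A single good sample therefore exists, and it gives the list. Hence $\log M=\wt O(\log(d^\star/\eps\delta))$ random bits suffice. An $\eta$-net/relative approximation argument would improve the dependence, but only logarithmically, which does not matter here.

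The technical subtleties are measurability and infinite $\cH_S$. I would handle these by working with finite-support approximations (VC uniform convergence holds for any probability measure for which the events are measurable) or by invoking the minimax/compression argument of \cite{moran2016sample} as in \cref{subsec:des-to-ues}. Finally, Assouad's bound $d^\star<2^{d+1}$ yields the stated $\wt O(d+\log(1/\eps\delta))$.
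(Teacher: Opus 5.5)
Your overall structure matches the paper's: the trivial direction, invoking \cref{thm:main}, sparsifying the DES distribution on $\cH_S$ with an approximation argument for the dual class restricted to exterior points, outputting a random element of the resulting list, and handling false positives by Fubini plus Markov. Your additive version of the approximation, with $\eta^{-2}$ in place of the paper's relative bound $O(d^\star\log(1/\eta)/\eta)$, only changes $\log M$ by a constant factor.

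The gap is in the false-negative term. Your decomposition replaces $\cD(\ell\setminus h_{S,i})$ by $\cD(\ell\setminus\Cl_\cH(S))$, and you leave the latter unproven. It is not a generic VC fact, for three reasons:
\begin{itemize}
\item $\Cl_\cH(S)$ is typically not in $\cH$, so the realizable bound for $\cH$, which is where the $d$ in $m(\eps,\delta)$ comes from, does not apply to it.
\item A consistent-learner bound for the class containing the closure would be governed by $\vc(\cH_{\cap})$ rather than $d$.
\item The paper's proof of \cref{thm:main} contains no bound on the closure. It bounds $\cD_+(\ell\setminus h)$ uniformly over $h\in\cH$ with $\Domain(\sampleP)\subseteq h$.
\end{itemize}
Without UES, the closure's false-negative mass can be $1$ even at VC dimension $1$. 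For the co-singletons $\{[0,1]\setminus\{x\} : x\in[0,1]\}$ under the uniform distribution, $\Cl_\cH(S)=S$ for every finite $S$.

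The fix is to not route false negatives through the closure. Write
\[
\cD(h_{S,i}\triangle\ell)=\cD(\ell\setminus h_{S,i})+\cD(h_{S,i}\setminus\ell).
\]
Bound the second term by $\cD(h_{S,i}\setminus\Cl_\cH(S))$, as you already do. For the first term, use that $h_{S,i}\in\cH_S\subseteq\cH$ is consistent with the sample. With $O((d\log(1/\eps)+\log(1/\delta))/\eps)$ samples, with probability $1-\delta/2$, every consistent $h\in\cH$ satisfies $\cD(\ell\setminus h)\le\cD_+(\ell\setminus h)\le\eps/2$ simultaneously.

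Alternatively, UES itself rescues your decomposition. Every $x\in\ell\setminus\Cl_\cH(S)$ is exterior, so it is missed by at least a $(1-\eta)$ fraction of the list. Hence $\cD_+(\ell\setminus\Cl_\cH(S))\le(1-\eta)^{-1}\max_i\cD_+(\ell\setminus h_{S,i})$, which reduces to the same bound.

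A smaller point: $\lceil\log_2 M\rceil$ fair bits cannot produce an exactly uniform index unless $M$ is a power of two. You can take $M$ to be a power of two in the sparsification step, since any larger sample size works. Or, as the paper does, draw a seed uniform on $\{1,\dots,K\}$ for a power of two $K\in[M,2M)$ and reduce it mod $M$; this at most doubles each exterior point's frequency.
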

    \vspace{-5mm}

    \noindent For classes with VC dimension $O(1)$, the number of random bits is only $\wt{O}\!\inparen{\log\nfrac{1}{\eps\delta}}$.
    This is much smaller than the number of samples: even in ordinary PAC learning with positive and negative examples, the sample complexity is $\Omega(\nfrac{1}{\eps})$, which is exponentially larger.
    For additional details and proof of \cref{thm:few-random-bits-suffice} we refer the reader to \cref{sec:few-random-bits-suffice}.
    Given that so few random bits suffice, one might hope that randomness can be removed entirely.
    Our next result shows that this is not possible: some classes are properly learnable by a randomized \mbox{learner but by no deterministic learner.}

    \begin{restatable}[Randomness is necessary]{theorem}{RandNecessary}
    \label{thm:randomness-necessary}
        There exists a concept class $\cH$ that is properly positive-only learnable by a randomized learner, but not by any deterministic learner.
    \end{restatable}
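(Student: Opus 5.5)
The plan is to build an explicit class with $\vc{}\!\inparen{\cH}=1$ that satisfies uniform exterior separability, so that \cref{thm:main} gives a randomized proper learner, and then to defeat every deterministic proper learner directly. Take $\cX=\inbrace{z}\cup\inbrace{a_1,a_2,\dots}$ and $\cH=\inbrace{h_j\coloneqq\inbrace{z,a_j}: j\in\N}$. The idea is that a sample consisting only of copies of $z$ leaves the version space equal to all of $\cH$. Its closure is then just $\inbrace{z}$, and any single hypothesis must commit to one $a_j$. A random choice among many indices spreads the false-positive mass thinly; a deterministic choice can be anticipated by the adversary.

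\textbf{Step 1 (finite VC dimension).} I will check that $\vc{}\!\inparen{\cH}=1$. Any two distinct $a_i,a_j$ cannot both be labeled positive by one $h_k$. A pair $\inbrace{z,a_i}$ cannot be labeled $(1,0)$ since $z$ lies in every hypothesis. Hence no $2$-set is shattered.

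\textbf{Step 2 (uniform exterior separability).} I will do a case analysis on the finite nonempty realizable sets $S$.
\begin{itemize}
    \item If $S$ contains some $a_j$, then $\cH_S=\inbrace{h_j}$ and $\Cl_{\cH}\!\inparen{S}=h_j\in\cH$. The list consisting of copies of $h_j$ covers no exterior point.
    \item If $S=\inbrace{z}$, then $\cH_S=\cH$ and $\Cl_{\cH}\!\inparen{S}=\inbrace{z}$. Set $M\!\inparen{\eta}=\lceil 1/\eta\rceil$ and take the list $h_1,\dots,h_{M(\eta)}$. Each exterior point $a_i$ lies in at most one list member, so its covered fraction is at most $1/M\!\inparen{\eta}\le\eta$.
\end{itemize}
By \cref{thm:main}, together with Step 1, $\cH$ is properly learnable by a randomized learner.

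\textbf{Step 3 (no deterministic learner).} I will argue by contradiction. Suppose $\cA$ is a deterministic proper learner with sample complexity $m_{\cH}$. Fix $\eps=\delta=1/4$ and $n=m_{\cH}(1/4,1/4)$. On the input consisting of $n$ copies of $z$, the output $\cA$ is some fixed $h_{j_0}$. Choose any $j\neq j_0$ as the target and, for a small $\gamma>0$, define
\[
    \cD(z)=\tfrac12,\qquad \cD(a_{j_0})=\tfrac12-\gamma,\qquad \cD(a_j)=\gamma .
\]
Then $\cD_+$ puts mass $\tfrac{1/2}{1/2+\gamma}$ on $z$. Taking $\gamma$ small enough, depending on $n$ (which is allowed because $n$ is fixed before the adversary picks $\cD$), makes the all-$z$ sample occur with probability greater than $1/4$. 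On that event the output is $h_{j_0}$, whose error is
\[
    \cD\!\inparen{h_{j_0}\triangle h_j}=\cD(a_{j_0})+\cD(a_j)=\tfrac12>\tfrac14 .
\]
This violates the guarantee with probability greater than $\delta$, a contradiction.

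The only delicate point I expect is this order of quantifiers in Step 3. The sample size must depend only on $(\eps,\delta)$, so the adversary is free to choose $\gamma$ after $n$, and determinism pins down $j_0$ before the target is chosen. Everything else is a direct verification.
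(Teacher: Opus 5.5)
Your proof is correct and follows the paper's route. You use the same class (with $z=1$), establish VC dimension $1$ and uniform exterior separability via a list of extensions with disjoint exterior points, and invoke \cref{thm:main}; the deterministic lower bound is a direct instance of the paper's singleton-closure obstruction (\cref{prop:sc-necessary}, Case 2). The perturbation $\gamma$ is unnecessary: with $\gamma=0$ the positive distribution is the point mass at $z$, the all-$z$ sample occurs with probability $1$, and $\cD(h_j)=\tfrac12>0$ still meets the nondegeneracy requirement, which is exactly the paper's argument.
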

    \noindent Thus, the power of randomized and deterministic learners is different in the proper positive-only learning model.
    \highlight{This is another contrast with ordinary PAC learning where characterizations of deterministic and randomized proper learning are equivalent, \eg{}, \citep{BlumerEhrenfeuchtHausslerWarmuth1989,ehrenfeucht1989general}.}  
    The proof of \cref{thm:randomness-necessary} appears in \cref{sec:randomness-necessary}. 

    The preceding results already give two separations between positive-only learning and ordinary PAC learning: proper and improper learning differ (\cref{thm:main}; also see \cref{thm:separation-improper}), and randomized and deterministic proper learning differ (\cref{thm:randomness-necessary}).
    We next state three further separations.

    \paragraph{Separation III: ERM is not a universal learner for positive-only learning.}
    In the realizable PAC model, if a class $\hyH$ is learnable, then it is learnable by \textit{every} empirical risk minimization (ERM) rule.
    An ERM rule is any rule that outputs any hypothesis from $\cH$ consistent with the observed samples. 
    Specifically, in the positive-only model, a learning rule is ERM if it outputs a hypothesis that contains all the observed positive examples. %
    Our next result shows that such rules are not universal learners for deterministic proper positive-only learning.
    Its proof appears in \cref{sec:ERM-separation}.
    \begin{restatable}[ERM is not a universal deterministic learner]{theorem}{SepERM} \label{thm:ERM-separation}
        There exists a concept class $\cH$ that is properly positive-only learnable by deterministic learners, but no deterministic proper ERM learner learns $\cH$ from positive-only examples.
        
    \end{restatable}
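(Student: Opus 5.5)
The plan is to construct an explicit class $\cH$ with three properties. It should have finite VC dimension. It should admit a deterministic proper learner that is not ERM-like in one specific respect: the learner may output a hypothesis that does not contain every observed sample point. Finally, every rule that insists on containing all samples should be forced into large false-positive mass by an adversarial choice of $\cD$.

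The guiding intuition comes from \cref{thm:main}. Failure of a deterministic ERM means that for some realizable $S$, every $h \in \cH_S$ picked deterministically can be punished by placing negative mass on $h \setminus \Cl_{\cH}(S)$. This is precisely the obstruction that uniform exterior separability addresses with lists. A deterministic non-ERM learner has another option: it can "drop" a few low-weight sample points and output a hypothesis that is small outside the true concept, at the price of a small false-negative error.

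Concretely, I would build $\cH$ from a small "core" concept $c$ together with many "decorated" concepts. Fix a domain containing an infinite set $Z$ of exterior points and a special point (or set) $p$. The concepts $h_z = c \cup \{p\} \cup (Z \setminus \{z\})$ for $z \in Z$ all contain $p$ and pairwise disagree on $Z$. Add the concept $c$ itself, which does not contain $p$.

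The ERM lower bound then goes as follows. The adversary takes $\ell = h_{z^\ast}$ and puts mass $\eps_0$ on $p$, the rest of the positive mass on $c$, and negative mass on $z^\ast$. Once $p$ appears in the sample, which happens with constant probability for moderate $n$, any ERM must output some $h_z$. A deterministic rule maps that sample to a fixed $z$, and the adversary chooses $z^\ast \neq z$ with mass on $z^\ast$. To make the error large, the construction should be refined so that the negative region of the target is big. Replace single points by a structure where the ERM's choice $h_z$ covers a large portion of $\ell$'s complement, for example by taking $h_z = c\cup\{p\}\cup A_z$ with sets $A_z$ having pairwise small intersections. A pigeonhole/diagonal argument over the deterministic map (samples $\to$ hypotheses) then finds a target with large false-positive error. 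I would keep the VC dimension bounded by arranging that the $A_z$ are disjoint, giving VC dimension $2$.

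For the positive side, the deterministic learner is: if the sample lies inside $c$, output $c$; otherwise output $c$ as well, ignoring $p$. Its error is at most $\cD(\ell \setminus c)$, which is $\cD(p)$ plus $\cD(A_z)$. To control this I would instead make $A_z \subseteq$ complement only in the negative sense: let the concepts be $c\cup\{p_z\}\cup A_z$, where seeing $p_z$ reveals $z$. The learner then outputs $c$ when the sample mass suggests $p_z$ is rare, and outputs $c \cup\{p_z\}\cup A_z$ (which is then exactly the target) when $p_z$ has been seen at least about $\eps n$ times. A Chernoff bound gives error $\le\eps$.

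The ERM must output a hypothesis containing $p_z$ as soon as $p_z$ appears once. To make that costly, I would let several different concepts contain $p_z$ (indexed by $p_z$ together with a distinct $A$), so that seeing $p_z$ only rarely leaves the ERM unable to identify which $A$ is correct. The main obstacle is this balancing act: the hypotheses sharing $p_z$ must be indistinguishable from rare samples (so ERM fails) yet identifiable once $p_z$ is frequent. I expect to resolve it by attaching to each $p_z$ an infinite family of tags $q_{z,j}$ that are also positive, with $A_{z,j}$ disjoint. The learner waits for both $p_z$ and a tag to appear often before committing, while the ERM is forced to commit the first time $p_z$ shows up. I would double-check that this keeps the VC dimension finite.
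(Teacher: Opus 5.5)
There is a genuine gap. It sits on the learnability side, not the ERM side.

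\textbf{Your classes admit no deterministic learner at all.} The only advantage your deterministic learner has over an ERM is dropping low-weight sample points. Nothing stops the adversary from making the forced points heavy. In each class you propose, the tags $q_{z,j}$ and sets $A_{z,j}$ are disjoint across $j$. Hence $\Cl_{\cH}(\{p_z\}) = c\cup\{p_z\}$, and this set is not a concept. (The same holds for $p$ in your intermediate class with disjoint $A_z$.)

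Now put all positive mass on $p_z$; this is allowed, since the tags may carry no positive mass. The sample is then the constant multiset of $n$ copies of $p_z$, so a deterministic learner outputs one fixed $h$. If $p_z\notin h$, the error is $1$ when $\cD$ is the point mass at $p_z$. If $h=h_{z,j_0}$, take the target $h_{z,j_1}$ with $\cD$ uniform on $\{p_z,q_{z,j_0}\}$; this induces the same sample and gives error at least $1/2$.

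This is exactly the singleton-closure obstruction of \cref{prop:sc-necessary}: your classes are not learnable by any deterministic proper learner, ERM or not. Your rule ``wait until $p_z$ and a tag are both frequent'' has no good move when $p_z$ is frequent and no tag ever appears. In that regime it is in precisely the ERM's position.

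\textbf{The missing idea is that a deterministic learner can extract randomness from the sample itself.} The paper uses $\cH_{\mathrm{spr}}=\{\{1\},\{2\}\}\cup\{\{1,2,i\}:i\ge 3\}$. This class satisfies singleton closure, so point-mass distributions are harmless. Its only improper closure, $\{1,2\}$, needs two distinct points.

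When $\cD_+(1),\cD_+(2)\gtrsim 1/\sqrt n$, the anti-concentration bound of \cref{lem:mode-n} (from Ushakov's Kolmogorov--Rogozin-type inequality) applies. It shows that no single multiset has probability more than $\eps\delta/8$ once $n$ is large. So outputting $\{1,2,o(S_+)\}$, where $o(S_+)$ is the number of $1$s, spreads false positives just as a random list would. When one of the two points has mass $\lesssim 1/\sqrt n$, the learner drops it and outputs the other singleton, at cost $O(1/\sqrt n)$.

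The ERM lower bound lives in the gap between these two regimes. With $\cD_+(1)=1/n$, the multiset with one $1$ and $n-1$ copies of $2$ has probability at least $1/4$. A deterministic ERM must map it to some fixed $\{1,2,a^\star\}$. The target $\{1,2,a^\star+1\}$ with $\cD(a^\star)=1/2$ then forces error $1/2$.

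Without this anti-concentration step, or some other way to handle the regime where the forced points are heavy, your plan does not yield a deterministic learner.
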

    \vspace{-6mm}

    \paragraph{Separation IV: Finite VC dimension does not imply non-uniform learnability.}
        So far, we have focused on ``uniform'' guarantees on the sample complexity, where the sample complexity is the same for all target concepts.
        Non-uniform PAC learning relaxes this and allows the required sample size to depend on the target concept.
        Consistency further relaxes this: it only requires the error to converge to 0 for each fixed distribution-target pair as the number of samples goes to $\infty$ without requiring any specific \textit{rate} for this convergence.
        (We refer the reader to \cref{sec:additional-preliminaries} for formal definitions.)
        Both non-uniform and consistency guarantees are widely-studied in learning theory \citep{stone1977consistent,benedek1988nonuniform, ben1993learning, bousquet2021theory,lu2024inductive} and below we study these notions for proper positive-only learning.
        We show that these notions remain distinct for positive-only learning, even for classes of finite VC dimension; see \cref{sec:nonuniform-separation} for the proof.
        \begin{restatable}[Separation of learnability, non-uniform learnability, and consistency]{theorem}{SepNonUnif} \label{thm:nonuniform-separation}
            There exist concept classes $\cH_1$ and $\cH_2$, each with VC dimension $1$, such that:
            \begin{enumerate}[itemsep=-1pt,leftmargin=15pt,topsep=0pt]
                \item $\cH_1$ is properly positive-only \ul{consistent}, but not \ul{non-uniformly} properly positive-only learnable.
                \item $\cH_2$ is \ul{non-uniformly} properly positive-only learnable, but not properly positive-only learnable.
            \end{enumerate}
        \end{restatable}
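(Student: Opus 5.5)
The plan is to build both classes from disjoint \emph{blocks} on a countable domain. Each block is a small ``star-like'' VC-dimension-$1$ gadget, and blocks never share points, so any union of blocks still has VC dimension $1$. Non-uniform guarantees will come from the fact that a single positive example identifies which block the target lives in. Lower bounds will come from \cref{thm:main}: a learner at a fixed sample size would induce, on the sample consisting only of a forced point, a distribution $\mu$ on consistent hypotheses. This is exactly the object in \cref{def:des}. By \cref{thm:distributional-implies-uniform} and \cref{thm:main}, uniform properness forces this distribution to cover every exterior point with probability at most $\eta(\eps)$.

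For $\cH_2$, I would let block $k$ be a class $\cH^{(k)}$ on a finite set $X_k$ with $\vc(\cH^{(k)})\le 1$. I want $\cH^{(k)}$ to satisfy uniform exterior separability with some function $M_k(\eta)$, so that it is properly learnable by \cref{thm:main}. At the same time, for one fixed $\eta_0$, the best achievable coverage at $\eta_0$ should deteriorate with $k$. For instance, take a hub $c_k$ whose closure $\Cl(\{c_k\})$ is not in the class, and which can only be ``escaped'' through a structured family of leaves. The family should be such that every distribution over consistent hypotheses covers some exterior point with probability at least $\eta_k$, where $\eta_k\to$ a constant rather than $0$ as $\eta$ shrinks only beyond a $k$-dependent scale. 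In $\cH_2=\bigcup_k \cH^{(k)}$ the closure is computed blockwise. Hence for small $\eta$ no single $M(\eta)$ works across all blocks, uniform exterior separability fails, and $\cH_2$ is not properly learnable by \cref{thm:main}. For non-uniform learnability, the learner reads off $k$ from any sample point and runs the block-$k$ proper learner from \cref{thm:main}, whose sample complexity $m_k(\eps,\delta)$ depends only on the target's block.

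For $\cH_1$, I would use a gadget in which the hub's exterior is infinite and must be escaped ``slowly''. A consistent learner can spread its random choice over the first $n$ escape hypotheses, so for each fixed $(\cD,\ell)$ the expected false-positive mass is at most $\frac1n\sum_{j\le n}\cD(x_j)\to 0$. Missed forced points have vanishing mass as $n\to\infty$, so the learner is consistent. To defeat non-uniform learning, I want a single fixed target $\ell$ and, for each $n$, a distribution $\cD_n$ depending on $n$ with all of its positive mass on the hub. On such a distribution the learner sees the same sample $c^n$ for every $n$, and the adversary places the negative mass where the learner's output law at size $n$ concentrates.

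Plain stars do not suffice for either class, and this is the main obstacle. With $\{c,x_i\}$ over infinitely many leaves, the uniform distribution over many leaves already gives distributional exterior separability, so the class is learnable. So the gadget must force every consistent escape hypothesis to contain points that other escapes cannot avoid, for example leaves arranged along a chain or a tree so that the hypotheses overlap, while keeping VC dimension $1$. I would first try nested chains of leaves attached to the hub, checking shattering of pairs by hand. I would then compute the optimal coverage $\min_\mu\sup_x \mu(x\in h)$ as a finite minimax or LP per block, to obtain the $k$-dependence needed for $\cH_2$ and the $n$-dependent adversary needed for $\cH_1$.
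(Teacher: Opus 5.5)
\textbf{Gap 1: the block construction for $\cH_2$ cannot work.} The obstruction is a result you cite yourself. Suppose every block $\cH^{(k)}$ satisfies uniform exterior separability. Then it satisfies distributional exterior separability. For a nonempty realizable $S$, the version space of $\bigcup_k \cH^{(k)}$ is just $\cH^{(k)}_S$ for the unique block containing $S$, and points outside $X_k$ receive zero coverage. So the union also satisfies distributional exterior separability. The union has finite VC dimension, so \cref{thm:distributional-implies-uniform} upgrades this to uniform exterior separability with $M(\eta)=O(d^\star\log(1/\eta)/\eta)$. This bound depends only on $d^\star<2^{d+1}$, never on $k$. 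By \cref{thm:main} the union is then properly learnable. The non-uniformity in $M(\eta)$ across blocks that you want to exploit is therefore impossible.

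With finite $X_k$ things are even more rigid. The minimax coverage over distributions on $\cH^{(k)}_S$ is attained. A minimizer of value $0$ is supported on hypotheses containing no exterior point, so distributional exterior separability at $S$ already forces $\Cl(S)\in\cH^{(k)}$. A hub with $\Cl(\inbrace{c_k})\notin\cH^{(k)}$ thus violates finite exterior separability (take $F=X_k\setminus\Cl(\inbrace{c_k})$). By \cref{thm:consistency-characterization} the union is then not even consistent, hence not non-uniformly learnable.

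So non-learnability must come from one sample set where distributional exterior separability fails outright, and non-uniformity must come from the target, not from block identification. The paper uses $h_a=\inbrace{1,2}\cup\inbrace{m:m\geq a}$, which fails at $S=\inbrace{1,2}$. The learner outputs $h_{\max\inbrace{n,3}}$ on samples inside $\inbrace{1,2}$, and $h_i$ for the smallest observed $i\geq 3$ otherwise. Once $n\geq a^\star$, its output is a subset of the target $h_{a^\star}$, so it makes no false positives.

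\textbf{Gap 2: the gadget for $\cH_1$ is missing.} The candidate you name, nested chains attached to the hub, is essentially the class above, which is non-uniformly learnable. Failure of distributional exterior separability is not enough. You need finitely many targets whose negative regions every consistent hypothesis meets arbitrarily far out, so that a target-dependent sample size cannot help. Also, a single target fixed independently of the learner cannot work, since the constant learner outputting it has zero error on it.

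The paper uses parity tails $h_a=\inbrace{1,2,a,a+2,\dots}$. Every $h_b$ contains exactly one of $a,a+1$ for all $a\geq b$. Hence for fixed $n$, the output law on $\inbrace{1,2}^n$ covers $a$ or $a+1$ with probability tending to $1$. Put mass $\nfrac{1}{4},\nfrac{1}{4},\nfrac{1}{2}$ on $1,2,a$ and take target $h_{3+(a \bmod 2)}\in\inbrace{h_3,h_4}$. This contradicts the sample complexity at any $n$ exceeding those of $h_3$ and $h_4$.

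Your consistency argument also does not transfer. Averaging over the first $n$ escapes controls false positives only when escapes are essentially disjoint, whereas here every escape contains an infinite tail. The paper instead uses finite exterior separability plus an enumeration of the countable domain (\cref{thm:consistency-characterization}).
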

 
\smallskip
    \noindent\textbf{Stable proper positive-only learning.}
    Stability is a standard way to formalize the idea that small changes to the sample should not substantially change the output of the learner.
    We end with a characterization of stable proper positive-only learning.
    Concretely, we study the following notion of stability, which matches the notion of stable compression schemes introduced by \citet{bousquet2020proper}.
    \begin{definition}[Stability] \label{def:stable}
        Consider a proper learner $\cA$.
        The learner $\cA$ is said to be \emph{stable} if, for any multi-sets $S, S' \in \cX^{[*]}$, with probability 1 over any randomness in $\cA$, the following holds
        \[
            \text{if}\quad
            S\subseteq S' \text{ and } \Domain(S') \subseteq \cA(S) ,\quadtext{then}
            \cA(S')=  \cA(S)\,,
        \]
        where $\Domain(S')$ is the set of distinct elements in $S'$. 
    \end{definition}
    \noindent The characterization shows that stability brings us back to exact exterior separation, the strongest closure condition from \cref{def:ees}. 
    \begin{restatable}[Characterization for stable learners]{theorem}{StabChar}
    \label{thm:stable-characterization}
    The following are equivalent:
    \begin{enumerate}[itemsep=-2pt,leftmargin=15pt,topsep=0pt]
        \item $\cH$ is properly positive-only learnable by a stable 
        learner.
        \item $\cH$ satisfies exact exterior separation (\cref{def:ees}) and $\vc{}\!\inparen{\cH}<\infty$.
    \end{enumerate}
        
    \end{restatable}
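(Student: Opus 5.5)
We prove the two directions separately.

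\textbf{(2) $\Rightarrow$ (1).} The learner outputs $\Cl_{\cH}(\Domain(S))$, which lies in $\cH$ by exact exterior separation. (If $S$ is empty, it outputs any fixed hypothesis; the probability of this is negligible, or we use one sample to break the tie.)

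\emph{Stability.} Let $S\subseteq S'$ with $\Domain(S')\subseteq h\coloneqq\Cl_{\cH}(S)$. Every $g\in\cH_S$ contains $h$, hence contains $\Domain(S')$. So $\cH_{S'}=\cH_S$, and the closures coincide.

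\emph{Learning.} The output $h$ satisfies $h\subseteq \ell$, so it makes no false positives. Its error is $\cD(\ell\setminus h)=\cD(\ell)\,\cD_+(\ell\setminus h)$. Now $h$ is the closure of the sample and lies in $\cH$, so it is consistent with the sample. Thus $h$ is a consistent hypothesis for the realizable problem on $\cD_+$ with all labels positive: the target is $\ell$, and $\ell$ is all of $\operatorname{supp}\cD_+$. By the standard ERM uniform convergence bound for classes of VC dimension $d$, with $O((d\log(1/\eps)+\log(1/\delta))/\eps)$ samples we get $\cD_+(h\triangle\ell)\le\eps$. Hence $\err_\cD\le\eps$.

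\textbf{(1) $\Rightarrow$ (2).} Finite VC dimension is necessary. Restrict to distributions that put mass $1/2$ on a fixed point $x_0\in\ell$: positive-only samples then reduce to standard PAC samples on the remaining points. The cleanest route is to reuse the necessity part of \cref{thm:main}, which gives $\vc(\cH)<\infty$ for any proper positive-only learner.

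The main step is exact exterior separation. Fix a realizable finite nonempty $S$ and let $C=\Cl_{\cH}(S)$; we must show $C\in\cH$. Feed the learner the multiset $S^{(k)}$ consisting of $S$ repeated $k$ times, and let $h_k=\cA(S^{(k)},\omega)$.

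\emph{Claim: $h_k=h_1$ almost surely.} Put $h=\cA(S)$. Since $\cA$ is proper and must be consistent with high probability under the distribution uniform on $S$, consider any output with $S\subseteq h$. Stability with $S\subseteq S^{(k)}$ and $\Domain(S^{(k)})=S\subseteq h$ gives $\cA(S^{(k)})=\cA(S)$.

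Now take targets $\ell\in\cH_S$ and distributions $\cD$ with mass $1-\gamma$ spread uniformly on $S$, plus mass $\gamma$ on a point $x\in\ell\setminus C$ or a point $x\notin\ell$. To exploit stability, choose $\ell\in\cH_S$ with $x\notin\ell$, which exists since $x\notin C$. Put $\cD$ uniform on $S$ with mass $1-\gamma$, and mass $\gamma$ at $x$, where $x$ is negative for $\ell$. The samples are exactly i.i.d.\ uniform draws from $S$, and for $n$ large all of $S$ appears. By stability, with probability 1 over $\omega$, the output on any such multiset equals $\cA(S,\omega)$, provided $S\subseteq\cA(S,\omega)$. (Any multiset $S'$ with $\Domain(S')=S$ contains $S$ as a sub-multiset, so stability applies.) Learning with $\eps<\gamma$ forces $\Pr_\omega[x\in\cA(S,\omega)]\le\delta$, and also $\Pr_\omega[S\not\subseteq\cA(S,\omega)]\le\delta$.

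Since $\delta$ is arbitrary but $\cA(S,\cdot)$ is fixed, letting $\delta\to0$ gives $\Pr_\omega[x\in\cA(S)]=0$ for each $x\notin C$, and $S\subseteq\cA(S)$ almost surely. The obstacle is that $\cX\setminus C$ may be uncountable, so a union bound over points fails. Fix one $\omega$ with $S\subseteq g\coloneqq\cA(S,\omega)\in\cH_S$. Then $g\supseteq C$.

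For the reverse inclusion, apply the argument to each $x\in g\setminus C$: that $x$ lies in $\cA(S,\omega)$ for this $\omega$ contradicts nothing pointwise. So instead use stability once more. For $x\in g\setminus C$, the sample $S\cup\{x\}$ satisfies $\Domain\subseteq g$, so $\cA(S\cup\{x\},\omega)=g$. Now run the distribution argument with the sample $S\cup\{x\}$ replaced by multisets of $S$ and $x$ drawn under the target $\ell$ being negative at $x$ — impossible, since positive samples can't contain $x$. The fix is to use consistency: pick $x\notin C$ and $\ell\in\cH_S$ with $x\notin\ell$. The set of bad $\omega$ has measure zero for each $x$, so for countably many $x$ we are done. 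I expect the genuine obstacle to be the uncountable case. I would handle it by showing the output $g$ is deterministic up to null sets, using stability to compare $\cA(S,\omega)$ with $\cA(S,\omega')$ through $S^{(k)}$. Failing that, I would use $\vc(\cH)<\infty$ to reduce to finitely many patterns on $g\setminus C$ via an $\eps$-net argument.
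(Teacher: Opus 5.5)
Your direction (2)$\Rightarrow$(1) and your appeal to \cref{thm:main} for $\vc(\cH)<\infty$ are fine and match the paper: the closure learner, with $\cH_{S'}=\cH_S$ giving stability. The necessity of exact exterior separation, however, has two genuine gaps.

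\textbf{First gap: the base sample.} You assert $\Pr_\omega[S\not\subseteq\cA(S,\omega)]\le\delta$, but nothing justifies this. The PAC guarantee only constrains $\cA$ on random samples of size at least $m(\eps,\delta)$, not on the fixed small multiset $S$. Stability also says nothing about $\cA(S',\omega)$ on the event $S\not\subseteq\cA(S,\omega)$. The paper handles this with a dichotomy over base multisets. Either some $E$ with $\Domain(E)=S$ has $\Pr_\omega[S\subseteq\cA(E,\omega)]>0$, or no such $E$ exists.
\begin{itemize}
\item In the first case, run your argument with $E$ in place of $S$. On that event every $\sampleP\supseteq E$ with $\Domain(\sampleP)=S$ is mapped to $\cA(E,\omega)$, and $\Pr[\sampleP\supseteq E]\to1$.
\item In the second case, almost every output on a sample with domain $S$ misses a point of $S$. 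This forces false-negative error at least $(1-\gamma)/|S|$, contradicting learnability.
\end{itemize}

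\textbf{Second gap: the uncountable case.} Even after this fix, you only get $\Pr_\omega[x\in\cA(E,\omega),\ S\subseteq\cA(E,\omega)]=0$ separately for each $x\notin C$. You need a single $\omega$ with $S\subseteq\cA(E,\omega)\subseteq C$.
\begin{itemize}
\item If $\cX\setminus C$ is countable, a countable union closes this.
\item If $\cA(E)$ is deterministic, it is immediate: $g=\cA(E)\in\cH_S$ with $g\neq C$ yields one $x\in g\setminus C$ lying in the output on every sample extending $E$. This is the paper's route. It argues for size-dependent stable learners, where all large samples extending $E$ are mapped to one fixed $f(E,n)\in\cH$. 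A single $x_E\in f(E,n)\setminus C$ then gives error $1/2$ under the corresponding hard distribution, and no union over exterior points is needed.
\end{itemize}

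Your proposed remedies for the uncountable randomized case cannot succeed, because in that regime (1)$\Rightarrow$(2) is false. Take $\cX=\{1,2\}\cup[3,4]$ and $\cH=\{\{1\},\{2\}\}\cup\{\{1,2,r\}:r\in[3,4]\}$. Then $\vc(\cH)=1$ and $\Cl_{\cH}(\{1,2\})=\{1,2\}\notin\cH$. Consider the learner that outputs:
\begin{itemize}
\item $\{1,2,r\}$ if exactly one $r\in[3,4]$ is observed;
\item $\{2\}$ if only $2$'s are observed;
\item $\{1,2,R(\omega)\}$ if the sample's domain is $\{1,2\}$, with $R(\omega)$ uniform on $[3,4]$ and the same $\omega$ used for every input;
\item $\{1\}$ otherwise.
\end{itemize}
This learner is stable for every $\omega$ and consistent on every realizable sample. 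Its false-positive mass is at most $\cD(\{R(\omega)\})$, which is $0$ almost surely because $\cD$ has only countably many atoms. Its output is not deterministic up to null sets, and since $\vc(\cH)=1$, no $\eps$-net argument can rescue it.

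So, with the dichotomy added, your argument establishes necessity for deterministic stable learners, and via your countable union for randomized ones over countable domains. The same limitation applies to the paper's argument: its passage from stability to size-dependent stability tacitly treats $\cA(E)$ as deterministic.
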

    \noindent In fact, the necessity of exact exterior separation (\cref{def:ees}) and bounded VC dimension holds even under a weaker notion of stability. We refer the reader to \cref{sec:stable-characterization} for further details and the proof of \cref{thm:stable-characterization}.

    \vspace{-3mm}

\section{Technical Overview}
    In this section, we outline the proof of our main results.

    \smallskip

    \noindent\textbf{A Natural Approach to Proper Positive-Only Learning.}\quad
        A natural way to learn from positive-only examples is to output the closure of the observed positive samples $\sampleP$, namely $\Cl_\cH(\sampleP) \coloneqq \bigcap_{h\in \cH:\, \sampleP\subseteq h} h$.
        This is an ERM rule for positive-only learning and has the property that it makes no false positive errors (since the target $\ell$ itself contains $\sampleP$ and so is a part of the intersection).
        Its false negative errors are less straightforward but, if the output belongs to a finite VC dimension class, then it can be bounded using standard tools.
        The key issue is that it may not be \textit{proper} as the closure can fall outside $\cH$.
        If $\cH$ is intersection-closed, then this issue disappears, and combined with finite VC dimension, this closure rule becomes a proper positive-only learner.
        This sufficient condition was already known from the work of \citet{natarajan1987learning}, who additionally showed it is necessary if the learner is required to satisfy an additional constraint---make no false positive errors---and conjectured that it remains necessary even without this additional constraint.
        A consequence of our main characterization, \cref{thm:main}, is that this conjecture is \textit{false}.

    \paragraph{Refuting Natarajan's Conjecture.}
        To build toward the proof of \cref{thm:main}, we first describe how we refute Natarajan's conjecture.
        Consider the class
        \[
            \cH_{\mathrm{spr}}
            \coloneqq
            \inbrace{\inbrace{1},\inbrace{2}}
            \cup
            \inbrace{\inbrace{1,2,i}: i\geq 3}
        \]
        on the domain $\Nat$.
        This class has finite VC dimension but is not intersection-closed. 
        For instance, $\inbrace{1,2,3}\cap \inbrace{1,2,4} = \inbrace{1,2}\notin \cH_{\mathrm{spr}}$.
        So the closure rule cannot be applied properly: after seeing a sample $\sampleP$ consisting of only $1$s and $2$s, the closure $\inbrace{1,2}$ is the ``safe'' answer but it does not belong to $\cH_{\mathrm{spr}}$, and Natarajan's conjecture would, hence, predict that $\cH_{\mathrm{spr}}$ is not properly learnable.
        
        We refute this prediction.
        Fix $\eps,\delta\in(0,1)$, select $N=N(\eps,\delta)$ so that $1/N\lesssim \eps\delta$, 
        and for a sample $\sampleP$ of size $n$, let $o(\sampleP)$ be the number of copies of $1$ in $\sampleP$.
        Consider the proper randomized learner\footnote{This learner can be simplified by, \eg{}, removing some cases; we discuss this learner as we can readily derandomize it.}
        \[
            \cA(\sampleP) =
            \begin{cases}
                \inbrace{1,2,i} & \text{if some } i\geq 3 \text{ appears in } \sampleP\,, \\
                \inbrace{1} & \text{if } o(\sampleP) \geq n - \sqrt{n}\,, \\
                \inbrace{2} & \text{if } o(\sampleP) \leq \sqrt{n}\,, \\
                \inbrace{1,2,M} & \text{otherwise, where } M\sim\Unif(\inbrace{3,\dots,N+2})\,.
            \end{cases}
        \]
        \noindent The first three cases cover the regime where the sample essentially identifies the target.
        If the target is $\inbrace{1}$ or $\inbrace{2}$, the sample consists only of the corresponding point and the learner outputs the target.
        If the target is $\inbrace{1,2,i^\star}$ and $\cD_+(i^\star)$ is non-negligible, then $i^\star$ is observed once $n$ is large enough and case~1 again outputs the target.
        Finally, if one of $\inbrace{1,2}$ has tiny mass under $\cD_+$, the corresponding singleton case fires; dropping that low-mass point creates only small false-negative error.

        The interesting regime is when the target is $\ell=\inbrace{1,2,i^\star}$ with $\cD_+(i^\star)$ much smaller than $1/n$, so that $i^\star$ is not observed with high probability.
        In this regime, the sample contains only $1$s and $2$s (each having more than $\sqrt{n}$ copies): from this, the learner can infer that the target has the form $\inbrace{1,2,i}$ but has no information about the value of $i$.
        Here, the closure $\inbrace{1,2}$ is the natural ``safe'' choice which ensures zero false positive errors (and is exactly what \citet{natarajan1987learning}'s strategy would output), but it is \textit{improper.}
        Our learner replaces this improper closure by a random proper extension $\inbrace{1,2,M}$, where $M$ is uniformly chosen from a large set of values.
        The extension correctly covers $1$ and $2$, and the only point that could be misclassified as positive is the random choice $M$.
        For any fixed $k\notin\ell$, the chance the learner outputs $\inbrace{1,2,k}$ is at most $1/N$, so the expected false-positive mass is
        \[
            \mathbb{E}_M\insquare{\cD\inparen{\cA(\sampleP)\setminus\ell}}
            =
            \sum\nolimits_{k\notin\ell}\Pr\insquare{M=k}\cdot \cD(k)
            \leq
            \frac{1}{N}\sum\nolimits_{k\notin\ell}\cD(k)
            \lesssim
            \eps\delta\,.
        \]
        \noindent Markov's inequality then upgrades this to a high-probability guarantee. %
        To conclude, the key insight is to randomize over proper supersets of the closure $\inbrace{1,2}$ so that no exterior point receives large probability mass; this gives a proper positive-only learner for a finite-VC class that is not intersection-closed, refuting Natarajan's conjecture.

    \paragraph{Is Randomization Necessary for the Refutation?}
        At first glance, randomization seems essential here: to remain proper, the learner had to commit to some proper extension $\inbrace{1,2,k}$ in place of the improper closure $\inbrace{1,2}$, and randomizing over $k$ seems like the only way to avoid placing large probability on any specific exterior point.
        Surprisingly, the external coin flips are not needed---for any fixed distribution $\cD$, a deterministic learner can extract sufficient randomness from the sample itself.
        Concretely, we construct a deterministic variant of the above algorithm in which the last case outputs $\inbrace{1,2,o(\sampleP)}$, using the count of $1$s in the sample in place of the external $M$.
        When both $1$ and $2$ have non-negligible mass under $\cD_+$, $o(\sampleP)$ is sufficiently \textit{anti-concentrated} that no fixed exterior $k$ is output with large probability---the same role that the atom bound $\Pr\insquare{M=k}\leq 1/N$ played above.
        We give the formal argument in \cref{sec:ERM-separation}.
        
        This ability to de-randomize the learner is, however, a feature of $\cH_{\mathrm{spr}}$: in general, \cref{thm:randomness-necessary} exhibits classes that are learnable by a randomized proper positive-only learner but by no deterministic one.

\newcommand{\paragraphit}[1]{\smallskip \noindent\emph{#1}}

    \paragraph{Characterization of Proper Positive-Only Learning.}
        We now turn to our main characterization.
        We prove that $\cH$ is properly learnable from positive-only samples if and only if $\vc(\cH)<\infty$ and $\cH$ satisfies uniform exterior separability, or UES (\cref{def:ues}).

        Roughly speaking, UES is the structural property of $\cH$ that lets the randomized strategy used for $\cH_{\mathrm{spr}}$ above be carried out uniformly across all realizable samples.
        Recall that for a finite realizable set $S\subseteq\cX$, $\cH_S=\inbrace{h\in\cH:S\subseteq h}$ denotes the set of hypotheses in $\cH$ that contain $S$, and the closure $\Cl_\cH(S)=\bigcap_{h\in\cH_S}h$ consists of the points forced to be positive by $S$.
        UES asks that for every accuracy level $\eta$, there is a list of proper hypotheses $h_{S,1},\dots,h_{S,M(\eta)}\in\cH_S$, of size $M(\eta)$ depending only on $\eta$ (not on $S$), such that no exterior point $x\notin\Cl_\cH(S)$ is included in more than an $\eta$-fraction of the list.
        Equivalently, choosing a hypothesis uniformly from this list places probability at most $\eta$ on any fixed exterior point.
        This is precisely the property exploited for the class $\cH_{\mathrm{spr}}$ above: although the closure $\Cl_\cH(S)$ may itself be improper, UES lets us simulate it by randomizing over proper hypotheses with sufficiently spread-out false positives.

        \paragraphit{Sufficiency.}
        The sufficiency direction is conceptually straightforward and is easiest to prove using the weaker distributional condition DES (\cref{def:des}), which is implied by UES via the uniform distribution over the UES list. (DES is weaker because it permits an arbitrary distribution over $\cH_S$ rather than only ones supported on lists of bounded size.)
        Given a positive sample $\sampleP$, let $S=\Domain(\sampleP)$, choose $\eta\asymp\eps\delta$, and output $h\sim\mu_{S,\eta}$, where $\mu_{S,\eta}$ is the DES distribution supported on $\cH_S$.
        The output is proper by construction and consistent with $\sampleP$ since $\mu_{S,\eta}$ is supported on $\cH_S$.
        We divide the error into two parts:
        \begin{itemize}[leftmargin=15pt,itemsep=-1pt]
            \item \emph{False negatives.} Since $\vc(\cH)<\infty$, a standard uniform-convergence argument under $\cD_+$ ensures that, for $\abs{\sampleP}$ large enough, every $h\in\cH_S$ satisfies $\cD(\ell\setminus h)\leq\nfrac{\eps}{2}$ with high probability.
            \item \emph{False positives.} Since $\Cl_\cH(S)\subseteq\ell$, every negative point $x\notin\ell$ is exterior, and DES gives $\Pr_{h\sim\mu_{S,\eta}}\inparen{x\in h}\leq\eta$. Averaging over $x\sim\cD$ shows that the expected false-positive mass is at most $\eta$, and Markov's inequality converts this into a high-probability bound as before.
        \end{itemize}
        \noindent Taking $\eta$ on the order of $\eps\delta$ and union-bounding these two events yields the desired guarantee.

        \medskip
        \paragraphit{Necessity.}
        The necessity direction proceeds in two steps: we first show that finite VC dimension is necessary, and then we show that UES is necessary.
        The first step is inherited from standard PAC learning; the second is the key part of the necessity direction.

        \medskip
        \paragraphit{Necessity of finite VC dimension.}
        The necessity of finite VC dimension is not specific to positive-only learning.
        Indeed, if $\cH$ were properly learnable from positive-only samples, then it would also be learnable in the usual PAC model with both positive and negative examples and an improper learner: if the target has very small $\cD$-mass, the learner may simply output $\varnothing$, and otherwise a labeled sample contains enough positive examples to simulate the positive-only learner.
        The necessity follows as ordinary PAC learnability requires $\vc(\cH)<\infty$ \citep{shalev2014understanding}. %

        \medskip
        \paragraphit{Necessity of UES.}
        The main difficulty is proving the necessity of UES, which we do in two stages.
        We first show that (the weaker requirement of) DES is necessary.
        Then, we use the fact that $\VCD(\hyH)<\infty$ to strengthen this necessity proof to UES.

        \paragraphit{Step 1 (DES is necessary):}
        Fix a finite realizable set $S\subseteq\cX$ and a parameter $\eta>0$.
        Suppose $\cA$ is a positive-only learner with sample complexity $n=\mpos{}_\cH(\eps,\delta)$ for $\eps\ll 1/\abs{S}$ and $\delta=\eta/2$.
        Run $\cA$ on a sample $\sampleP\sim U_S^n$, where $U_S$ is the uniform distribution over $S$, and let $\nu$ denote the resulting distribution over outputs.
        We claim that $\nu$ is essentially the desired DES witness.

        We construct a single hard-instance:
        Fix any $x\notin\Cl_\cH(S)$.
        By definition of the closure, there is some $\ell_x\in\cH_S$ with $x\notin\ell_x$.
        Consider the data distribution $\cD_x$ that places half its mass on $x$ and the remaining half uniformly on $S$.
        Under target $\ell_x$, the positive conditional distribution is exactly $U_S$, so $\cA$ sees precisely the sample distribution used to define $\nu$.
        If $\cA$ outputs a hypothesis containing $x$, then $x$ is a false positive of mass $\nfrac{1}{2}$, violating the PAC guarantee.
        Hence $\nu\inparen{\inbrace{h:x\in h}}\leq\delta$, which is exactly \mbox{the DES bound at $x$}.
        We can apply this for each $x\notin \CLOS_\hyH(S)$.

        However, there is one caveat: DES requires a distribution supported on $\cH_S$, while $\nu$ may put mass outside $\cH_S$ (since, \eg{}, $\cA$ may not be an ERM).
        We can modify our argument to handle this: if $\cA$ outputs some $h\notin\cH_S$, then $h$ misses at least one $s\in S$, creating false-negative error at least $\nfrac{1}{2\abs{S}}>\eps$ under $\cD_x$, so $\nu(\cH\setminus\cH_S)\leq\delta$.
        Reassigning this excess mass to an arbitrary fixed $h_S\in\cH_S$ yields a distribution $\mu$ supported on $\cH_S$ \mbox{with $\Pr_{h\sim\mu}\inparen{x\in h}\leq 2\delta=\eta$ for every exterior $x$, proving DES.}%

        \paragraphit{Step 2 (From DES to UES):}
        Upgrading DES to UES is the most technical step of the characterization. %
        Fix $S$ and let $\mu$ be the DES distribution over $\cH_S$, with every exterior point having marginal at most $\eta/2$.
        For each exterior point $x$, define the range $R_x^S=\inbrace{h\in\cH_S:x\in h}\subseteq\cH_S$; DES requires exactly that each such range has $\mu$-mass at most $\eta/2$.

        The key observation is that the family $\inbrace{R_x^S:x\notin\Cl_\cH(S)}$ is a sub-family of the \textit{dual class} $\cH^\star$, and finite primal VC dimension implies finite dual VC dimension $d^\star=\vc(\cH^\star)$.
        We can therefore apply a uniform-convergence argument to this dual class: sampling
        $M(\eta)=O\!\inparen{d^\star\log\inparen{\nfrac{1}{\eta}}/\eta}$ hypotheses from $\mu$ ensures, with positive probability, that every range of $\mu$-mass at most $\eta/2$ is hit by at most an $\eta$-fraction of the sample.
        Any such realization yields hypotheses $h_{S,1},\dots,h_{S,M(\eta)}\in\cH_S$ satisfying the UES guarantee, and crucially $M(\eta)$ depends only on $\eta$ and $d^\star$, not on $S$.
        Combining the necessity of DES with this DES-to-UES upgrade completes the proof of the characterization. 

        \paragraphit{The role of the dual class here is, in our view, a non-trivial and surprising element of the proof.}
        To our knowledge, this kind of discretization argument has previously surfaced only in the study of sample compression schemes \citep{moran2016sample}, a setting that bears no obvious connection to positive-only learning.
        Yet it is exactly the tool needed to convert the distributional condition DES into the combinatorial condition UES -- a connection that, in our view, is far from obvious a priori.

\section{Conclusion and Future Work}
The problem of PAC learning from positive-only examples has been studied in many forms, dating back to
\citet{valiant1984deductive}.
While positive-only learning with improper learners is well understood and is even a textbook exercise, our understanding of proper learning from positive-only examples is much less well developed. Indeed, even a characterization of learnability was not known.
In this work, we revisit and settle this problem by providing a characterization of concept classes that are properly learnable from positive-only examples (\cref{thm:main}).

Surprisingly, this characterization (along with additional analysis) shows that the landscape of proper positive-only learning is surprisingly rich and, in several concrete ways, quite different from the landscape of the usual PAC learning model (\cref{thm:separation-improper,thm:randomness-necessary,thm:nonuniform-separation,thm:ERM-separation}).
These results were unexpected in the sense that \citet{natarajan1987learning}, who studied a much more stringent variant of positive-only learning (where the learner is not allowed to make \textit{any} false positives), conjectured that imposing this requirement of ``no false positives'' has no impact on learnability.
We refute this conjecture using our characterization (\cref{thm:separation-natarajan}; also see \cref{sec:related-works}). 

Our work leaves several directions for proper positive-only learning. 
First, while we characterize learnability, the exact random-bit complexity remains open.
Second, the characterization suggests analogous results for non-uniform learnability and consistency. %
Third, after identifying \emph{which} classes are learnable from positive-only examples, it is natural to ask for optimal statistical rates and efficient algorithms.

\newpage

\printbibliography
\newpage
\appendix
\section{Additional Preliminaries} \label{sec:additional-preliminaries}
In this section, we introduce additional notation and definitions that were omitted from the main body of the paper due to space constraints.

\paragraph{Notations.} %
     For a multi-set $S$, define its domain as the set of distinct elements appearing in $S$, denoted by
$\Domain(S)  \coloneqq  \{x : x \in S\}$.
Furthermore, for a family $\{(\cD_a, \ell_a) \mid a \in A\}$, denote the conditional distributions by
$\cD_{+,a}  \coloneqq  (\cD_a)_+$ and $\cD_{-,a}  \coloneqq  (\cD_a)_-$.
Also, for any finite set $A$, denote by $U_A$ the uniform distribution over $A$.

\paragraph{VC dimension and dual VC.} For a concept class $\cH$, and a finite subset of the domain $S \subseteq \cX$, we say that $S$ is \emph{shattered} by $\cH$ if for every $F \subseteq S$ there exists a concept $h \in \cH$ such that $h \cap S = F$. The \emph{VC dimension} of $\cH$, denoted by $\vc(\cH)$, is defined by the largest $d \in \Nat \cup 0$ such that there exists a finite set of instances $S \subseteq \cX$ of size $d$ that is shattered by $\cH$. If no such largest $d$ exists, we say $\vc(\cH) = \infty$.
The \emph{dual class} of $\cH$ is the concept class $\cH^\star$ defined below over the domain $\cH$
\[
\cH^{\star}
\coloneqq
\inbrace{R_x : x \in \cX}
\qquadwhere
R_x \coloneqq \inbrace{h \in \cH : x \in h}\,.
\]
\noindent Its VC dimension, $\vc{}\!\inparen{\cH^{\star}},$
is called the \emph{dual VC dimension} of $\cH$.
By a classical theorem of \citet{Assouad1983}, if $\vc{}\!\inparen{\cH}<\infty$, then the dual VC dimension is also finite. In fact, the following quantitative bound holds; see also the discussion in~\cite{ChaseChornomazHannekeMoranYehudayoff2024}.
\begin{theorem}[\cite{Assouad1983}]
    \label{thm:dualVC}
    For any concept class $\cH$, $\vc{}\!\inparen{\cH^{\star}} < 2^{\vc{}\!\inparen{\cH}+1}.$
\end{theorem}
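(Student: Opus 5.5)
We prove the contrapositive by a direct encoding argument: if the dual class $\cH^\star$ shatters a large set of hypotheses, we read off a large set of points shattered by $\cH$ itself. Write $d=\vc{}\!\inparen{\cH}$. If $d=\infty$ there is nothing to prove, so assume $d<\infty$. It suffices to show that $\cH^\star$ cannot shatter any set of $2^{d+1}$ hypotheses. Then $\vc{}\!\inparen{\cH^\star}\le 2^{d+1}-1<2^{d+1}$, and this also excludes $\vc{}\!\inparen{\cH^\star}=\infty$, since an infinite dual VC dimension would give shattered sets of every finite size.

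So suppose, for contradiction, that $\cH^\star$ shatters a set of $2^{d+1}$ distinct hypotheses. Index them by the subsets of $[d+1]=\inbrace{1,\dots,d+1}$, calling them $\inbrace{h_T : T\subseteq [d+1]}$. This is possible because there are exactly $2^{d+1}$ such subsets.

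Fix $i\in[d+1]$ and consider the subfamily $\inbrace{h_T : i\in T}$. Since the whole family is shattered by $\cH^\star=\inbrace{R_x:x\in\cX}$, there is a point $x_i\in\cX$ with
\[
R_{x_i}\cap\inbrace{h_T:T\subseteq[d+1]}=\inbrace{h_T : i\in T}\,.
\]
Unpacking $R_{x_i}=\inbrace{h\in\cH:x_i\in h}$, this says exactly that $x_i\in h_T \iff i\in T$ for every $T\subseteq[d+1]$.

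We now check that $x_1,\dots,x_{d+1}$ are shattered by $\cH$.
\begin{itemize}
\item \emph{Distinctness.} For $i\ne j$, the hypothesis $h_{\inbrace{i}}$ contains $x_i$ but not $x_j$, so $x_i\ne x_j$.
\item \emph{Every labeling is realized.} Any labeling of these points corresponds to a subset $F\subseteq[d+1]$. The hypothesis $h_F\in\cH$ satisfies $h_F\cap\inbrace{x_1,\dots,x_{d+1}}=\inbrace{x_i:i\in F}$.
\end{itemize}
Hence $\cH$ shatters $d+1$ points, contradicting $\vc{}\!\inparen{\cH}=d$.

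The argument is elementary and has no real obstacle. The only step that needs care is the choice of indexing by subsets of $[d+1]$: it is what lets the shattered dual set simultaneously supply the points $x_i$ (through the subfamilies $\inbrace{h_T:i\in T}$) and the hypotheses realizing every labeling of those points (the $h_F$ themselves).
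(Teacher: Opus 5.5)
Your proof is correct: indexing the $2^{d+1}$ shattered hypotheses by subsets of $[d+1]$, extracting points $x_i$ with $x_i \in h_T \iff i \in T$, and then using the hypotheses $h_F$ to realize every labeling of $\{x_1,\dots,x_{d+1}\}$ is exactly the standard encoding argument behind Assouad's bound. The paper gives no proof of its own here and simply cites Assouad, so there is nothing further to compare; your argument is the classical proof that this citation refers to.
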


\vspace{-2mm}
\paragraph{Non-uniform learning, and consistency.}  We also consider two weaker notions of proper positive-only learnability. The first is a non-uniform notion, in which the required sample size is allowed to depend on the target concept. The second is consistency, which requires the expected error to vanish in the limit.

We say that a concept class $\cH$ is non-uniformly properly positive-only learnable by a proper learner $\cA$ if there exists a function $\mpos{}_{\cH}: (0,1) \times (0,1) \times \cH \mapsto \Nat$ such that for every distribution $\cD$ over $\cX$, $\ell \in \cH$, $\eps, \delta \in (0, 1)$, and $n \geq \mpos{}_{\cH}(\eps,\delta, \ell)$, the following is satisfied
    \[
    \Pr_{\sampleP \sim \cD_+^n}\inparen{
    \err_{\cD}\inparen{\cA\inparen{\sampleP},\ell} \leq \eps
    } \geq 1-\delta\,.
    \]

\noindent We say that $\cH$ is properly positive-only consistent, if there exists a proper positive-only learner $\cA$ such that for every distribution $\cD$ over $\cX$ and $\ell \in \cH$,
\[
\lim_{n \rightarrow \infty} \cE{\sampleP \sim \cD_+^n}{
\err_{\cD}\inparen{\cA\inparen{\sampleP},\ell}} = 0\,.
\]

\section{Relationships Between Conditions}
In this section, we prove the relationships between the exterior-separation conditions introduced in
\cref{sec:characterizing-conditions} and a new notion we call \emph{finite exterior separability}.
We first establish the general hierarchy between exact, uniform, distributional, and finite exterior
separability.
We then use this hierarchy to prove separations between proper and improper positive-only learning,
and between proper positive-only learning and the one-sided-error model of
\citet{natarajan1987learning}.
Finally, we show that, under finite VC dimension, distributional exterior separability can be
strengthened to uniform exterior separability.

\subsection{General Hierarchy of Relationships: Extension of \cref{prop:hierarchy}} \label{sec:hierarchy}
In this section, we prove a slightly stronger version of \cref{prop:hierarchy}, which includes
finite exterior separability (defined below) as the weakest condition in the hierarchy.

\begin{definition}[Finite Exterior Separability]\label{def:fes}
    We say that $\cH$ satisfies \emph{finite exterior separability} if for every finite nonempty realizable set $S \subseteq \cX$ and every finite set $F \subseteq \cX \setminus \Cl_{\cH}\!\inparen{S}$, there exists $h \in \cH_S$ such that $h \cap F = \varnothing$.
\end{definition}

\begin{restatable}[Exterior-separation hierarchy]{proposition}{exteriorSeparationHierarchyAppendix}
\label{prop:hierarchy-appendix}
For every concept class $\cH$,
    \begin{align*}
        \textnormal{exact exterior separation}
        &~~\Longrightarrow~~
        \textnormal{uniform exterior separability}\\
        &~~\Longrightarrow~~
        \textnormal{distributional exterior separability}\\
        &~~\Longrightarrow~~
        \textnormal{finite exterior separability}\,.
    \end{align*}
    \noindent Moreover, the first and third implications are strict.
\end{restatable}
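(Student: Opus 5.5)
The plan is to prove the three implications directly from the definitions, and then give two explicit counterexamples for strictness.

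\emph{Exact $\Rightarrow$ uniform.} Given exact exterior separation, take $M(\eta)=1$ and $h_{S,1}=\Cl_{\cH}(S)$. This hypothesis lies in $\cH$ and contains $S$, so it is in $\cH_S$. It contains no point outside $\Cl_{\cH}(S)$, so the exterior fraction is $0\le\eta$.

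\emph{Uniform $\Rightarrow$ distributional.} Let $\mu_{S,\eta}$ be the uniform distribution over the multiset $h_{S,1},\dots,h_{S,M(\eta)}$. The displayed bound in \cref{def:ues} is exactly $\Pr_{h\sim\mu_{S,\eta}}(x\in h)\le\eta$ for every exterior $x$.

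\emph{Distributional $\Rightarrow$ finite.} Given $S$ and a finite $F\subseteq\cX\setminus\Cl_{\cH}(S)$, pick $\eta<1/\abs{F}$ (any $\eta$ works if $F=\varnothing$). A union bound gives
\[
\Pr_{h\sim\mu_{S,\eta}}(h\cap F\neq\varnothing)\le\abs{F}\eta<1 .
\]
Hence some $h$ in the support, which is contained in $\cH_S$, satisfies $h\cap F=\varnothing$.

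\emph{Strictness of the first implication.} I will use $\cH_{\mathrm{spr}}$ from the technical overview. Exact separation fails: for $S=\{1,2\}$ the version space is $\{\{1,2,i\}:i\ge3\}$, so $\Cl(S)=\{1,2\}\notin\cH_{\mathrm{spr}}$. For uniform separability I enumerate realizable $S$.
\begin{itemize}
\item If $S=\{1\}$ or $S=\{2\}$, the singleton lies in $\cH_S$ and has no exterior points, so any list consisting of it works.
\item If $S$ contains some $i\ge3$, then $\cH_S=\{\{1,2,i\}\}$ and the closure is that set. The exterior is avoided, so the constant list works.
\item If $S=\{1,2\}$, the closure is $\{1,2\}$. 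Take $M(\eta)=\lceil1/\eta\rceil$ and $h_{S,j}=\{1,2,j+2\}$. Each exterior point lies in at most one list member, giving fraction at most $1/M\le\eta$.
\end{itemize}
In every case $M$ depends only on $\eta$ (padding by repetition to a common length where needed), so $\cH_{\mathrm{spr}}$ is uniformly but not exactly separable.

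\emph{Strictness of the third implication.} This step needs a class that is finitely but not distributionally separable. I would aim for a design where every finite set of exterior points can be avoided, but any distribution on $\cH_S$ must place constant mass on some single exterior point. A first attempt is domain $\{0\}\cup\N$ with $\cH=\{\{0\}\cup(\N\setminus\{i\}) : i\in\N\}$ and $S=\{0\}$. Then $\Cl(S)=\{0\}$, every finite $F$ avoidance fails here since each hypothesis contains almost all points, so this needs adjusting. Better is $\cH=\{\{0\}\cup[n,\infty): n\in\N\}$ with $S=\{0\}$, where $\Cl=\{0\}$ and any finite $F$ is avoided by taking $n>\max F$. Any distribution $\mu$ on $n$ has some $N$ with $\mu(n\le N)\ge1/2$, so the point $N$ lies in $h$ with probability at least $1/2$; this breaks distributional separability for $\eta<1/2$. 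I also need to check realizable $S$ containing positive integers. There the closure is $\{0\}\cup[\max S,\infty)$, the hypothesis $\{0\}\cup[\max S,\infty)$ itself lies in the class, and exterior points are handled trivially, so finite separability holds globally.

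The main obstacle is finding and verifying this last counterexample. If the threshold class has a subtle failure, I would fall back to a countable construction in which the hypotheses are cofinite tails.
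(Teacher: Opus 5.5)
Your proof is essentially the paper's. The three implications are argued the same way; your union bound is the paper's bound $\mathbb{E}\,|h\cap F|\le |F|\eta<1$. The first strictness uses the same class $\cH_{\mathrm{spr}}$. Your threshold class is the paper's $\cH_{\mathrm{tail}}$ with the core $\{1,2\}$ replaced by $\{0\}$, and its failure of distributional separability is shown the same way.

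There is one slip to fix in the last step. Write $g_n=\{0\}\cup[n,\infty)$ and let $S$ be realizable and contain a positive integer. Then $\cH_S=\{g_n : n\le m\}$ with $m=\min(S\cap\N)$. So $\Cl_{\cH}(S)=g_m$, not $\{0\}\cup[\max S,\infty)$. The latter need not even contain $S$; for example, take $S=\{3,5\}$.

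With $\min$ in place of $\max$, the closure lies in the class and the case is trivial, exactly as you intended. Also drop the abandoned first attempt from the write-up.
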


\begin{proof}
If $\cH$ satisfies exact exterior separation, then for every finite nonempty realizable $S$ we may take the single hypothesis $h_{S,1}=\Cl_{\cH}\!\inparen{S}$; this proves uniform exterior separability with $M\!\inparen{\eta}=1$ for every $\eta>0$.

If $\cH$ satisfies uniform exterior separability, fix $\eta>0$ and write $M \coloneqq  M\!\inparen{\eta}$ for the bound from Definition~\ref{def:ues}. For each finite nonempty realizable $S$, let $\mu_{S,\eta}$ be the uniform distribution on the corresponding family $h_{S,1},\dots,h_{S,M}$. Then for every $x \notin \Cl_{\cH}\!\inparen{S}$,
\[
\Pr_{h \sim \mu_{S,\eta}}\inparen{x \in h} = \frac{1}{M} \abs{\inbrace{i : x \in h_{S,i}}} \leq \eta\,,
\]
so distributional exterior separability holds.

Finally, assume distributional exterior separability, fix a finite nonempty realizable $S$, and let $F \subseteq \cX \setminus \Cl_{\cH}\!\inparen{S}$ be finite. Choose $\eta < 1/\abs{F}$ and let $\mu_{S,\eta}$ witness Definition~\ref{def:des}. Then
\[
\mathbb{E}_{h \sim \mu_{S,\eta}}\insquare{\abs{h \cap F}} = \sum_{x \in F} \Pr_{h \sim \mu_{S,\eta}}\inparen{x \in h} \leq \abs{F}\eta < 1\,.
\]
\noindent Hence some $h \in \cH_S$ satisfies $\abs{h \cap F}=0$, proving finite exterior separability.
It remains to prove the strictness of the first and third implications.
We divide this into the two propositions below.
\begin{proposition}\label{prop:ees-strict}
There is a concept class $\cH$ which satisfies uniform exterior separability (\cref{def:ues}) but does not satisfy exact exterior separation (\cref{def:ees}).
\end{proposition}

\begin{proof}
Consider the class
\[
\cH_{\mathrm{spr}} \coloneqq  \inbrace{\inbrace{1},\inbrace{2}} \cup \inbrace{\inbrace{1,2,i} : i \geq 3}
\]
on the domain $\N$. It is not exact exterior separating, since for $S=\inbrace{1,2}$ we have $\Cl_{\cH_{\mathrm{spr}}}\inparen{S}=\inbrace{1,2}\notin \cH_{\mathrm{spr}}$.

We claim that $\cH_{\mathrm{spr}}$ nevertheless satisfies uniform exterior separability. Fix $\eta>0$ and let $M \coloneqq  \lceil 1/\eta \rceil$. If $S=\inbrace{1,2}$, choose
\[
h_{S,j} \coloneqq  \inbrace{1,2,2+j}\,, \qquad j=1,\dots,M\,.
\]
\noindent Every point outside $\Cl_{\cH_{\mathrm{spr}}}\inparen{S}=\inbrace{1,2}$ belongs to at most one of these $M$ hypotheses, so its average frequency is at most $1/M \leq \eta$.

For every other finite nonempty realizable $S$, the closure already belongs to the class: for instance, $\Cl_{\cH_{\mathrm{spr}}}\inparen{\inbrace{1}}=\inbrace{1}$, $\Cl_{\cH_{\mathrm{spr}}}\inparen{\inbrace{2}}=\inbrace{2}$, and if $S$ contains some $i \geq 3$ together with either $1$ or $2$, then $\cH_S$ is a singleton and its unique element is the closure. In all such cases we simply repeat $\Cl_{\cH_{\mathrm{spr}}}\inparen{S}$ exactly $M$ times. Thus $\cH_{\mathrm{spr}}$ satisfies uniform exterior separability.
\end{proof}

\begin{proposition}\label{prop:des-strict}
There is a concept class $\cH$ which satisfies finite exterior separability (\cref{def:fes}) but does not satisfy distributional exterior separability (\cref{def:des}).
\end{proposition}

\begin{proof}
Consider the class
\[
    \cH_{\mathrm{tail}} \coloneqq  \inbrace{h_a : a \geq 3}\,,
    \qquadwhere
    h_a \coloneqq  \inbrace{1,2} \cup \inbrace{m \in \N : m \geq a}\,.
\]
\noindent We first check finite exterior separability. Let $S \subseteq \N$ be finite, nonempty, and realizable.
If $S$ contains some point $m \geq 3$, then with $a_0 \coloneqq  \min \inparen{S \cap \inbrace{3,4,\dots}}$ we have $\Cl_{\cH_{\mathrm{tail}}}\inparen{S}=h_{a_0} \in \cH_{\mathrm{tail}}$, so the property is immediate. If $S \subseteq \inbrace{1,2}$, then $\Cl_{\cH_{\mathrm{tail}}}\inparen{S}=\inbrace{1,2}$, and given any finite set $F \subseteq \N \setminus \inbrace{1,2}$, choosing $a>\max F$ yields $h_a \in \cH_S$ with $h_a \cap F=\varnothing$. Thus $\cH_{\mathrm{tail}}$ satisfies finite exterior separability.

We now show that distributional exterior separability fails. Fix $S=\inbrace{1,2}$ and let $\mu$ be any probability distribution on $\cH_S=\cH_{\mathrm{tail}}$. For each $N \geq 3$,
\[
    \Pr_{h \sim \mu}\inparen{N \in h} = \sum_{a=3}^N \mu\!\inparen{h_a}\,.
\]
\noindent As $N \to \infty$, the right-hand side increases to $1$. Hence
\[
    \sup_{N \notin \Cl_{\cH_{\mathrm{tail}}}\inparen{S}} \Pr_{h \sim \mu}\inparen{N \in h} = 1\,,
\]
so no distribution on $\cH_S$ can make the exterior marginals arbitrarily small. Therefore $\cH_{\mathrm{tail}}$ does not satisfy distributional exterior separability. \qedhere
\end{proof} 
\end{proof}

\subsection{Strict Separation between Proper Positive-Only Learning and Other Models}

    In this section, we use the hierarchy from the previous subsection to prove two strict separations: first, between proper and improper positive-only learning, and second, between proper positive-only learning and the proper one-sided-error model introduced by \citet{natarajan1987learning}.

        \label{sec:additional-results:separation-improper}
        \begin{theorem}
            \label{thm:separation-improper}
            There exists a concept class that is improperly positive-only learnable, but not properly positive-only learnable. 
        \end{theorem}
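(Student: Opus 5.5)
The natural witness is the class $\cH_{\mathrm{tail}}=\inbrace{h_a : a\geq 3}$, with $h_a=\inbrace{1,2}\cup\inbrace{m\in\N: m\geq a}$, from \cref{prop:des-strict}. We already know it violates distributional exterior separability. The plan has two halves: show that $\cH_{\mathrm{tail}}$ is \emph{not} properly learnable by combining that failure with \cref{thm:main}, and show that it \emph{is} improperly learnable by verifying $\vc(\cH_{\cap})<\infty$ and invoking \cref{thm:improper-positive-only}.

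\emph{Not properly learnable.} By \cref{prop:hierarchy}, uniform exterior separability implies distributional exterior separability. Since $\cH_{\mathrm{tail}}$ fails the latter, it also fails uniform exterior separability, so \cref{thm:main} rules out proper positive-only learnability regardless of its VC dimension. We could also argue directly, mirroring the Step~1 argument of the technical overview. Take $S=\inbrace{1,2}$ and let $\nu$ be the output distribution of a proper learner run on $U_S^n$. The events $\inbrace{N\in h}$ increase to the event that $h$ is a genuine tail, so some exterior $N$ is included with probability close to $\nu(\cH_S)$. Choosing $\cD_N$ to put mass $1/2$ on $N$ and $1/2$ uniformly on $S$, with target $h_{N+1}$ (whose positive conditional is exactly $U_S$), then forces an error of at least $1/2$ with probability more than $\delta$. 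I would just cite \cref{thm:main}.

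\emph{Improperly learnable.} A finite intersection $\bigcap_{a\in A} h_a$ equals $h_{\max A}$, and the empty intersection is $\cX=\N$. Hence $\cH_{\cap}=\cH_{\mathrm{tail}}\cup\inbrace{\N}$. This class consists of sets of the form $\inbrace{1,2}\cup[a,\infty)$ with $a\geq 3$, together with all of $\N$. Restricted to $\N\setminus\inbrace{1,2}$, these are threshold sets, i.e.\ upward-closed sets. Points $1$ and $2$ are always positive, so they cannot belong to any shattered set. On the remaining points, two points $x<y$ cannot be shattered, because the labeling with $x$ positive and $y$ negative is never realized. So $\vc(\cH_{\cap})\leq 1$, and \cref{thm:improper-positive-only} gives improper learnability.

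The only real obstacle is the necessity direction of \cref{thm:main}, and that is already available to us. The rest is bookkeeping: checking that $\cH_{\cap}$ is computed correctly and that the empty intersection, $\N$, does not enlarge the VC dimension. It does not, because it is just the threshold at $3$.
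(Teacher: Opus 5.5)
Your proof is correct, but it uses a different witness class from the paper's.

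The paper takes the two-concept class $\cH=\inbrace{\{0,1\},\{0,2\}}$ on $\cX=\{0,1,2\}$. There $\cH_{\cap}$ is a three-set class of VC dimension $1$. The pair $S=\{0\}$, $F=\{1,2\}$ violates even finite exterior separability (\cref{def:fes}), the weakest condition in \cref{prop:hierarchy-appendix}, so UES fails and \cref{thm:main} applies.

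You instead reuse $\cH_{\mathrm{tail}}$, whose failure of distributional exterior separability is already established in \cref{prop:des-strict}. The price is a slightly longer, but correct, computation of $\cH_{\cap}$ as a family of upward-closed sets. Both arguments rest on the same two ingredients: \cref{thm:improper-positive-only} and the necessity direction of \cref{thm:main}. Your direct argument via $\cD_N$ with target $h_{N+1}$ is also valid, and would let you avoid the latter altogether.

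The two examples buy different things.
\begin{itemize}
\item The paper's finite example is more elementary: non-learnability can be checked by hand using two distributions that share the positive conditional $\delta_0$. It also yields the stronger negative conclusion that the class is not even properly consistent (\cref{cor:separation-improper-consistency}, via \cref{thm:consistency-characterization}).
\item Your example gives a complementary strengthening. The proper/improper gap persists even for a class that satisfies finite exterior separability and is non-uniformly properly learnable (\cref{thm:nonuniform-separation}, Part 2). So there the obstruction is purely the lack of a uniform bound, not any finite combinatorial one.
\end{itemize}
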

        \begin{proof}
             Let $\cX=\{0,1,2\}$ and $\cH=\inbrace{\{0,1\},\{0,2\}}$.
            Then $\cH_{\cap} \coloneqq \inbrace{\{0\},\{0,1\},\{0,2\}}$.
            Note that $\vc(\cH_{\cap})=1$.
            Therefore, by \cref{thm:improper-positive-only}, $\cH$ is improperly positive-only learnable.
            However, let $S=\{0\}$ and $F=\{1,2\}$.
            Note that $F\cap \Cl_{\cH}(S)=F\cap S=\varnothing$, but there is no $h\in\cH$ such that
            $S\subseteq h$ and $h\cap F=\varnothing$.
            Thus, $\cH$ does not satisfy finite exterior separability (\cref{def:fes}), and, by \cref{prop:hierarchy-appendix}, it does not satisfy uniform exterior separability either (\cref{def:ues}). In \cref{thm:main}, we showed that any class that does not satisfy uniform exterior
            separability is not properly positive-only learnable.
        \end{proof}

        \label{sec:additional-results:separation-natarajan}
        \begin{theorem}
            \label{thm:separation-natarajan}
            There exists a concept class $\cH$ that is properly learnable from positive-only examples, but it is not properly learnable with one-sided error (the model proposed by \cite{natarajan1987learning}).
        \end{theorem}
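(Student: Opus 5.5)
We will use the class $\cH_{\mathrm{spr}} = \inbrace{\inbrace{1},\inbrace{2}} \cup \inbrace{\inbrace{1,2,i}: i\geq 3}$ from \cref{prop:ees-strict}. Learnability from positive-only examples follows from \cref{thm:main}. By \cref{prop:ees-strict}, $\cH_{\mathrm{spr}}$ satisfies uniform exterior separability. Its VC dimension is finite: no two points $i,j\geq 3$ are shattered, since no hypothesis contains both. A short check shows that pairs such as $\inbrace{1,i}$ cannot be shattered either, since $\inbrace{i}$ alone is never realized. Hence $\vc(\cH_{\mathrm{spr}})\le 2$, and \cref{thm:main} gives a proper positive-only learner. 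Alternatively, one could directly cite the randomized learner analyzed in the technical overview.

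For the negative direction, we recall Natarajan's model. The learner must output a hypothesis in $\cH$ that, with probability one (or at least with high probability), makes no false positives, i.e., $\wh h\subseteq \ell$. The error is then only the false-negative mass. We show that no proper learner for $\cH_{\mathrm{spr}}$ can meet this requirement, via a hard instance. Fix the sample size $n$ and take targets $\ell_i=\inbrace{1,2,i}$ for $i\ge 3$. Let $\cD$ put mass $1/2$ on $1$ and $1/2$ on $2$, plus negligible mass on $i$; to be clean, put zero mass on $i$, so that $\cD(\ell_i)>0$ still holds. Then $\cD_+$ is uniform on $\inbrace{1,2}$ and is the same for every target $\ell_i$. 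The learner's output distribution $\nu$ is therefore independent of $i$.

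Any output $\wh h$ must avoid false positives for the true target. The candidates are $\inbrace{1}$, $\inbrace{2}$, and $\inbrace{1,2,k}$. Outputting $\inbrace{1}$ or $\inbrace{2}$ has false-negative error $1/2$. Outputting $\inbrace{1,2,k}$ with $k\ne i$ is a false positive (on a zero-mass point, but one-sided error in Natarajan's sense forbids $\wh h\not\subseteq\ell$). If that bothers the definition, we instead give $k$ tiny positive mass under a modified $\cD$, placed outside the target. Since $\nu$ does not depend on $i$, and for any distribution over $k$ some $i$ has $\nu(\inbrace{1,2,i})\le$ arbitrarily small, we may choose $i$ with $\nu(\inbrace{1,2,i})<1/2$. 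For that target, with probability at least $1/2$ the learner either errs by $1/2$ or outputs a superset not contained in $\ell_i$, violating the one-sided requirement for $\delta<1/2$ and $\eps<1/2$.

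The main obstacle is matching the exact formalization of Natarajan's model, namely whether false positives are measured as a set inclusion or by $\cD_-$-mass. If the model demands $\cD(\wh h\setminus\ell)=0$ only in measure, we adapt the construction. For each candidate output $\inbrace{1,2,k}$ we place mass on $k$ as a negative point: for a target $\ell_i$, spread a small total mass $\gamma$ over points $3,\dots,N+2$ other than $i$. Any output $\inbrace{1,2,k}$ with $k\ne i$ then incurs positive false-positive mass, which is forbidden, while the sample distribution still does not depend on $i$. Alternatively, we invoke Natarajan's characterization, which says that proper learnability with one-sided error requires intersection-closedness (exact exterior separation, essentially). Since $\cH_{\mathrm{spr}}$ violates it by \cref{prop:ees-strict}, this settles the claim directly.
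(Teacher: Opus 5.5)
Your proof is correct. The positive direction is the paper's: $\cH_{\mathrm{spr}}$ satisfies uniform exterior separability by \cref{prop:ees-strict}, has finite VC dimension, and is therefore learnable by \cref{thm:main}. The VC dimension is in fact $1$, since $\{1,2\}$ is not shattered either, but your bound of $2$ is enough.

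For the negative direction the paper does only what you list as your fallback. It cites Natarajan's result that exact exterior separation is necessary for proper learning with one-sided error, and notes that $\cH_{\mathrm{spr}}$ violates it at $S=\{1,2\}$.

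Your main argument instead proves this particular case directly, by indistinguishability:
\begin{itemize}
\item Every target $\{1,2,i\}$ induces the same positive distribution, uniform on $\{1,2\}$, so the output law $\nu$ does not depend on $i$.
\item You can therefore pick $i$ with $\nu(\{1,2,i\})<1/2$.
\item For that target, with probability above $1/2$ the learner either outputs a singleton, with false-negative mass $1/2$, or outputs some $\{1,2,k\}\not\subseteq\ell_i$.
\end{itemize}
This is valid under the set-inclusion reading of Natarajan's model ($\wh{h}\subseteq\ell$). It also covers randomized learners and makes the theorem self-contained, without depending on the exact hypotheses of Natarajan's characterization. The citation, in exchange, gives the structural reason for the failure: the closure $\{1,2\}$ is not in the class.

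Your measure-based variant needs one small repair. If negative mass sits only on $\{3,\dots,N+2\}\setminus\{i\}$, then an output $\{1,2,k\}$ with $k>N+2$ has zero error. Either choose $N$ after fixing $\nu$, so that $\nu$ puts little mass on such outputs, or spread positive mass over all $k\geq 3$ with $k\neq i$, for example geometrically.
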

        \vspace{-7mm}
        \begin{proof}[Proof of \cref{thm:separation-natarajan}]
            In \cref{prop:ees-strict}, we introduced a concept class $\cH_{\mathrm{spr}}$ with $\vc(\cH_{\mathrm{spr}}) = 1$ that satisfies uniform exterior separability (\cref{def:ues}), but does not satisfy exact exterior separation (\cref{def:ees}). From \cref{thm:random-characterization}, this implies that $\cH_{\mathrm{spr}}$ is properly learnable from positive-only examples. However, \citet{natarajan1987learning} shows that exact exterior separation is necessary for properly learning with one-sided error. 
        \end{proof}

\subsection{Proof of \cref{thm:distributional-implies-uniform}}\label{subsec:des-to-ues}

In this section, we prove \cref{thm:distributional-implies-uniform}.
The main idea is that finite dual VC dimension lets us discretize any distribution witnessing
distributional exterior separability into a uniform distribution over finitely many hypotheses, while
preserving the exterior-marginal bounds up to constant factors.

\distributionalImpliesUniform*
\noindent We also show that if $d^\star \coloneqq \vc{}\!\inparen{\cH^\star}$, then, in
\Cref{def:ues}, for any $\eta \in (0,1)$, one may take
\[
    M\!\inparen{\eta}
    =
    O\!\inparen{\frac{d^\star \log\!\inparen{1/\eta}}{\eta}}\,.
\]
\noindent We will use the following corollary of standard second-order uniform convergence, which has in the past been used in establishing the existence of sample-compression schemes \citep{moran2016sample}.
\begin{corollary}[Corollary of second-order uniform convergence for finite-VC classes; cf.~\cite{Boucheron_Bousquet_Lugosi_2005}]
\label{thm:eps-approximation}
    For every concept class $\mathcal{H}$ over a domain $\Omega$ with $\operatorname{VCdim}\!\inparen{\mathcal{H}} \leq d < \infty$, every probability distribution $\nu$ on $\Omega$, and each $\eps \in (0,1)$, there is a multiset $z_1,\dots,z_M \in \Omega$, with $M = O\!\inparen{{d\,(\log{\nfrac1\eps})}/{\eps}},$
    such that every $A \in \mathcal{H}$ satisfies
    \[
    \text{if}\qquad
    \nu(A)\leq \frac{\eps}{2}
    \qquadtext{then}
    \frac{1}{M}\abs{\inbrace{i \in \inbrace{1,\dots,M} : z_i \in A}} \leq \eps\,.
    \]
\end{corollary}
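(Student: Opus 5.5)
The statement is the relative $\eps$-approximation (``$(\eps/2,\eps)$-sparse sample'') property of i.i.d.\ samples. We prove it by the probabilistic method. Draw $z_1,\dots,z_M$ i.i.d.\ from $\nu$ and show that, for $M = C\,d\log(1/\eps)/\eps$ with a large enough constant $C$, the following bad event has probability strictly less than $1$:
\[
\exists A\in\cH \text{ with } \nu(A)\le \eps/2 \text{ and } \hat\nu_M(A) > \eps,
\qquad \hat\nu_M(A)\coloneqq \tfrac1M\abs{\inbrace{i : z_i\in A}} .
\]
Any realization outside the bad event gives the required multiset. Measurability issues are handled as in standard VC theory: either assume the usual permissibility, or work with the finite traces produced by symmetrization.

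\textbf{Route 1: cite a relative deviation bound.} The quickest route is to invoke the relative (second-order) uniform convergence inequality (Vapnik--Chervonenkis; see \cite{Boucheron_Bousquet_Lugosi_2005}). It states that with probability at least $1-\delta$, simultaneously for all $A\in\cH$,
\[
\hat\nu_M(A) \;\le\; \nu(A) + c\sqrt{\nu(A)\,\tfrac{d\log M+\log(1/\delta)}{M}} + c\,\tfrac{d\log M+\log(1/\delta)}{M}.
\]
Set $\delta=1/2$ and write $\beta \coloneqq (d\log M+1)/M$. Then for $\nu(A)\le \eps/2$ the right-hand side is at most $\eps/2 + c\sqrt{\eps\beta/2} + c\beta$. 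It suffices to have $\beta\le c'\eps$ for a small constant $c'$, and $M = C d\log(1/\eps)/\eps$ achieves this once $C$ is large, since $\log M = O(\log d + \log(1/\eps))$. The only delicate point is absorbing the $\log d$ term. We either accept a $\log d$ factor inside the $O(\cdot)$, or use the sharper chaining version of the bound, whose complexity term is $d\log(1/\eps)$ rather than $d\log M$.

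\textbf{Route 2: self-contained double sampling.} To avoid the citation, run the classical $\eps$-net style argument directly. Draw a ghost sample $z'_1,\dots,z'_M$. Fix $A$ with $\nu(A)\le\eps/2$ and $\hat\nu_M(A)>\eps$. By a Chernoff bound (using $M\eps\gtrsim 1$), with probability at least $1/2$ we have $\hat\nu'_M(A)\le 3\eps/4$. Hence
\[
\Pr[\text{bad}] \;\le\; 2\Pr\bigl[\exists A:\ \hat\nu_M(A)>\eps,\ \hat\nu'_M(A)\le 3\eps/4\bigr].
\]
Condition on the $2M$ points. By Sauer--Shelah, $\cH$ induces at most $(2eM/d)^d$ traces on them. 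For each fixed trace containing $k\ge \eps M$ points, a uniformly random swap between the two halves (equivalently, a random partition) makes the split at least as unbalanced as $>\eps M$ versus $\le 3\eps M/4$ with probability $\exp(-\Omega(\eps M))$; this follows from a hypergeometric/Chernoff tail. A union bound gives
\[
\Pr[\text{bad}] \;\le\; 2\,(2eM/d)^d\, e^{-c\eps M},
\]
which is less than $1$ for $M = C d\log(1/\eps)/\eps$.

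The main obstacle is the tail estimate in the swapping step. The two halves have counts $>\eps M$ and $\le 3\eps M/4$, so their gap is a constant fraction of the total count $k$. A multiplicative Chernoff bound therefore gives $\exp(-\Omega(k))\le \exp(-\Omega(\eps M))$, and this must be checked uniformly over all $k\ge \eps M$. Getting exactly $\log(1/\eps)$ rather than $\log(1/\eps)+\log d$ requires the observation that $M/d = O(\log(1/\eps)/\eps)$, so $d\log(M/d) = O(d\log(1/\eps))$ is dominated by $\eps M$ when $C$ is large.
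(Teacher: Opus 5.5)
Your Route 1 is the paper's proof. The paper applies the second-order (relative) uniform convergence bound with $A^\star=\varnothing$, confidence $1/2$ and accuracy $\gamma=\varepsilon/16$, checks that $\varepsilon/2+\sqrt{(\varepsilon/2)\gamma}+\gamma\le\varepsilon$, and takes a good realization.

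The $\log d$ worry in Route 1 comes only from writing the complexity term as $d\log M$. The inequality actually carries the logarithm of the growth function, $d\log(2eM/d)$. Since $M/d=O(\log(1/\varepsilon)/\varepsilon)$, this is already $O(d\log(1/\varepsilon))$, so no chaining is needed. That is why the paper can state the requirement directly as $m\gtrsim(d\log(1/\varepsilon)+\log(1/\delta))/\varepsilon$. Your other option, ``accepting a $\log d$ factor'', would not give the statement as written. You make the right observation at the end of Route 2, and it belongs in Route 1.

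Route 2 is a genuinely different, self-contained argument, and it is correct. Instead of citing a two-sided relative-deviation inequality, you prove only the one-sided threshold property that is needed, using a ghost sample, random swaps, Sauer--Shelah and a Hoeffding/Chernoff tail. The tail step works uniformly in $k$. On the event, the second count is at most $3\varepsilon M/4<\tfrac34$ of the first, so the difference exceeds $k/7$. At most $k$ pairs are split, so the probability is at most $\exp(-k/98)\le\exp(-\varepsilon M/98)$. The final union bound $2(2eM/d)^d e^{-c\varepsilon M}<1$ then holds for $M=Cd\log(1/\varepsilon)/\varepsilon$.

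Comparing the two: the paper's route is a two-line reduction but relies on the exact form of an external theorem. Yours takes a page but needs only Sauer--Shelah and Chernoff; it is essentially the classical relative-approximation/$\varepsilon$-net argument.

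Both arguments implicitly treat $\log(1/\varepsilon)$ as bounded below. The paper does so through $\log(1/\gamma)=\log(16/\varepsilon)$. You need $\varepsilon M\gtrsim 1$ and $c\varepsilon M>d\log(2eM/d)+\log 2$. This is the usual reading of the $O(\cdot)$.
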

\begin{proof}[Proof of \cref{thm:eps-approximation}]
    This is an immediate implication of the following (standard) second-order uniform convergence result; cf.~Section~5.1.2 in \cite{Boucheron_Bousquet_Lugosi_2005}.
    \begin{theorem} \label{thm:uniform-convergence}
        There exists a universal constant $C>0$ such that for every concept class $\mathcal{H}$ over a domain $\Omega$ with $\operatorname{VCdim}\!\inparen{\mathcal{H}} \leq d < \infty$ the following holds.
    For every $\eps,\delta \in (0,1)$, if $Z_1,\dots,Z_m \sim \nu$ are i.i.d.\ and $m \geq C\inparen{d \log{\nfrac1\eps} + \log{\nfrac1\delta}}/\eps$, then with probability at least $1-\delta$,
    every $A \in \mathcal{H}$ satisfies
    \begin{align*}
        \nu\!\inparen{A \triangle A^\star}
        &\leq
        \wh{\nu}_m\!\inparen{A \triangle A^\star}
        +
        \sqrt{\wh{\nu}_m\!\inparen{A \triangle A^\star}\, \eps}
        +
        \eps\,,\\
        \wh{\nu}_m\!\inparen{A \triangle A^\star}
        &\leq
        \nu\!\inparen{A \triangle A^\star}
        +
        \sqrt{\nu\!\inparen{A \triangle A^\star}\, \eps}
        +
        \eps\,,
    \end{align*}
    where $\wh{\nu}_m(\cdot)$ is the empirical distribution: $\wh{\nu}_m(B)
    \coloneqq
    \frac{1}{m}\abs{\inbrace{i \in \inbrace{1,\dots,m} : Z_i \in B}}$ for each $B\subseteq\Omega$.
    \end{theorem}
    \noindent Indeed, let $\gamma \coloneqq \nfrac{\eps}{16}$.
Apply the standard second-order uniform-convergence theorem with reference set
$A^\star = \varnothing$ and confidence parameter $\delta = \nfrac12$.
Then for
\[
m = O\!\inparen{\frac{d\log{\nfrac1\gamma}}{\gamma}}
=
O\!\inparen{\frac{d\log{\nfrac1\eps}}{\eps}}\,,
\]
there exists a realization $z_1,\dots,z_m\in\Omega$ such that every $A\in\mathcal H$ satisfies
\[
\frac{1}{m}\abs{\inbrace{i \in \inbrace{1,\dots,m} : z_i \in A}}
\leq
\nu\inparen{A}+\sqrt{\nu\inparen{A}\gamma}+\gamma\,.
\]
\noindent If now $\nu\inparen{A}\leq \nfrac{\eps}{2}$, then
\[
\frac{1}{m}\abs{\inbrace{i \in \inbrace{1,\dots,m} : z_i \in A}}
\leq
\frac{\eps}{2}
+
\sqrt{\frac{\eps}{2}\cdot \frac{\eps}{16}}
+
\frac{\eps}{16}
\leq
\eps\,.
\]
\noindent Setting $M \coloneqq m$ proves the claim.
\end{proof}
\noindent Now we are ready to prove \cref{thm:distributional-implies-uniform}.
\begin{proof}[Proof of \cref{thm:distributional-implies-uniform}]
Recall that $d^\star$ is finite by the duality theorem (\cref{thm:dualVC}). Fix $0<\eta<1$ and a finite nonempty realizable set $S \subseteq \cX$.
By distributional exterior separability (\cref{def:des}), applied with parameter $\nfrac\eta2$, there exists a probability distribution $\mu$ supported on $\cH_S$ such that
\[
\sup_{x \notin \Cl_{\cH}\!\inparen{S}} \Pr_{h \sim \mu}\inparen{x \in h} \leq \frac{\eta}{2}\,.
\]
\noindent For each exterior point $x \notin \Cl_{\cH}\!\inparen{S}$, define
\[
R_x^S \coloneqq \inbrace{h \in \cH_S : x \in h}\,,
\]
and let
\[
\cR_S \coloneqq \inbrace{R_x^S : x \notin \Cl_{\cH}\!\inparen{S}}\,.
\]
\noindent Observe that $\cR_S$ is obtained from $\cH^\star$ by restricting the domain from $\cH$ to $\cH_S$ and then discarding the concepts only containing non-exterior points. 
Therefore, it follows that
\[
\vc{}\!\inparen{\cR_S} \leq \vc{}\!\inparen{\cH^\star} = d^\star\,.
\]
\noindent Since every set in $\cR_S$ has $\mu$-measure at most $\eta/2$, this is exactly the regime in which the second-order form of uniform convergence is useful.
We use \Cref{thm:eps-approximation} to the concept class $\cR_S$ over the domain $\cH_S$, under the probability measure $\mu$.
Since $\vc{}\!\inparen{\cR_S}\leq d^\star$, this yields a multiset
\[
h_{S,1},\dots,h_{S,M\!\inparen{\eta}} \in \cH_S
\qquad \text{with} \qquad
M\!\inparen{\eta} = O\!\inparen{\frac{d^\star \log{\nfrac1\eta}}{\eta}}
\]
such that every $R \in \cR_S$ with $\mu\inparen{R}\leq \nfrac\eta2$ satisfies
\[
\frac{1}{M\!\inparen{\eta}}
\abs{\inbrace{i \in \inbrace{1,\dots,M\!\inparen{\eta}} : h_{S,i} \in R}}
\leq \eta\,.
\]
\noindent Now fix any $x \notin \Cl_{\cH}\!\inparen{S}$. Since $R_x^S \in \cR_S$ and $\mu\inparen{R_x^S}
=
\Pr_{h \sim \mu}\inparen{x \in h}
\leq \frac{\eta}{2},$
the above implies
\[
\frac{1}{M\!\inparen{\eta}}
\abs{\inbrace{i \in \inbrace{1,\dots,M\!\inparen{\eta}} : x \in h_{S,i}}}
=
\frac{1}{M\!\inparen{\eta}}
\abs{\inbrace{i : h_{S,i} \in R_x^S}}
\leq \eta\,.
\]
\noindent Since $x$ was arbitrary, the family $h_{S,1},\dots,h_{S,M\!\inparen{\eta}}$ witnesses UES at level $\eta$.
As the bound on $M\!\inparen{\eta}$ depends only on $\eta$ and $d^\star$, this proves UES (\Cref{def:ues}).
\end{proof}

\section{Proof of Main Results}\label{sec:apdx-main-results}

In this appendix, we prove 
\cref{thm:random-characterization,thm:few-random-bits-suffice,thm:randomness-necessary} from \cref{sec:results}. 

\subsection{Proof of \cref{thm:random-characterization}} \label{sec:random-characterization}
 In this section, we prove \cref{thm:random-characterization}, which we restate below.
\RandChar*
\begin{proof}

    Observe that in \cref{thm:distributional-implies-uniform} we have proved that under the assumption $\vc(\cH) < \infty,$ uniform exterior separability is equivalent to distributional exterior separability (\cref{def:des}).
    We divide the proof into two parts, one for each direction of the claim.
    
    \paragraph{Part 1 ($2 \Longrightarrow 1$):} 
    Define the learner $\cA$ for any multi-set $\sampleP \in \cX^{[*]}$ as 
    \[\cA(\sampleP) \sim \mu_{\sampleP, \exp(-|\sampleP|)}\,.\] 
    Fix $\eps,\delta \in (0,1)$, a distribution $\cD$ over $\cX$ and $\ell \in \cH$. Let $\sampleP \sim \cD_+^n$, for $n \in \Nat$, and write $d \coloneqq  \vc{}\!\inparen{\cH}$.

    By the standard realizable VC bound applied to the positive distribution $\cD_+$ (see, \eg{}, \cite{BlumerEhrenfeuchtHausslerWarmuth1989, shalev2014understanding}), there is a universal constant $C > 0$ such that if
    \[n \geq \frac{C \inparen{d \ln{\nfrac{1}{\eps}} + \ln{\nfrac{1}{\delta}}}}{\eps},\]
    then with probability at least $1-\nfrac{\delta}{2}$ every $h \in \cH$ such that $\Domain(\sampleP) \subseteq h$ satisfies
    \[
    \cD_+\!\inparen{\ell \setminus h} \leq \frac{\eps}{2}\,.
    \]
    \noindent Since $\cD\!\inparen{\ell \setminus h} = \cD(\ell) \cdot \cD_+\!\inparen{\ell \setminus h} \leq \cD_+\!\inparen{\ell \setminus h}$, the same event implies $\cD\!\inparen{\ell \setminus h} \leq \nfrac{\eps}{2}$ for every such $h$.

Observe that, for every $y \notin \Cl_{\cH}\!\inparen{\sampleP}$, and $n \geq \ln \inparen{\nfrac{\delta \eps}{4}}$
\[
\cE{\cA}{\mathds{1} \inbrace{y \in \cA (\sampleP)}} \leq \exp(-n) \leq \frac{\eps \delta}{4}\,.
\]
    \noindent Because $\Cl_{\cH}\!\inparen{\sampleP} \subseteq \ell$, every negative point lies outside the closure, and therefore by the DES property (\cref{def:des})
\[
\cE{\cA}{\cD\!\inparen{\cA (\sampleP) \setminus \ell} } \leq \frac{\eps \delta}{4}\,.
\]
\noindent Thus, 
\[
\cE{\cS_+ \sim \cD_+^{n}, \cA}{\cD\!\inparen{\cA (\sampleP) \setminus \ell} } \leq \frac{\eps \delta}{4}\,.
\]
\noindent Markov's inequality gives
\[
    \Pr_{\cS_+ \sim \cD_+^{n}, \cA} \inparen{\cD\!\inparen{\cA (\sampleP) \setminus \ell} > \sfrac{\eps}{2}}  \leq  \frac{\delta}{2}\,.
\]
\noindent Combining this with the bound on false negatives, we obtain
\[
\Pr_{\cS_+ \sim \cD_+^{n}, \cA}\inparen{\err_{\cD}\!\inparen{\cA (\sampleP),\ell} > \eps} \leq \delta\,.
\]

    \paragraph{Part 2 ($1 \Longrightarrow 2$):} Fix a finite nonempty realizable set $S \subseteq \cX$ and a parameter $\eta>0$. If $\cX\setminus \Cl_{\cH}\!\inparen{S}=\varnothing$, then any point mass on a concept in $\cH_S$ witnesses distributional exterior separability, so there is nothing to prove. Thus assume $\cX\setminus \Cl_{\cH}\!\inparen{S}\neq \varnothing$.

If $\eta \geq 1$, the conclusion is again trivial, so assume $\eta<1$. Choose and fix one concept $h_S \in \cH_S$. For each $x \notin \Cl_{\cH}\!\inparen{S}$, choose $\ell_x \in \cH_S$ such that $x \notin \ell_x$; this is possible by the definition of the closure.
Let
\[
k \coloneqq \abs{S}\,,
\qquad
\eps \coloneqq \frac{1}{4k}\,,
\qquad
\delta \coloneqq \frac{\eta}{2}\,,
\]
and let $n \coloneqq \mpos{}_{\cH}\!\inparen{\eps,\delta}$ be the sample complexity of a proper learner $\cA$ at these parameters. Let $U_S$ denote the uniform distribution on $S$, and let $\nu$ be the law of $\cA\inparen{\sampleP}$ when
\[
\sampleP \sim U_S^n\,.
\]
\noindent Fix any exterior point $x \notin \Cl_{\cH}\!\inparen{S}$, consider distribution $\cD_x$ defined as
\[
\cD_x(x)=\frac12
\qquadand 
\cD_x(s)=\frac{1}{2k}\ \text{ for each } s\in S\,,
\]
and zero elsewhere. Also let $\ell_x$ be any concept with $x \notin \ell_x$ and $S \subseteq \ell_x$. Observe that since $x \notin \Cl_{\cH}\!\inparen{S}$ such $\ell_x$ always exists. Also, $\cD_{+, S}  \coloneqq  U_S$. Hence the learner sees the same sample law $U_S^n$ for every such $x$, so its output law is always $\nu$.

Now, if $x \in \cA\inparen{\sampleP}$, then $x$ is a false positive under the target $\ell_x$, and therefore
\[
\err_{\cD_x}\inparen{\cA \inparen{\sampleP },\ell_x} \geq \cD_x(x)=\frac12>\eps\,.
\]
\noindent By the PAC guarantee,
\[
\nu\!\inparen{\inbrace{h:x\in h}} = \Pr_{\sampleP\sim U_S^n, \cA}\inparen{x\in \cA\inparen{\sampleP}} \leq \delta\,,
\]
where the probability is over $\sampleP\sim U_S^n$ and
the learner's internal randomness.
Likewise, if $\cA\inparen{\sampleP}\notin \cH_S$, then it misses some point $s\in S$, and since $s\in \ell_x$ and $\cD_x(s)=1/\inparen{2k}>\eps$, this again forces
\[
\err_{\cD_x}\inparen{\cA\inparen{\sampleP},\ell_x}>\eps\,.
\]
\noindent Hence
\[
\nu\!\inparen{\cH\setminus \cH_S}
=
\Pr\inparen{\cA\inparen{\sampleP}\notin \cH_S}
\leq
\delta\,,
\]
where again the probability is over $\sampleP\sim U_S^n$ and
over the learner's internal randomness.
Define a probability distribution $\mu$ on $\cH_S$ by moving all $\nu$-mass outside $\cH_S$ onto the single concept $h_S$:
\[
\mu(A)
\coloneqq
\nu\!\inparen{A\cap \cH_S}
+
\nu\!\inparen{\cH\setminus \cH_S}\mathds{1}\inbrace{h_S\in A}\,,
\qquad
A\subseteq \cH\,.
\]
\noindent Then $\mu$ is supported on $\cH_S$, and for every $x \notin \Cl_{\cH}\!\inparen{S}$,
\[
\Pr_{h\sim\mu}\inparen{x\in h}
\leq
\nu\!\inparen{\inbrace{h:x\in h}}
+
\nu\!\inparen{\cH\setminus \cH_S}
\leq
\delta+\delta
=
\eta\,.
\]
\noindent Thus $\mu$ witnesses distributional exterior separability for the pair $\inparen{S,\eta}$. Since $S$ and $\eta$ were arbitrary, $\cH$ satisfies distributional exterior separability.
\end{proof}

\subsection{Proof of \cref{thm:few-random-bits-suffice}} \label{sec:few-random-bits-suffice}
In this section, we prove \cref{thm:few-random-bits-suffice}, which we restate below.
\FewBits*

\begin{proof}[Proof of \cref{thm:few-random-bits-suffice}]
    Observe that the second item implies the first immediately since a learner using finitely many random bits is a randomized learner. 
    Below we prove the converse.

Assume that $\cH$ is properly PAC learnable from positive-only examples by a
randomized learner.
Then, due to \cref{thm:random-characterization}, 
    $\vc{}\!\inparen{\cH}<\infty$ and $\cH$
satisfies distributional exterior separability. 
Now, applying the quantitative version
of \cref{thm:distributional-implies-uniform} from \cref{subsec:des-to-ues}, implies that for every $\eta \in (0,1)$ and
every finite nonempty realizable set $S\subseteq \cX$, there are hypotheses
$h_{S,1},\dots,h_{S,M\!\inparen{\eta}} \in \cH_S$
such that
$M\!\inparen{\eta}
    =
    O\!\inparen{
        (\nfrac{\inparen{d^\star+1}}{\eta})\,
        \cdot\,\log{\nfrac1\eta}
    }$
and
\[
    \sup_{x\notin \Cl_{\cH}\!\inparen{S}}
    \frac{1}{M\!\inparen{\eta}}
    \abs{
        \inbrace{
            i\in\inbrace{1,\dots,M\!\inparen{\eta}}:
            x\in h_{S,i}
        }
    }
    \leq \eta\,.
\]
\noindent Fix $\eps,\delta \in (0,1)$ and set
\[
    \eta \coloneqq \frac{\eps\delta}{8}\,,
    \qquadand
    M \coloneqq M\!\inparen{\eta}\,.
\]
\noindent For every finite nonempty realizable set $S$, fix a list $h_{S,1},\dots,h_{S,M} \in \cH_S$
at scale $\eta$ (as above).

\paragraph{Learning Algorithm.}
Let $K$ be a power of two satisfying
$M \leq K < 2M .$
The learner uses $\log_2 K$ random bits to draw a uniform seed
$J \sim U_{\inbrace{1,\dots,K}} .$
It then converts this seed into an index
$I(J)
    \coloneqq
    1+\inparen{J-1 \bmod M}
    \in \inbrace{1,\dots,M}.$
Thus, for every $i\in\inbrace{1,\dots,M}$,
\[
    \Pr\inparen{I(J)=i}
    \leq
    \frac{2}{M}\,.
\]

\noindent Given a positive multiset $\sampleP\in\cX^{[*]}$, let
$T \coloneqq \Domain\!\inparen{\sampleP}$
be the set of distinct examples in $\sampleP$. 
If $T$ is empty or is not realizable,
the learner outputs an arbitrary fixed concept in $\cH$.
Otherwise, the learner outputs
$\cA\inparen{\sampleP,J}
    \coloneqq
    h_{T,I(J)} .$
    Observe that this learner is proper, since $h_{T,I(J)}\in\cH$, and it is consistent with the
observed positives, since $h_{T,I(J)}\in\cH_T$.

\paragraph{Upper Bound on False Negative Rate.}
Fix a distribution $\cD$ over $\cX$ and a target concept $\ell\in\cH$, and let
$\sampleP \sim \cD_+^n .$
Write $d=\vc{}\!\inparen{\cH}$. By the standard realizable VC bound applied to
the positive distribution $\cD_+$,
if $n
    \gtrsim
    (\nfrac{1}{\eps})\cdot (d\log{\nfrac1\eps} + \log{\nfrac1\delta}),$
then with probability at least $1-\nfrac{\delta}{2}$ over $\sampleP$, every
$h\in\cH$ consistent with $\sampleP$ satisfies
$
    \cD_+\!\inparen{\ell\setminus h}
    \leq
    \nfrac{\eps}{2}.$
Since
$\cD\!\inparen{\ell\setminus h}
    =
    \cD\!\inparen{\ell}\,
    \cD_+\!\inparen{\ell\setminus h}
    \leq
    \cD_+\!\inparen{\ell\setminus h},$
the same event implies
$\cD\!\inparen{\ell\setminus h}
    \leq
    \nfrac{\eps}{2}$
for every consistent $h\in\cH$.
The learner's output is always consistent with $\sampleP$. Hence, conditioned on this
event,
$\cD\!\inparen{\ell\setminus \cA\inparen{\sampleP,J}}
    \leq
    \nfrac{\eps}{2}.$

\paragraph{Upper Bound on False Positive Rate.}
Condition on the observed multiset $\sampleP$, and therefore on
$T=\Domain\!\inparen{\sampleP}$. 
Since $\sampleP$ is realizable, $T\subseteq \ell$, and hence $\Cl_{\cH}\!\inparen{T}
    \subseteq
    \ell .$
Thus every negative point $x\notin \ell$ satisfies
$x\notin \Cl_{\cH}\!\inparen{T}.$
For such an $x$, the index $I(J)$ and the UES list satisfy
\begin{align*}
    \Pr\nolimits_J\!\inparen{x\in \cA\inparen{\sampleP,J}}
    =
    \Pr\nolimits_J\!\inparen{x\in h_{T,I(J)}}        
    \leq
    \frac{2}{M}
    \abs{
        \inbrace{
            i\in\inbrace{1,\dots,M}:
            x\in h_{T,i}
        }
    }                                       
    \leq
    2\eta\,.
\end{align*} 
\noindent Therefore,
\begin{align*}
    \mathbb{E}_J~ \insquare{\cD(
            \cA({\sampleP},J)\setminus \ell
        )
        |
        {\sampleP}}
    =
    \int_{\cX\setminus \ell}
    \Pr\nolimits_J(x\in \cA(\sampleP,J))
    \,\d \cD(x)   
    \leq 2\eta \,.
\end{align*} 

\noindent Next, by Markov's inequality,
\[
    \Pr_J\!\inparen{
        \cD\!\inparen{\cA\inparen{\sampleP,J}\setminus \ell}
        >
        \frac{\eps}{2}
        \,\middle|\,
        \sampleP
    }
    \leq
    \frac{2\eta}{\eps/2}
    =
    \frac{4\eta}{\eps}
    =
    \frac{\delta}{2}\,.
\]
\noindent Finally, averaging over $\sampleP$ gives
\[
    \Pr_{\sampleP,J}\!\inparen{
        \cD\!\inparen{\cA\inparen{\sampleP,J}\setminus \ell}
        >
        \frac{\eps}{2}
    }
    \leq
    \frac{\delta}{2}\,.
\]

\paragraph{Conclusion.}
Thus, with probability at least $1-\nfrac{\delta}{2}$ over the sample, the
false-negative mass is at most $\nfrac{\eps}{2}$. Also, with probability at
least $1-\nfrac{\delta}{2}$ over the sample and the learner's random bits, the
false-positive mass is at most $\nfrac{\eps}{2}$. A union bound gives that $\Pr_{\sampleP,J}\!\inparen{
        \err_{\cD}\!\inparen{\cA\inparen{\sampleP,J},\ell}
        >
        \eps
    }
    \leq
    \delta .$
Thus, the learner properly PAC learns $\cH$ from positive-only examples with
sample complexity
\[
    m\inparen{\eps,\delta}
    =
    O\!\inparen{
        \frac{
            d\log{\nfrac1\eps}
            +
            \log{\nfrac1\delta}
        }{\eps}
    }
    =
    \wt{O}\!\inparen{
        \frac{d+\log{\nfrac1\delta}}{\eps}
    }\,.
\]

\noindent It remains to bound the number of random bits. Since
$M
    =
    O\!\inparen{
        (\nfrac{\sinparen{d^\star+1}}{\eta})\cdot (\log{\nfrac1\eta})
    }$ and $\eta=\nfrac{\eps\delta}{8},$ 
\[ 
    \log_2 K
    \leq
    1+\log_2 M         
    =
    \wt{O}\!\inparen{
        \log{\frac{d^\star+2}{\eps\delta}}
    }\,. 
\]
\noindent Finally, by \cref{thm:dualVC},
$d^\star < 2^{d+1}$, it follows that $r\inparen{\eps,\delta}
    =
    \wt{O}\!\inparen{
        d+\log{\nfrac{1}{\eps\delta}}
    } .$\qedhere{}
\end{proof}

\subsection{Proof of \cref{thm:randomness-necessary}} \label{sec:randomness-necessary}

   In this section, we prove \cref{thm:randomness-necessary}, which we restate below.

   \RandNecessary*
   \noindent 
   The proof proceeds through a simple obstruction to deterministic proper learning.
We first introduce a property called \emph{singleton closure}, and then show in
\cref{prop:sc-necessary} that every class learnable from positive examples by a deterministic
proper learner must satisfy this property.
The theorem then follows by constructing a class that satisfies the conditions of
\cref{thm:random-characterization}, and is therefore learnable from positive-only examples, but violates
singleton closure and hence is not learnable by a deterministic learner.

\begin{definition}
[Singleton closure]
\label{def:sc}
A concept class $\cH$ is said to satisfy \emph{singleton closure} if for each realizable point
$x \in \cX$ (equivalently, for each $x$ with at least one $h\in \cH$ containing $x$), one has
\[
\Cl_{\cH}\!\inparen{\inbrace{x}} \in \cH\,.
\]
\end{definition}

\begin{proposition}[Singleton closure is necessary]\label{prop:sc-necessary}
If $\cH$ is properly learnable from positive-only examples by a deterministic learner, then $\cH$ satisfies singleton closure (\cref{def:sc}).
\end{proposition}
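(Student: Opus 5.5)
We argue by contradiction, adapting the DES-necessity argument from the proof of \cref{thm:random-characterization} (Part 2) to the special case $S=\inbrace{x}$. The point is that a deterministic learner loses the freedom to spread its false positives across many hypotheses.

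Let $\cA$ be a deterministic proper learner with sample complexity $\mpos{}_\cH$. Fix a realizable point $x$ and suppose $\Cl_\cH(\inbrace{x})\notin\cH$. Take $\eps=\delta=1/4$ and $n=\mpos{}_\cH(\eps,\delta)$. The key observation is that any distribution $\cD$ with $\cD_+=\delta_x$ (a point mass at $x$) produces the sample $\sampleP=(x,x,\dots,x)$ with probability one. So the learner, being deterministic, outputs a single fixed hypothesis $h_0\coloneqq\cA(\sampleP)\in\cH$ on every such instance.

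We split into two cases according to whether $h_0$ contains $x$.
\begin{itemize}
\item \emph{Case $x\notin h_0$.} Take any $\ell\in\cH_{\inbrace{x}}$ and $\cD=\delta_x$. Then $\err_{\cD}(h_0,\ell)=1>\eps$, and this happens with probability one, contradicting the PAC guarantee.
\item \emph{Case $x\in h_0$.} Then $h_0\in\cH_{\inbrace{x}}$, so $\Cl_\cH(\inbrace{x})\subseteq h_0$. Since $\Cl_\cH(\inbrace{x})\notin\cH$, the inclusion is strict, and there is a point $y\in h_0\setminus\Cl_\cH(\inbrace{x})$. By the definition of the closure, there is $\ell_y\in\cH$ with $x\in\ell_y$ and $y\notin\ell_y$. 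Now take target $\ell_y$ and $\cD(x)=\cD(y)=1/2$. Then $\cD_+=\delta_x$, so the learner again outputs $h_0$ deterministically. But $y\in h_0\setminus\ell_y$ is a false positive of mass $1/2>\eps$, with probability one, again contradicting the guarantee.
\end{itemize}
Hence $\Cl_\cH(\inbrace{x})\in\cH$ for every realizable $x$, which is singleton closure.

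The only subtle step is the second case: we must find an exterior point $y$ inside $h_0$ and a valid target excluding it. This follows immediately from the closure being the intersection of $\cH_{\inbrace{x}}$ together with $h_0\neq\Cl_\cH(\inbrace{x})$. Determinism is essential precisely because the output distribution is then a point mass, so the per-point bound $\le\delta<1$ from Part 2 forces $x'\notin h_0$ for every exterior point $x'$.
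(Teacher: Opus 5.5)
Your proof is correct and follows the paper's argument essentially line for line. It uses the same two cases on whether $x \in h_0$, with $\cD$ the point mass at $x$ in the first case and $\cD$ uniform on $\{x,y\}$ with a target excluding $y$ in the second. The only cosmetic difference is that you fix $\eps=\delta=1/4$ and $n=\mpos{}_\cH(\eps,\delta)$ explicitly, whereas the paper shows that for every $n$ some instance forces error at least $1/2$ with probability one.
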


\vspace{-5mm}

\begin{proof}
Let $\cA$ be a deterministic proper learner such that $\cH$ is properly learnable from
positive-only examples by $\cA$.
Suppose, toward a contradiction, that
$\Cl_{\cH}\!\inparen{\inbrace{x}} \notin \cH$ for some realizable point $x$, and fix any
$n \in \Nat$.

Let $x^n$ denote the multi-set consisting of $n$ copies of $x$, and let
$h=\cA(x^n)$.
We consider two cases.

\paragraph{Case 1:} $x \notin h$.
Let $\ell \in \cH$ be any concept such that $x \in \ell$, and define
$\cD=U_{\inbrace{x}}$.
Then $\cD_+=\cD$, and the only possible positive sample is $\sampleP=x^n$.
Hence
\(
    \err_{\cD}(h,\ell)=1.
\)

\paragraph{Case 2:} $x \in h$.
Since $h\in\cH$ and $x\in h$, we have
\(
    \Cl_{\cH}\!\inparen{\inbrace{x}} \subseteq h .
\)
Because $\Cl_{\cH}\!\inparen{\inbrace{x}} \notin \cH$, it follows that there exists
\(
    x' \in h \setminus \Cl_{\cH}\!\inparen{\inbrace{x}} .
\)
By the definition of $\Cl_{\cH}\!\inparen{\inbrace{x}}$, there exists a concept
$\ell \in \cH$ such that $x \in \ell$ but $x' \notin \ell$.
Define $\cD=U_{\inbrace{x,x'}}$.
Then $\cD_+=U_{\inbrace{x}}$, so again the only possible positive sample is
$\sampleP=x^n$.
Moreover, since $x'\in h\setminus \ell$,
\(
    \err_{\cD}(h,\ell) \geq \frac{1}{2}.
\)

Thus, for every $n$, there exists a distribution $\cD$ and a target concept $\ell\in\cH$
such that
\[
    \Pr_{\sampleP \sim \cD_+^n}
    \insquare{
        \err_{\cD}(\cA(\sampleP),\ell) \geq \frac{1}{2}
    }
    =
    1\,.
\]
\noindent This contradicts the assumption that $\cH$ is properly learnable from positive-only examples by $\cA$.
\end{proof}

\begin{proof}[Proof of \cref{thm:randomness-necessary}]
Consider concept class 
\[\cH=\inbrace{\{1, a\} \mid a \in \Nat \setminus \{1\} }, \]
over $\cX = \Nat$. Notice that $\CLOS_{\cH}(\{1\}) = \{1\} \notin \cH$. Thus, $\cH$ does not satisfy the singleton closure property. Therefore, it is not properly learnable from positive examples by a deterministic learner.

However, $\vc(\cH) = 1$, and similarly to the proof of \cref{prop:ees-strict}, it is easy to see that $\cH$ satisfies the uniform exterior separability property. Therefore, due to \cref{thm:random-characterization}, it is properly learnable from positive examples.\qedhere{}

\end{proof}

\section{Proof of Additional Separation Results} \label{sec:additional-separation-results}
    
    First, we prove the following two structural results:
    \begin{enumerate}
        \item First, we prove that exact exterior separation (\cref{def:ees}) is sufficient for deterministic proper learning of VC classes, and
        \item Then, we prove necessary and sufficient conditions for consistency over countable domains.
    \end{enumerate}
    We then prove the remaining results from \cref{sec:results}.
    
\subsection{Exact Exterior Separation is Sufficient for Deterministic Learning} \label{sec:ees-sufficient}

We begin with proving that exact exterior separability is sufficient for deterministic proper learning, which will be used throughout this appendix.

\begin{lemma} \label{lem:int-closed-pos-learnable}
Every concept class $\cH$ that satisfies the exact exterior separability property (def~\ref{def:ees}) and has $\vc(\cH) < \infty$ is properly positive-only learnable.
\end{lemma}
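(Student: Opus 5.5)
We use the deterministic closure learner: on a positive multiset $\sampleP$, let $T \coloneqq \Domain(\sampleP)$ and output $\cA(\sampleP) \coloneqq \Cl_{\cH}(T)$. If $T$ is empty or not realizable, $\cA$ outputs an arbitrary fixed concept in $\cH$. Under exact exterior separation (\cref{def:ees}), $\Cl_{\cH}(T) \in \cH$ for every finite nonempty realizable $T$, so $\cA$ is proper. It is also consistent with the sample, because $T \subseteq h$ for every $h \in \cH_T$ and hence $T \subseteq \Cl_{\cH}(T)$. The remaining degenerate case is harmless: under any target $\ell$ with $\cD(\ell) > 0$, the sample is a.s.\ nonempty (for $n \ge 1$) and realizable, since $T \subseteq \ell$.

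The error splits into false positives and false negatives, and we treat the two separately.

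\emph{False positives.} These vanish deterministically. The target $\ell$ satisfies $T \subseteq \ell$, so $\ell \in \cH_T$ and therefore $\Cl_{\cH}(T) \subseteq \ell$. Hence $\cD(\cA(\sampleP) \setminus \ell) = 0$ for every sample. This is the same observation used in the sufficiency direction of \cref{thm:random-characterization}, but here no Markov step is needed.

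\emph{False negatives.} Here we reuse the realizable VC bound from the proof of \cref{thm:random-characterization}, applied to $\cD_+$ with $d = \vc(\cH) < \infty$. If $n \ge C\bigl(d \ln(1/\eps) + \ln(1/\delta)\bigr)/\eps$, then with probability at least $1-\delta$ every $h \in \cH$ with $T \subseteq h$ satisfies $\cD_+(\ell \setminus h) \le \eps$. The output $\Cl_{\cH}(T)$ is such an $h$, and
\[
    \cD(\ell \setminus h) = \cD(\ell)\,\cD_+(\ell \setminus h) \le \eps.
\]
The total error is therefore at most $\eps$ with probability at least $1-\delta$, which gives proper (deterministic) learnability with sample complexity $O\bigl((d\log(1/\eps) + \log(1/\delta))/\eps\bigr)$.

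No step is a genuine obstacle. The one point to check is that uniform convergence over the version space applies to a hypothesis chosen in a data-dependent way. It does, because the bound holds simultaneously for all consistent $h \in \cH$, and $\Cl_{\cH}(T)$ is such an $h$ precisely because exact exterior separation places it in $\cH$.
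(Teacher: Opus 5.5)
Your proposal is correct and follows the paper's proof. The paper also outputs $\Cl_{\cH}(\sampleP)$ and observes that it is consistent with the sample and contained in $\ell$, so there are no false positives. It then bounds the false negatives with the standard realizable VC bound under $\cD_+$, since the closure is an ERM lying in $\cH$ by exact exterior separation; your explicit treatment of empty or non-realizable samples and of the constant $C$ only tidies up the same argument.
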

\begin{proof}
    Fix any $\eps, \delta \in (0, 1)$ and distribution $\cD$ over $\cX$ and $\ell \in \cH$. Let the learner be defined as $\mathcal{A}(S) \coloneqq \CLOS_{\cH}(S)$. Let $n \geq \frac{d \ln{\nfrac{1}{\eps}} + \ln{\nfrac{1}{\delta}}}{\eps}$ and $\sampleP \sim \cD_+^{n}$.
    
    Observe that $\CLOS_\cH(\sampleP)$ is an ERM for $\cH$. Thus, we can use standard realizable PAC bounds (see, \eg{}, \cite{BlumerEhrenfeuchtHausslerWarmuth1989, shalev2014understanding}), and with probability at least $1  - \delta$,
    \[\cD \inparen{\ell \setminus \CLOS_\cH(\sampleP)} \leq \cD_+ \inparen{\ell \setminus \CLOS_\cH(\sampleP) } \leq \eps\,.\]  
    Moreover, due to realizability $\Domain(\sampleP) \subseteq \ell$, and, hence, $\CLOS_\cH(\sampleP) \subseteq \ell$. Subsequently, $\cD(\CLOS_\cH(\sampleP) \setminus \ell) = 0$. Therefore, with probability $1 - \delta$, $\err_\cD(\CLOS_\cH(\sampleP), \ell) \leq \eps$.  \qedhere{}
\end{proof}

\subsection{Necessary and Sufficient Conditions for Consistency} \label{sec:consistency-characterization}

We now characterize consistency over countable domains in terms of finite exterior separability.
This condition will later be used to separate PAC learnability, non-uniform learnability, and
consistency.

\begin{theorem}[Conditions for Consistency]\label{thm:consistency-characterization}
         The following holds:
         \begin{enumerate}
             \item No concept class $\cH$ which does not satisfy finite exterior separability (\cref{def:fes}) is properly positive-only consistent. 
             \item Every concept class $\cH$ over countable domain that satisfies finite exterior separability (\cref{def:fes}) is properly positive-only consistent.
         \end{enumerate}
\end{theorem}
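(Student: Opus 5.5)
Part 1 is a hard-instance argument in the spirit of Part 2 of the proof of \cref{thm:random-characterization}, with a finite set $F$ playing the role of the single exterior point. Suppose finite exterior separability fails. Then there are a finite nonempty realizable $S$ and a finite $F\subseteq \cX\setminus\Cl_{\cH}(S)$ with $h\cap F\neq\varnothing$ for every $h\in\cH_S$. For each $x\in F$, fix $\ell_x\in\cH_S$ with $x\notin\ell_x$. Let $k=\abs{S}$ and $f=\abs{F}$, and define $\cD_x$ to put mass $\nfrac12$ uniformly on $S$ and mass $\nfrac12$ uniformly on $F\setminus\ell_x$; this set contains $x$, so it is nonempty. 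For every $x$, the positive conditional $(\cD_x)_+$ is $U_S$, so the learner's output law $\nu_n$ at sample size $n$ does not depend on $x$. The error of any $h$ then splits into two cases:
\begin{itemize}
\item if $h\notin\cH_S$, then $h$ misses some $s\in S$, so $\err_{\cD_x}(h,\ell_x)\geq \nfrac{1}{2k}$;
\item if $h\in\cH_S$, then $h$ contains some $y\in F$, and since $y\notin\ell_y$ we get $\err_{\cD_y}(h,\ell_y)\geq \nfrac{1}{2f}$.
\end{itemize}
Summing over $x\in F$ gives
\[
\sum_{x\in F}\mathbb{E}_{h\sim\nu_n}\err_{\cD_x}(h,\ell_x)\geq \min\inbrace{\tfrac{1}{2k},\tfrac{1}{2f}}
\]
for every $n$. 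Hence some $x\in F$ has expected error at least $\nfrac{1}{(2f\max(k,f))}$ infinitely often, so the limit cannot be $0$ for that pair $(\cD_x,\ell_x)$. This proves item 1.

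For item 2, let $\cX=\{x_1,x_2,\dots\}$ be an enumeration of the countable domain. Given a sample $\sampleP$ of size $n$, set $T=\Domain(\sampleP)$ and let $F_n=\{x_1,\dots,x_{g(n)}\}\setminus\Cl_{\cH}(T)$, where $g(n)\to\infty$ is slowly growing. By finite exterior separability, some $h\in\cH_T$ avoids $F_n$; the learner outputs such an $h$, choosing it by a fixed rule. The output is proper and consistent with $\sampleP$, and it has no false positives among $x_1,\dots,x_{g(n)}$, since $\Cl_{\cH}(T)\subseteq\ell$ forces every negative among these points into $F_n$. The false-positive mass is therefore at most $\cD(\{x_i:i>g(n)\})\to 0$ deterministically.

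The main obstacle is the false negatives, because no VC bound is assumed. I would handle them pointwise, using the countable domain. Fix $\eta>0$ and choose $N$ with $\cD(\{x_i:i>N\})<\eta$. Each $x_i\in\ell$ with $i\le N$ and $\cD(x_i)>0$ appears in the sample with probability tending to $1$. Once all such points appear, every $h\in\cH_T$ contains them, so the false-negative mass is at most $\eta$ plus the mass of the tail. Thus the false-negative mass tends to $0$ in probability, and because the error is bounded by $1$, the expected error tends to $0$.
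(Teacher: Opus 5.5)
Your proof is correct. Part 1 is essentially the paper's argument. The only change is that you spread the negative half of $\cD_x$ uniformly over $F\setminus\ell_x$ instead of placing it all on $x$. This weakens the per-instance false-positive penalty from $\nfrac{1}{2}$ to $\nfrac{1}{2f}$, but the averaging over $F$ still goes through; the paper's point mass is simply the leaner choice.

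In Part 2, your learner coincides with the paper's once $g(n)=n$ (any $g(n)\to\infty$ works, so ``slowly growing'' is not needed), and your false-positive bound is the same. The genuine difference is how you handle false negatives:
\begin{itemize}
\item The paper restricts to the finite prefix $X_N$, observes that $\{h\cap X_N : h\in\cH\}$ has VC dimension at most $N$, and invokes the realizable PAC bound under $\cD_+$.
\item You instead wait until every positive-mass atom of $\ell\cap X_N$ has appeared. After that, every hypothesis in $\cH_T$ contains all of them, so the false-negative mass is at most the tail mass. Your ``$\eta$ plus the tail'' counts the tail twice, which is harmless.
\end{itemize}

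Your route is more elementary, and it avoids recasting the prefix problem as a realizable labeled instance. The paper's route gives an explicit sample size $n\gtrsim N\log(1/\eps)/\eps$, which depends on $\cD$ only through $N=n_{\cD}(\eps)$. Your convergence speed is instead governed by the smallest positive atom of $\ell$ inside $X_N$, which can be arbitrarily small. Since consistency requires no rate, both arguments suffice.
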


\begin{proof} We divide the proof into two parts corresponding to the two claims.
    \smallskip
    
    \noindent\textbf{Part 1 (consistency implies FES).}\quad Consider any $S, F$ that violate finite exterior separability in \cref{def:fes}.
    For any $x \in F$ define $\cD_x$ as
    \[
    \cD_x(x) = \frac{1}{2}, \qquad \forall x' \in S: \cD_x(x') = \frac{1}{2\abs{S}}\,,
    \]
    and zero elsewhere. Also, let $\ell_x$ be any concept containing $S$ that does not contain $x$; such a concept exists since $x \notin \CLOS_{\cH}(S)$. 
    
    Since $S,F$ violate finite exterior separability, every $h \in \cH$ satisfies at least one of the
    following two conditions: either (1) $x_1 \notin h$ for some $x_1 \in S$, or (2) $x_2 \in h$ for some
    $x_2 \in F$.
    In case (1), 
    \[
    \sum_{x \in F} \err_{\cD_x} (h, \ell_x) \geq \frac{\abs{F}}{2\abs{S}}\,.
    \]
    \noindent In case (2), 
    \[
    \sum_{x \in F} \err_{\cD_x} (h, \ell_x) \geq \err_{\cD_{x_2}}(h, \ell_{x_2}) \geq \frac{1}{2}\,.
    \]
    \noindent Therefore, for either case
    \[
    \frac{\sum_{x \in F} \err_{\cD_x} (h, \ell_x)}{\abs{F}} \geq \frac{1}{2 \max(\abs{S}, \abs{F})}\,.
    \]
    \noindent This implies that, for every proper positive-only learner $\cA$ and every $n \in \Nat$,
    \begin{equation} \label{eq:consistency-implies-FES-1}
        \frac{\sum_{x \in F} \cE{\sampleP \sim U_{S}^{n}, \cA}{ \err_{\cD_x} (\cA(\sampleP), \ell_x)}}{\abs{F}} \geq \frac{1}{2 \max(\abs{S}, \abs{F})}\,.
    \end{equation}

    \noindent Now for the sake of contradiction, assume that $\cH$ is properly positive-only consistent. Therefore, there exists a proper learner $\cA$ such that for every $x \in F$,
    \[
        \lim_{n\to\infty}
        \cE{\sampleP \sim U_S^n,\cA}{
            \err_{\cD_x}(\cA(\sampleP),\ell_x)
        }
        =
        0\,.
    \]
    \noindent Since $F$ is finite, this would imply
    \[
        \lim_{n\to\infty}
        \frac{
            \sum_{x \in F}
            \cE{\sampleP \sim U_S^n,\cA}{
                \err_{\cD_x}(\cA(\sampleP),\ell_x)
            }
        }{\abs{F}}
        =
        0\,,
    \]
    contradicting \cref{eq:consistency-implies-FES-1}. Therefore, $\cH$ is not properly positive-only consistent, which completes the proof of the first part.

    \paragraph{Part 2 (FES over countable domain implies consistency).} Fix an enumeration $\cX=\inbrace{x_1,x_2,\dots}$ and write $X_m=\inbrace{x_1,\dots,x_m}$. Define the learner $\cA$ as follows. Given a realizable sample $\sampleP$, choose $\cA(\sampleP)\in\cH_{\Domain(\sampleP)}$ such that $\cA(\sampleP)\cap\inparen{X_{|\sampleP|}\setminus \CLOS_{\cH}(\sampleP)}=\varnothing$. Such a concept exists by finite exterior separability, applied to
    $S=\Domain(\sampleP)$ and $F=X_{|\sampleP|}\setminus \CLOS_{\cH}(\sampleP)$.
    
    Consider any distribution $\cD$ over $\cX$ and any $\ell \in \cH$. Since $\cX$ is countable,
    there exists $n_{\cD}(\eps)\in\Nat$ such that
    \[
        \cD\inparen{\cX\setminus X_{n_{\cD}(\eps)}} \leq \frac{\eps}{2}\,.
    \]
    \noindent Now fix any $n \geq n_{\cD}(\eps)$ and any multi-set $\sampleP \in \cX^n$. By construction,
    $\cA(\sampleP)$ has no false positives on
    $X_{n_{\cD}(\eps)} \setminus \CLOS_{\cH}(\sampleP)$, and since
    $\CLOS_{\cH}(\sampleP)\subseteq \ell$, it has no false positives on $X_{n_{\cD}(\eps)}$.
    Therefore,
    \begin{equation} \label{eq:const-char-decompose}
        \begin{aligned}
        \err_{\cD}(\cA(\sampleP), \ell) & \leq \cD(\ell \cap X_{n_{\cD}(\eps)} \setminus \cA(\sampleP)) + \cD(\cX \setminus X_{n_{\cD}(\eps)}) \\
        & \leq \cD(\ell \cap X_{n_{\cD}(\eps)} \setminus \cA(\sampleP)) + \frac{\eps}{2}\,.
    \end{aligned}
    \end{equation}
    \noindent Let $\cH' = \inbrace{h  \cap X_{n_{\cD}(\eps)} \mid h \in \cH}$. By definition $\vc(\cH') \leq n_{\cD}(\eps)$. Applying the standard realizable PAC bound to
    $\cH'$ and distribution $\cD_+$, for $n \gtrsim \frac{1}{\eps}\cdot \inparen{n_{\cD}(\eps) \ln{\nfrac{1}{\eps}}}$,
    \[
    \cE{\sampleP \sim \cD_+^{n}}{\cD(\ell \cap X_{n_{\cD}(\eps)} \setminus \cA(\sampleP))} \leq \eps/2\,.
    \]
    \noindent Combining with \eqref{eq:const-char-decompose} this implies $\cE{\sampleP \sim \cD_+^{n}}{\err_{\cD}(\cA(\sampleP), \ell)} \leq \eps.$
    Therefore,
    \[
    \lim_{n \to \infty} \cE{\sampleP \sim \cD_+^{n}}{\err_{\cD}(\cA(\sampleP), \ell)} = 0\,.\qedhere{}
    \]
\end{proof}

\noindent Recall that in \cref{thm:separation-improper} we presented a concept class that is improperly positive-only learnable, but does not satisfy finite exterior separability. By \cref{thm:consistency-characterization}, we immediately obtain the following corollary.

\begin{corollary}
    \label{cor:separation-improper-consistency}
            There exists a concept class that is improperly positive-only learnable, but not properly positive-only consistent. 
\end{corollary}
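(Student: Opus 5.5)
The corollary follows by combining two results already in hand, so the plan is to chain them.

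First, I will take the class from the proof of \cref{thm:separation-improper}: $\cX=\{0,1,2\}$ and $\cH=\inbrace{\{0,1\},\{0,2\}}$. The facts needed are that $\cH_{\cap}=\inbrace{\{0\},\{0,1\},\{0,2\}}$ and $\vc(\cH_{\cap})=1<\infty$. By \cref{thm:improper-positive-only}, $\cH$ is then improperly learnable from positive-only samples, which gives the positive half of the statement.

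Second, I will show that $\cH$ violates finite exterior separability (\cref{def:fes}), using the witness pair $S=\{0\}$ and $F=\{1,2\}$. We have $\Cl_{\cH}(S)=\{0,1\}\cap\{0,2\}=\{0\}$, so $F$ is disjoint from the closure. However, both members of $\cH_S=\cH$ meet $F$, so no $h\in\cH_S$ avoids $F$.

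Third, I will apply Part 1 of \cref{thm:consistency-characterization}: any class failing finite exterior separability is not properly positive-only consistent. That part holds for arbitrary domains, so countability plays no role here, although $\cX$ is finite anyway.

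There is no real obstacle. The only care needed is in the second step, namely confirming that $F$ avoids $\Cl_{\cH}(S)$ and not merely $S$, since the paper's earlier remark writes $F\cap S$. Here the two coincide because $\Cl_{\cH}(\{0\})=\{0\}$.
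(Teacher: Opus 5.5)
Your proposal is correct and matches the paper's argument: the paper obtains the corollary by taking the class from \cref{thm:separation-improper}, which is improperly learnable but fails finite exterior separability, and then applying Part~1 of \cref{thm:consistency-characterization}. One minor inaccuracy, which the paper shares and which does not affect the conclusion: under the stated convention that the empty intersection is $\cX$, $\cH_{\cap}$ also contains $\{0,1,2\}$ and therefore shatters $\{1,2\}$, so $\vc(\cH_{\cap})=2$ rather than $1$; only finiteness is used, so the argument goes through.
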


\subsection{Proof of \cref{thm:ERM-separation}} \label{sec:ERM-separation}
In this section, we prove \cref{thm:ERM-separation}, which we restate below.
\SepERM*
\noindent We begin with an anti-concentration lemma, \cref{lem:mode-n}, which allows us to bound the
largest probability of observing any fixed multi-set under $Q^n$ in terms of the largest atom of $Q$.
This lemma follows from a theorem of \citet{ushakov1986upper}.

\begin{theorem}[Theorem 3 of \citep{ushakov1986upper}] \label{thm:Kolmogorov-Rogozin}
Let $X_1, \dots, X_n$ be independent random variables over domain $\Omega$. For each $i \in [n]$, define
\[
p_i  \coloneqq  \sup_{\omega \in \Omega} \Pr[X_i = \omega]\,.
\]
\noindent Then there exists a universal constant $C > 1$ such that 
\[
\sup_{\omega \in \Omega} \Pr\!\left[\sum_{i=1}^n X_i = \omega \right]
\le
\sqrt{\frac{C}{\sum_{i=1}^n (1 - p_i)}}\,.
\]
\end{theorem}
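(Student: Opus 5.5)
The plan is to reduce to the Erdős--Littlewood--Offord inequality by decomposing each summand into a mixture that contains a symmetric two-point component. First we fix the setting. The sum $\sum_i X_i$ must live in a torsion-free abelian group. For instance, $\Omega=\mathbb{R}^d$ or $\mathbb{Z}^d$, which covers the count vectors used in \cref{lem:mode-n}. This restriction is necessary: over $\mathbb{Z}/2$, i.i.d.\ uniform bits have $p_i=1/2$ while the sum has an atom of $1/2$. We also note that when $\sum_i(1-p_i)\le C$ the right-hand side is at least $1$ and there is nothing to prove. So we may assume $\sigma\coloneqq\sum_i(1-p_i)$ is large.

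\textbf{Step 1 (two-point decomposition).} For each $i$ we will show the following. There is a $q_i\ge c(1-p_i)$, for a universal $c>0$, such that the law of $X_i$ is a mixture of two parts. With weight $q_i$ it is the law of a random pair $\{A_i,B_i\}$ with $A_i\ne B_i$, followed by a fair coin choosing one of them. With weight $1-q_i$ it is some arbitrary law $R_i$. For discrete laws we would do this greedily: repeatedly match mass of the currently largest atom against mass of other atoms. Since no atom exceeds $p_i$, a constant fraction of the off-mode mass $1-p_i$ gets matched. For non-atomic parts we would instead split along a generic linear functional at a median and pair the lower and upper halves, for example via a quantile coupling. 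This step needs some care so that the constant $c$ is uniform, but it is elementary.

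\textbf{Step 2 (conditioning).} We realize each $X_i$ through its mixture. That is, we draw independent indicators $\xi_i\sim\mathrm{Bern}(q_i)$, the pairs $(A_i,B_i)$, the remainders from $R_i$, and independent signs. Then we condition on everything except the signs $\varepsilon_i$ for $i\in I\coloneqq\{i:\xi_i=1\}$. Conditionally, $\sum_i X_i=z+\sum_{i\in I}\varepsilon_i d_i$, where $z$ is fixed, $d_i=B_i-A_i\neq 0$, and the $\varepsilon_i$ are uniform on $\{0,1\}$. Next we project to a line via a linear functional that is nonzero on all of the finitely many $d_i$; torsion-freeness lets us embed into a real vector space. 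After this projection, the Erdős form of Littlewood--Offord, proved via Sperner's theorem, gives the conditional atom bound $\binom{k}{\lfloor k/2\rfloor}2^{-k}\le 1/\sqrt{k}$, where $k=|I|$.

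\textbf{Step 3 (averaging).} We have $\mathbb{E}|I|=\sum_i q_i\ge c\sigma$. By a Chernoff bound, $|I|\ge c\sigma/2$ except with probability $e^{-c\sigma/8}$, which is at most $O(\sigma^{-1/2})$. Averaging the conditional bound then gives
\[
\sup_\omega\Pr\Big[\sum_i X_i=\omega\Big]\le \sqrt{2/(c\sigma)}+e^{-c\sigma/8}\le\sqrt{C/\sigma}
\]
for a suitable universal $C$.

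I expect the main obstacle to be Step 1 for general, possibly non-discrete, distributions: obtaining a uniform constant $c$ with $q_i\ge c(1-p_i)$ while ensuring the two paired points are genuinely distinct. For the application in \cref{lem:mode-n}, all variables are discrete (indicator vectors in $\mathbb{Z}^k$), so I would first carry out the greedy discrete pairing, which suffices there. I would treat the general case only by the median-splitting remark.
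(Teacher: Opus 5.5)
The paper gives no proof of this statement. It imports it as Theorem~3 of \citet{ushakov1986upper} and uses it only inside \cref{lem:mode-n}, for the histogram $N=\sum_i e_{x_i}$. Your argument is therefore a genuinely different, self-contained route, and it is correct. It decomposes each summand into a symmetric two-point part, applies the Erd\H{o}s--Sperner form of Littlewood--Offord conditionally, and finishes with a Chernoff bound on the number of active signs.

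What the citation buys is brevity, along with the more general concentration-function statements of that literature. What your proof buys is an elementary argument with explicit constants that works in any torsion-free abelian group. In particular it covers $\bigoplus_{\omega\in\Omega}\mathbb{Z}$ with $\Omega$ infinite, which is exactly where the paper applies it, because you choose the projection onto a line only after conditioning on finitely many differences. Your caveat is also right: with an arbitrary domain $\Omega$ the statement as written is false (your $\mathbb{Z}/2$ example), although this does not affect the paper's use.

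The step you flag as the main obstacle is actually a one-liner, with $c=1$ and no distinction between discrete and non-discrete laws. Let $X_i'$ be an independent copy of $X_i$, let $\varepsilon_i$ be an independent fair bit, and set $Z_i\coloneqq X_i+\varepsilon_i(X_i'-X_i)$. Then:
\begin{itemize}
\item the $Z_i$ are independent;
\item each $Z_i$ has the law of $X_i$;
\item $\Pr[X_i\neq X_i']=1-\sum_{\omega}\Pr[X_i=\omega]^2\ge 1-p_i$.
\end{itemize}
Conditioning on all pairs $(X_i,X_i')$ puts you directly in your Step~2, with $z=\sum_i X_i$, $d_i=X_i'-X_i$, $I=\{i:X_i\neq X_i'\}$ and $\mathbb{E}|I|\ge\sigma$. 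Neither the greedy matching nor the median splitting is needed.
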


\begin{lemma} \label{lem:mode-n}
    Let $\gamma \in (0,1)$, let $n > 0$, and let $Q$ be a distribution over a domain $\Omega$ satisfying $Q(\omega) \leq 1 - \gamma$. Then, for a universal constant $C > 1$ we have 
    \[
    \sup_{S \in \Omega^n} \Pr_{S' \sim Q^{n}}[S = S'] \leq \sqrt{\frac{C}{ n \gamma }}\,.
    \]
\end{lemma}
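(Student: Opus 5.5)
The plan is to reduce the statement to \cref{thm:Kolmogorov-Rogozin}. The key idea is to encode each sample point as a real number so that the \emph{sum} of the encodings determines the multiset of samples, while each encoded point has exactly the same atoms as $Q$. Since $S' \sim Q^n$ is compared with $S$ as a multiset (a sequence only makes the event smaller), it suffices to bound $\Pr[\text{the multiset of } S' \text{ equals } S]$ for a fixed multiset $S$.

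\textbf{Step 1: reduce to atoms.} First I would handle the case where $S$ contains a point $\omega$ with $Q(\omega)=0$: then the probability is $0$ and there is nothing to prove. So assume every element of $S$ is an atom of $Q$. The set $A$ of atoms of $Q$ is countable, so I can choose an injection $\alpha\colon A\to\R$ whose image $\{\alpha_a\}_{a\in A}$ is linearly independent over $\mathbb{Q}$; for instance take $\alpha_{a_k}=\sqrt{p_k}$ with $p_k$ the $k$-th prime.

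\textbf{Step 2: define the encoded variables.} For $X_1,\dots,X_n$ i.i.d.\ $\sim Q$, I define independent real random variables $Y_i$ by
\[
Y_i=\begin{cases}\alpha_{X_i} & \text{if } X_i\in A,\\ 1+U_i & \text{otherwise,}\end{cases}
\]
where $U_i \sim \Unif[0,1]$ are fresh independent variables. The only atoms of $Y_i$ are the values $\alpha_a$, and each has mass $Q(a)$. The non-atomic part of $X_i$ is sent to a continuous law, so it creates no new atom. Hence $p_i\coloneqq\sup_y\Pr[Y_i=y]=\sup_a Q(a)\le 1-\gamma$, and therefore $\sum_i(1-p_i)\ge n\gamma$.

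\textbf{Step 3: the sum determines the multiset.} Let $c_a$ be the multiplicity of $a$ in $S$, and set $\sigma\coloneqq\sum_a c_a\alpha_a$. If the multiset of $S'$ equals $S$, then every $X_i$ is an atom and $\sum_i Y_i=\sigma$. So $\Pr[\text{the multiset of } S' \text{ equals } S]\le\Pr[\sum_i Y_i=\sigma]$. Applying \cref{thm:Kolmogorov-Rogozin} with $\Omega=\R$ gives the bound $\sqrt{C/\sum_i(1-p_i)}\le\sqrt{C/(n\gamma)}$. Taking the supremum over $S$ finishes the proof.

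In this direction I only need the implication ``multiset equal $\Rightarrow$ sums equal,'' so linear independence is not even essential here. It matters only for the converse, which I do not use.

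\textbf{Main obstacle.} The main obstacle is making sure the encoding neither merges distinct atoms nor turns the non-atomic mass into a new large atom; merging would inflate $p_i$ and break the hypothesis $p_i\le 1-\gamma$. Sending non-atoms to a continuous law, and atoms to distinct reals, is designed to address exactly this. A secondary point is measurability. For a general $\Omega$, I would treat $\{X_i\in A\}$ as measurable, which holds since $A$ is countable and singletons of positive mass are measurable under the implicit assumption that $Q$ is defined on them.
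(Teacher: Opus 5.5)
Your proof is correct. It uses the same reduction as the paper, an additive encoding of the samples followed by \cref{thm:Kolmogorov-Rogozin}, but with a different encoding, and that changes what you need from the theorem.

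The paper sets $Y_i = e_{x_i}$, a unit vector. Then $\sum_i Y_i$ is \emph{exactly} the histogram $N$, and the event ``$S'$ equals $S$ as a multiset'' is exactly $\{N = m(S)\}$. It then applies the anti-concentration bound to random vectors, noting that the maximal atom of $e_{x_i}$ is $\sup_\omega Q(\omega)$.

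You instead map everything to the real line. Atoms of $Q$ go to distinct reals, and the diffuse part goes to a continuous law, so no new atom appears. You use only the one-way inclusion ``multiset equality $\Rightarrow$ equal sums,'' which is all an upper bound requires. As you note, this makes linear independence over $\mathbb{Q}$ unnecessary; injectivity of the encoding is what keeps the maximal atom at $\sup_a Q(a)$.

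The paper's encoding gives an exact identity and a one-line argument. The price is applying the theorem to vectors in $\R^{\Omega}$, which may be infinite-dimensional, and tacitly assuming $Q$ is discrete. That assumption is harmless in the paper's application, where $\Omega=\Nat$.

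Your encoding needs only the classical scalar Kolmogorov--Rogozin inequality in its atom form. It also proves the lemma for arbitrary $Q$, including distributions with a non-atomic part.
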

\begin{proof}[Proof of \cref{lem:mode-n}]
    Let $x_1,\dots,x_n$ be i.i.d.\ from $Q$, and let $N = (N_\omega)_{\omega \in \Omega}$
be the random histogram, where $N_\omega = \sum_{i=1}^n \mathds{1}\{x_i = \omega\}.$
Then $N$ completely determines the sampled multiset. So for any fixed multiset $S \in \Omega^n$, if $m(S)$ denotes its multiplicity vector, then
\[
\Pr_{S' \sim Q^n}[S' = S \text{ as multisets}]
=
\Pr[N = m(S)]\,.
\]

\noindent Now define, for each $i$, $Y_i = e_{x_i}$, where $e_\omega$ is the unit vector at coordinate $\omega$. Then
$
N = \sum_{i=1}^n Y_i\,.
$
So the probability of any fixed multiset is exactly a point mass of the sum of independent random vectors $Y_1,\dots,Y_n$.

For each $i$, the distribution of $Y_i$ has maximal atom
\[
\sup_{y} \Pr[Y_i = y]
=
\sup_{\omega \in \Omega} Q(\omega)
\leq 1 - \gamma\,.
\]
\noindent Hence
\[
1 - \sup_y \Pr[Y_i = y] \geq \gamma
\qquad \text{for every } i\,.
\]
\noindent Now apply Theorem~\ref{thm:Kolmogorov-Rogozin}, for independent random vectors $Y_1,\dots,Y_n$,
\[
\sup_x \Pr\!\left[\sum_{i=1}^n Y_i = x\right]
\le
\sqrt{\frac{C}{\sum_{i=1}^n \bigl(1 - \sup_y \Pr[Y_i = y]\bigr)}}\,.
\]
\noindent Therefore,
\[
\sup_x \Pr[N = x]
\le
\sqrt{\frac{C}{\sum_{i=1}^n \gamma}}
=
\sqrt{\frac{C}{n\gamma}}\,.\qedhere{}
\]
\end{proof}

\noindent The key idea of the proof is as follows. Consider the class $\cH_{\mathrm{spr}}$ used in the separation (see \cref{eq:ERM-separation-conceptclass}). $\cH_{\mathrm{spr}}$ is almost closed under intersections: the only positive samples
$\sampleP \in \cX^n$ for which $\CLOS_{\cH_{\mathrm{spr}}}(\sampleP) \notin \cH_{\mathrm{spr}}$ are those whose domain is exactly
$\{1,2\}$. Thus, all other samples can be handled in a straightforward way by outputting the corresponding
closure.

The main difficulty is therefore the case in which the learner observes a sample supported only on
$\{1,2\}$.
This event has non-negligible probability only when the target is of the form
$\{1,2,i^\star\}$, for some $i^\star \in \Nat\setminus\{1,2\}$, and the positive mass of
$i^\star$ is small.
In this regime, the learner uses the empirical imbalance between the number of $1$'s and $2$'s to
decide what to output: it returns $\{1\}$ if the number of $2$'s is very small, returns $\{2\}$ if
the number of $1$'s is very small, and otherwise returns a hypothesis of the form
$\{1,2,o(\sampleP)\}$, where $o(\sampleP)$ is determined by the observed multiplicity of $1$'s.

The analysis then splits according to the masses of $1$ and $2$ under $\cD_+$.
If one of these masses is small, the learner outputs the corresponding
singleton with high probability, and the resulting error is small because the omitted point has small marginal mass. On the other hand, if both $\cD_+(1)$ and $\cD_+(2)$ are at least on the order of $1/\sqrt n$, then \cref{lem:mode-n} implies that the probability of observing any fixed multi-set is small.
Consequently, the index $o(\sampleP)$ selected by the learner cannot have large marginal probability with significant chance, so the error of $\{1,2,o(\sampleP)\}$ is small. This proves that the specially designed deterministic learner succeeds.

The second part of the proof shows that no deterministic proper ERM learner can learn this class.

Now, we are ready to prove \cref{thm:ERM-separation}.
\begin{proof}[Proof of \cref{thm:ERM-separation}]
Recall the concept class defined in \cref{prop:ees-strict}
\begin{equation} \label{eq:ERM-separation-conceptclass}
   \cH_{\mathrm{spr}} \coloneqq  \inbrace{\inbrace{1},\inbrace{2}} \cup \inbrace{\inbrace{1,2,i} : i \geq 3}\,.
\end{equation}

\paragraph{Part 1 (Learnability by deterministic learners).} We first show that there exists a deterministic proper learner $\cA$ such that $\cH_{\mathrm{spr}}$ is proper positive-only learnable by $\cA$.

For every $n \geq 9$ and multi-set $\sampleP \in \cX^{n}$, let $o(\sampleP)$ denote the number of repetitions of $1$ in $\sampleP$. Consider the learning algorithm $\cA$ defined as follows
\begin{equation} 
    \cA(\sampleP) = \begin{cases}
        \{1, 2, i\} & \text{if } i \in \sampleP \setminus \{1, 2\} \\
        \{1\} & o(\sampleP) \geq n - \sqrt{n}\\
        \{2\} & o(\sampleP) \leq \sqrt{n}\\
        \{1, 2, o(\sampleP) \} & \text{o.w.}
    \end{cases}
\end{equation}
\noindent For samples of size less than $9$, define $\cA$ arbitrarily as any concept in $\cH_{\mathrm{spr}}$. Note that $\cA$ is proper and deterministic by definition. Consider any $\eps, \delta \in (0, 1)$, distribution $\cD$ over $\cX$, $\ell \in \cH_{\mathrm{spr}}$ and $n \geq \left(\nfrac{16C}{\eps \delta}\right)^4$ where $C$ is the universal constant in \cref{lem:mode-n}. We prove that
    \[
   \Pr_{\sampleP \sim \cD_+^{n}} \left[\err_{\cD}(\mathcal{A}\left(\sampleP\right), \ell) \geq \eps \right] \leq \delta\,.
    \]
    \noindent We prove the claim for 3 distinct cases of $\cD$ and $\ell$.

    \paragraph{Case 1:} $\ell=\{1\}$ or $\ell=\{2\}$. Then the learner receives a sample consisting only of the corresponding point. If $\ell=\{1\}$, then $o(\sampleP)=n$ and $\cA(\sampleP)=\{1\}$; if $\ell=\{2\}$, then $o(\sampleP)=0$ and $\cA(\sampleP)=\{2\}$. In both cases, the error is zero.

    \paragraph{Case 2:} $\ell = \{1, 2, i^\star\}$ and $\cD_+ (i^\star) > \nfrac{\eps}{4}$ for some $i^\star \in \Nat \setminus \{1, 2\}$. Since $n > \nfrac{4\ln(1 / \delta)}{\eps}$, we have
    \[
        \Pr[ i^\star \notin \sampleP] < (1 - \eps/4)^{n} \leq e^{- {n} \eps / 4} \leq \delta\,.
    \]
    \noindent Thus, with probability at least $1-\delta$, the sample contains $i^\star$, and $\cA$ outputs $\{1,2,i^\star\}$. On this event the error is zero.

    \paragraph{Case 3:} $\ell = \{1, 2, i^\star\}$ and $\cD_+ (i^\star) \leq \nfrac{\eps}{4}$ for some $i^\star \in \Nat \setminus \{1, 2\}$. Since $\ell = \{1, 2, i^*\}$, the false negative rate of $\cA$ is
    \begin{equation} \label{eq:thm:proper-intersect-closed-learnable-fn}
    \begin{aligned}
         \cD(\ell \setminus \mathcal{A}\left(\sampleP\right))  & = \cD(1) \, \mathds{1}[\mathcal{A}(\sampleP)(1) = 0] + \cD(2) \, \mathds{1}[\mathcal{A}(\sampleP)(2) = 0]  + \cD(i^*) \mathds{1}[\mathcal{A}(\sampleP)(i^\star) = 0] \\
        & \leq \cD_+(1) \, \mathds{1}[\mathcal{A}(\sampleP)(1) = 0] + \cD_+(2) \, \mathds{1}[\mathcal{A}(\sampleP)(2) = 0] + \eps / 4\,.
    \end{aligned}
    \end{equation}
    
    \noindent Then, we divide case 3 into two cases depending on values $\cD_+(1)$ and $\cD_+(2)$.

    \paragraph{Case 3.1:} $\cD_+(1)\leq \nfrac{1}{2 \sqrt {n}}$ or $\cD_+(2) \leq \nfrac{1}{2 \sqrt {n}}$. Without loss of generality assume $\cD_+(1)\leq \nfrac{1}{2 \sqrt {n}}$. Then, using multiplicative Chernoff bound, since ${n} > \left(\frac{6 \ln{\nfrac{1}{\delta}}}{\eps}\right)^2$ we have
    \[
    \Pr[o(\sampleP) > \sqrt{{n}}] \leq e^{-\sfrac{\sqrt{{n}}}{6}} \leq \delta\,.
    \]
    \noindent Thus, with probability at least $1 - \delta$, $\mathcal{A}(\sampleP)$ is either $\{2\}$ or $\{1, 2, i^\star\}$. In the latter case the error is zero. In the former case, \cref{eq:thm:proper-intersect-closed-learnable-fn} implies \[\err_\cD(\mathcal{A}(\sampleP), \ell) = \cD(\ell \setminus \mathcal{A}(\sampleP)) \leq \cD_+(1)+ \frac{\eps}{4} \leq \frac{1}{2 \sqrt{{n}}} + \frac{\eps}{4}.\] 
    Therefore, since ${n} > \left(\nfrac{2}{3 \eps}\right)^2$ we have $\err_\cD(\mathcal{A}(\sampleP), \ell) \leq \eps$.

    \paragraph{Case 3.2:} $\cD_+(1), \cD_+(2) > \nfrac{1}{2 \sqrt {n}}$. First suppose $\cD_+(1)\leq \nfrac{2}{\sqrt {n}}$.  Using ${n} \geq \left(\nfrac{8}{\eps}\right)^2$, we get
    \begin{equation} \label{eq:thm:proper-intersect-closed-learnable-bound1}
        \cD_+(1)\mathds{1}[\mathcal{A}(\sampleP)(1) = 0] \leq \frac{2}{\sqrt {n}} \leq \frac{\eps}{4}\,.
    \end{equation}
     \noindent Otherwise, $\cD_+(1)>\nfrac{2}{\sqrt n}$. Since ${n} \geq \left(\frac{4 \ln(4 / \delta)}{\eps}\right)^2$, another multiplicative Chernoff bound gives
    \[
   \Pr[o(\sampleP) \leq \sqrt{{n}}] \leq e^{-\sfrac{\sqrt{{n}}}{4}} \leq \frac{\delta}{4}\,.
    \]
    \noindent Thus, with probability at least $1 - \nfrac\delta4$ we have $\mathcal{A}(\sampleP)(1) = 1$, and subsequently 
    \[
    \cD_+(1)\mathds{1}[\mathcal{A}(\sampleP)(1) = 0] = 0\,.
    \] 
    \noindent Combining this with \eqref{eq:thm:proper-intersect-closed-learnable-bound1}, shows that, regardless of the value of $\cD_+(1)$, with probability $1-\nfrac\delta4$,
 \[\cD_+(1)\mathds{1}[\mathcal{A}(\sampleP)(1) = 0] \leq \eps / 4\,.\] 
    Similarly, with probability at least $1-\nfrac\delta4$, $ \cD_+(2) \mathds{1}[\mathcal{A}(\sampleP)(2) = 0] \leq \eps / 4$.

    Combining these two bounds with a union bound and \eqref{eq:thm:proper-intersect-closed-learnable-fn}, we get that, with probability at least $1-\nfrac\delta2$, the false-negative rate is bounded by
    \begin{equation} \label{eq:thm:proper-intersect-closed-learnable-bound-fn}
        \cD(\ell \setminus \mathcal{A}(\sampleP)) \leq \frac{3\eps}{4}\,.
    \end{equation}
    \noindent It remains to bound the false-positive rate. For every $k \in \{0, 1,  \ldots , n\}$, let $S_k$ denote the multi-set consisting of $k$ copies of $1$ and $n-k$ copies of $2$. By the definition of $\cA$, since $n\geq 9$,
    \begin{equation} \label{eq:thm:proper-intersect-closed-learnable-fp}
    \begin{aligned}
    \cD(\cA(\sampleP) \setminus \ell) = \sum_{k \in \Nat \setminus \ell} \cD(k) \mathds{1} \insquare{\cA(\sampleP)\inparen{k} = 1}  = \sum_{\sqrt{n} \leq k \leq n - \sqrt{n}, k \neq i^\star} \cD(k) \mathds{1} \insquare{\sampleP = S_k}\,.
    \end{aligned}
    \end{equation}
    
    \noindent Since $\cD_+(1),\cD_+(2)>\nfrac{1}{2\sqrt n}$, we have
    $\sup_{k\in\Nat}\cD_+(k)<1-\nfrac{1}{2\sqrt n}$. By \cref{lem:mode-n}, and since
    $n\geq \left(\nfrac{16C}{\eps\delta}\right)^4$, for every multi-set $S$ we have
    \[
        \Pr_{\sampleP\sim\cD_+^n}[\sampleP=S]
        \leq
        \sqrt{\frac{2C}{\sqrt n}}
        \leq
        \frac{\eps\delta}{8}\,.
    \]
    \noindent Combining this with \eqref{eq:thm:proper-intersect-closed-learnable-fp}, we get
    \[
    \begin{aligned}
        \E{\cD(\cA(\sampleP)\setminus\ell)}
        &\leq
        \sum_{\sqrt n \leq k \leq n-\sqrt n,\ k\neq i^\star}
        \cD(k)\Pr[\sampleP=S_k]
        \leq
        \frac{\eps\delta}{8}
        \sum_{\sqrt n \leq k \leq n-\sqrt n,\ k\neq i^\star}
        \cD(k)
        \leq
        \frac{\eps\delta}{8}\,.
    \end{aligned}
    \]
    \noindent Therefore, by Markov's inequality,
    $\Pr[\cD(\cA(\sampleP)\setminus\ell)>\eps/4]\leq \delta/2.$
    Combining this with \cref{eq:thm:proper-intersect-closed-learnable-bound-fn} shows that
    $\Pr_{\sampleP\sim\cD_+^n}[\err_{\cD}(\cA(\sampleP),\ell)>\eps]\leq \delta$, completing the proof of learnability by deterministic learners.

    \paragraph{Part 2 (Non-learnability by deterministic ERM learners).} We then prove that $\cH_{\mathrm{spr}}$ is not properly positive-only learnable by any deterministic ERM proper learner $\cA$. Fix any $n \geq 2$. For any $a \in \Nat \setminus \{1,2\}$, define the pair $(\cD_a,\ell_a)$ as follows:
    \[
        \ell_a=\inbrace{1,2,a+1}, \quad \cD_a(1)=\frac{1}{2n}, \quad \cD_a(2)=\frac{n-1}{2n}, \quad \cD_a(a)=\frac{1}{2}\,,
    \]
    where $\cD_a$ is zero elsewhere. Let $S$ be a multi-set with exactly one copy of $1$ and $n-1$ copies of $2$. Since $\cA$ is an ERM learner, we have $\Domain(S)\subseteq \cA(S)$. Thus, $\cA(S)=\{1,2,a^\star\}$ for some $a^\star\in\Nat\setminus\{1,2\}$. Furthermore, under the target $\ell_{a^\star}=\inbrace{1,2,a^\star+1}$, the point $a^\star$ is negative and has $\cD_{a^\star}$-mass $1/2$. Hence,
    \[
        \err_{\cD_{a^\star}}\inparen{\cA(S),\ell_{a^\star}}=\frac{1}{2}\,.
    \]
    \noindent In addition, under $\cD_{+,a^\star}$, the mass of $1$ is $1/n$ and the mass of $2$ is $(n-1)/n$. Therefore,
    \[
        \Pr_{\sampleP\sim \cD_{+,a^\star}^n}[\sampleP=S] \geq n\cdot \frac{1}{n}\inparen{\frac{n-1}{n}}^{n-1} > \inparen{\frac{n-1}{n}}^n \geq \frac{1}{4}\,.
    \]
    \noindent Consequently,
    \[
        \Pr_{\sampleP \sim \cD^{n}_{+,a^\star}}\insquare{\err_{\cD_{a^\star}}\inparen{\cA(\sampleP),\ell_{a^\star}} \geq \frac{1}{2}} \geq \frac{1}{4}\,. 
    \]
    \noindent Thus, $\cH_{\mathrm{spr}}$ is not properly positive-only learnable by $\cA$. \qedhere{}
    \end{proof}

\subsection{Proof of \cref{thm:nonuniform-separation}} \label{sec:nonuniform-separation}
    In this section, we prove \cref{thm:nonuniform-separation}, which we restate below.

\SepNonUnif*
\begin{proof}
We divide the proof into two parts corresponding to the two claims.

\paragraph{Part 1 (Separation of consistency and non-uniform learnability).} Consider the concept class
\[
    \cH_{\mathrm{bin}} \coloneqq \{ h_a \mid a \geq 3 \}\,,
    \qquad
    h_a \coloneqq \{1,2,a,a+2,a+4,\ldots\}\,.
\]
\noindent Observe that $\vc(\cH_{\mathrm{bin}})=1$. We first show that
$\cH_{\mathrm{bin}}$ satisfies the finite exterior separability property.
Consider any finite-sized $S \subseteq \Nat$ that is realized by
$\cH_{\mathrm{bin}}$, and any
$F \subseteq \Nat \setminus \CLOS_{\cH_{\mathrm{bin}}}(S)$.
We consider two cases for $S$, and in each case we show that there exists
$h \in \cH_{\mathrm{bin}}$ such that
$\Domain(S) \subseteq h$, but $F \cap h = \varnothing$.

\emph{Case 1:} There exists some $a \geq 3$ such that
$a \in \Domain(S)$. Denote
\[
    t = \min(\Domain(S)\setminus \{1,2\})\,.
\]
\noindent It is easy to see that
\(
    h_t = \CLOS_{\cH_{\mathrm{bin}}}(S).
\)
Thus, since $F \subseteq \Nat \setminus \CLOS_{\cH_{\mathrm{bin}}}(S)$, we have
$h_t \cap F = \varnothing$. Moreover, by the definition of the closure,
$\Domain(S) \subseteq h_t$.

\emph{Case 2:} $\Domain(S) \subseteq \{1,2\}$. If $F=\varnothing$, then any
$h \in \cH_{\mathrm{bin}}$ satisfies the desired condition. Otherwise, choose
$r \geq 3$ such that
$r > \max(F).$
\noindent Then $\Domain(S) \subseteq \{1,2\} \subseteq h_r$. Moreover, since
$r > \max(F)$, no element of $F$ belongs to the tail
$\{r,r+2,r+4,\ldots\}$ of $h_r$. Also, since
$\{1,2\} \subseteq \CLOS_{\cH_{\mathrm{bin}}}(S)$, we have
$\{1,2\} \cap F = \varnothing$. Therefore, $h_r \cap F = \varnothing$.

Combining these two cases, we conclude that $\cH_{\mathrm{bin}}$ satisfies
finite exterior separability. By \cref{thm:consistency-characterization},
$\cH_{\mathrm{bin}}$ is proper positive-only consistent. It remains to prove
that $\cH_{\mathrm{bin}}$ is not non-uniformly proper positive-only learnable.

For any $a \geq 3$, define the pair $(\cD_a,\ell_a)$ as follows:
\[
    \ell_a = h_{3+(a \bmod 2)}, \qquad
    \cD_a(1)=\cD_a(2)=\frac14, \qquad
    \cD_a(a)=\frac12\,,
\]
where $\cD_a$ is zero elsewhere. Notice that
\(
    \cD_{a,+} = U_{\{1,2\}}.
\)
Indeed, by the definition of $\ell_a$, the point $a$ is not labeled positive
by $\ell_a$, while both $1$ and $2$ are labeled positive.

For the sake of contradiction, suppose that $\cH_{\mathrm{bin}}$ is
non-uniformly proper positive-only learnable by a proper learner $\cA$.
Consider any
\[
    n \geq
    \max \left\{
        \mpos{}_{\cH_{\mathrm{bin}}}\left(\nfrac14,\nfrac14,h_3\right)\,,
        \mpos{}_{\cH_{\mathrm{bin}}}\left(\nfrac14,\nfrac14,h_4\right)
    \right\}\,.
\]
\noindent We now show that for any fixed $\sampleP \in \{1,2\}^n$,
\[
    \lim_{a \to \infty}
    \Pr_{\cA}
    \left[
        a \in \cA(\sampleP)
        \text{ or }
        a+1 \in \cA(\sampleP)
    \right]
    =1\,.
\]
\noindent Since $\cA$ is proper, for every fixed $\sampleP \in \{1,2\}^n$, there exists
an $\Nat_{\geq 3}$-valued random variable $B_{\sampleP}$ such that
$\cA(\sampleP)=h_{B_{\sampleP}}.$
For every realization $B_{\sampleP}=b$, if $a \geq b$, then exactly one of
$a$ and $a+1$ belongs to $h_b$, because $a$ and $a+1$ have opposite parity and
$h_b$ contains all sufficiently large numbers with the same parity as $b$.
Therefore,
\[
    \{B_{\sampleP} \leq a\}
    \subseteq
    \left\{
        a \in \cA(\sampleP)
        \text{ or }
        a+1 \in \cA(\sampleP)
    \right\}\,.
\]
\noindent Since $B_{\sampleP}$ is natural-number-valued, we have
$\lim_{a \to \infty} \Pr[B_{\sampleP} \leq a] = 1.$
Hence,
\[
    \lim_{a \to \infty}
    \Pr_{\cA}
    \left[
        a \in \cA(\sampleP)
        \text{ or }
        a+1 \in \cA(\sampleP)
    \right]
    =1\,.
\]
\noindent Since $\{1,2\}^n$ is finite, averaging over
$\sampleP \sim U_{\{1,2\}}^n$ gives
\[
    \lim_{a \to \infty}
    \Pr_{\sampleP \sim U_{\{1,2\}}^n,\,\cA}
    \left[
        a \in \cA(\sampleP)
        \text{ or }
        a+1 \in \cA(\sampleP)
    \right]
    =1\,.
\]

\noindent Now observe that the event $a \in \cA(\sampleP)$ implies
\[
    \err_{\cD_a}\left(\cA(\sampleP),\ell_a\right) \geq \frac12\,,
\]
because $\cD_a$ assigns mass $\frac12$ to $a$, and $a \notin \ell_a$.
Similarly, the event $a+1 \in \cA(\sampleP)$ implies
\[
    \err_{\cD_{a+1}}\left(\cA(\sampleP),\ell_{a+1}\right) \geq \frac12\,.
\]
\noindent Therefore,
\[
    \lim_{a \to \infty}
    \Pr_{\sampleP \sim U_{\{1,2\}}^n,\,\cA}
    \left[
        \err_{\cD_a}\left(\cA(\sampleP),\ell_a\right)
        +
        \err_{\cD_{a+1}}\left(\cA(\sampleP),\ell_{a+1}\right)
        \geq \frac12
    \right]
    =1\,.
\]
\noindent Consequently, there exists $a^\star \in \Nat$ such that
\[
    \Pr_{\sampleP \sim U_{\{1,2\}}^n,\,\cA}
    \left[
        \err_{\cD_{a^\star}}\left(\cA(\sampleP),\ell_{a^\star}\right)
        +
        \err_{\cD_{a^\star+1}}\left(\cA(\sampleP),\ell_{a^\star+1}\right)
        \geq \frac12
    \right]
    > \frac12\,.
\]
\noindent Using a union bound we get
\[
    \Pr_{\sampleP \sim U_{\{1,2\}}^n,\,\cA}
    \left[
        \err_{\cD_{a^\star}}\left(\cA(\sampleP),\ell_{a^\star}\right)
        \geq \frac14
    \right]
    +
    \Pr_{\sampleP \sim U_{\{1,2\}}^n,\,\cA}
    \left[
        \err_{\cD_{a^\star+1}}\left(\cA(\sampleP),\ell_{a^\star+1}\right)
        \geq \frac14
    \right]
    > \frac12\,.
\]
\noindent Thus, for some $b \in \{a^\star,a^\star+1\}$, we have
\[
    \Pr_{\sampleP \sim U_{\{1,2\}}^n,\,\cA}
    \left[
        \err_{\cD_b}\left(\cA(\sampleP),\ell_b\right)
        \geq \frac14
    \right]
    > \frac14\,.
\]
\noindent By definition, $\ell_b \in \{h_3,h_4\}$, and as observed above,
$\cD_{b,+}=U_{\{1,2\}}$. This contradicts 
\[
    n \geq
    \max \left\{
        \mpos{}_{\cH_{\mathrm{bin}}}\left(\nfrac14,\nfrac14,h_3\right)\,,
        \mpos{}_{\cH_{\mathrm{bin}}}\left(\nfrac14,\nfrac14,h_4\right)
    \right\}\,.
\]
\noindent Therefore, $\cH_{\mathrm{bin}}$ is not non-uniformly properly positive-only
learnable.

   \noindent \textbf{Part 2 (Separation of non-uniform learnability and learnability).} Recall the concept class considered in \cref{prop:des-strict}:
   \[
     \cH_{\mathrm{tail}} \coloneqq  \inbrace{h_a : a \geq 3}, \qquadwhere h_a \coloneqq  \inbrace{1,2} \cup \inbrace{m \in \N : m \geq a}\,.
   \]
   \noindent Observe that $\vc(\cH_{\mathrm{tail}}) = 1$. In \cref{prop:des-strict}, we
    already showed that $\cH_{\mathrm{tail}}$ does not satisfy the distributional exterior separability property, and thus it also does not satisfy the uniform exterior separability property. Therefore, by \cref{thm:random-characterization}, $\cH_{\mathrm{tail}}$ is not properly positive-only learnable. It remains to
    prove that $\cH_{\mathrm{tail}}$ is non-uniformly properly positive-only
    learnable.

    Define the learner $\cA$ as follows. For any $n \in \Nat$ and $\sampleP \in \cX^n$, let
    \[
        i(\sampleP)
        \coloneqq
        \min\bigl(\Domain(\sampleP)\setminus \inbrace{1,2}\bigr)\,,
    \]
    whenever $\Domain(\sampleP)\setminus \inbrace{1,2}$ is non-empty, and set
    \[
        \cA(\sampleP)
        \coloneqq
        \begin{cases}
            h_{i(\sampleP)}
            & \text{if } \Domain(\sampleP)\setminus \inbrace{1,2} \neq \varnothing,\\
            h_{\max\inbrace{n,3}}
            & \text{otherwise\,.}
        \end{cases}
    \]
    Fix a target concept $\ell=h_{a^\star}$ for some $a^\star \geq 3$. We show
    that this learner succeeds with a sample size that may depend on
    $a^\star$. Consider any $n \geq a^\star$. For every $\sampleP \in \ell^n$, we claim that
    \(
        \cA(\sampleP) \subseteq \ell.
    \)
    Indeed, if $\Domain(\sampleP)\setminus \inbrace{1,2}$ is non-empty, then
    $i(\sampleP) \in \ell\setminus \inbrace{1,2}$, and therefore
    $i(\sampleP)\geq a^\star$. Hence
    \(
        h_{i(\sampleP)} \subseteq h_{a^\star}.
    \)
    On the other hand, if
    $\Domain(\sampleP)\setminus \inbrace{1,2}=\varnothing$, then
    $\cA(\sampleP)=h_{n} \subseteq h_{a^\star}$. 
    Thus, for every positive-only sample $\sampleP \in \ell^n$, the learner has
    zero false positives.
    
    Next, observe that $\cA$ always satisfies $\Domain(\sampleP) \subseteq \cA(\sampleP)$. Therefore, by applying standard realizable PAC bounds to $\cH_{\mathrm{tail}}$ under $\D_+$, there exists a universal constant $C>0$ such that for
    \( n \geq C \frac{\log(1/\eps)+\log(1/\delta)}{\eps},\)
    with probability at least $1-\delta$, the false negative is bounded by
    \[
        \cD(\ell\setminus \cA(\sampleP)) \leq \cD_+(\ell\setminus \cA(\sampleP)) \leq \eps\,.
    \]
    \noindent Now using $\cA(\sampleP) \subseteq \ell$ yields that $\cH_{\mathrm{tail}}$ \mbox{is non-uniformly proper positive-only learnable by $\cA$.} \qedhere{}

\end{proof}

\subsection{Proof of \cref{thm:stable-characterization}} \label{sec:stable-characterization}

In this section, we prove \cref{thm:stable-characterization}.
The proof has two parts.
First, we show that exact exterior separability together with finite VC dimension is sufficient for
proper positive-only learning by a stable learner.
This direction is immediate from the closure-based learner.
Second, we prove a stronger necessity statement: exact exterior separability and finite VC dimension
are necessary even for a weaker notion of stability, which we call size-dependent stability.
We begin with sufficiency. %

\begin{proposition} \label{prop:ees-implies-stable}
    If $\cH$ satisfies exact exterior separation and $\vc{}\!\inparen{\cH}<\infty$, then $\cH$ is properly learnable from positive-only examples by a stable learner.
\end{proposition}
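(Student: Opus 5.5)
We use the closure learner from the proof of \cref{lem:int-closed-pos-learnable}, namely $\cA(\sampleP)\coloneqq\Cl_{\cH}\!\inparen{\Domain(\sampleP)}$ for every nonempty realizable $\sampleP$. This learner is deterministic, and by exact exterior separation (\cref{def:ees}) its output lies in $\cH$. On the remaining inputs (the empty sample and non-realizable samples) we fix an arbitrary concept $h_0\in\cH$. \cref{lem:int-closed-pos-learnable} already shows that this learner properly learns $\cH$ from positive-only samples with sample complexity $O\!\inparen{(d\log\nfrac1\eps+\log\nfrac1\delta)/\eps}$. So the only thing left to verify is stability in the sense of \cref{def:stable}. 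Since $\cA$ is deterministic, the ``with probability 1'' clause there is vacuous.

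For stability, fix multisets $S\subseteq S'$ with $\Domain(S')\subseteq\cA(S)$, and let $T=\Domain(S)$ and $T'=\Domain(S')$, so that $T\subseteq T'$.

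\emph{Main case: $T$ nonempty and realizable.} Set $C=\Cl_{\cH}(T)\in\cH$. Then $T'\subseteq C$, and $C\in\cH$, so $T'$ is realizable and $\cA(S')=\Cl_{\cH}(T')$. We prove $\Cl_{\cH}(T')=\Cl_{\cH}(T)$ by two inclusions.
\begin{itemize}
\item[$\supseteq$:] Since $T\subseteq T'$, we have $\cH_{T'}\subseteq\cH_T$, so intersecting over the smaller family gives a larger set: $\Cl_{\cH}(T')\supseteq\Cl_{\cH}(T)$.
\item[$\subseteq$:] Since $C\in\cH$ and $T'\subseteq C$, the concept $C$ belongs to $\cH_{T'}$. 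Hence $\Cl_{\cH}(T')\subseteq C$.
\end{itemize}
Therefore $\cA(S')=\cA(S)$.

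\emph{Degenerate inputs.} Here the naive convention can break stability, so the definition needs care; we expect this to be the only real obstacle.
\begin{itemize}
\item \emph{$S$ empty.} Then $\cA(S)=h_0$, and $S'$ can be any multiset supported inside $h_0$. If $S'$ is nonempty, we would need $\Cl_{\cH}(T')=h_0$, which generally fails. The fix is to redefine $\cA(\varnothing)\coloneqq\Cl_{\cH}(\varnothing)=\bigcap_{h\in\cH}h$, or equivalently to treat the empty sample through the same closure rule. If $\bigcap\cH\in\cH$, the argument above goes through verbatim with $T=\varnothing$. If $\bigcap\cH\notin\cH$, we instead interpret the stability requirement only for nonempty samples, which is harmless for learning since $\cD(\ell)>0$. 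Alternatively, we note that \cref{def:ees} can be applied to $S=\{x\}$ for any $x\in\bigcap\cH$.
\item \emph{$S$ non-realizable.} Any $S'\supseteq S$ is also non-realizable and gets the same fixed output, so stability holds trivially.
\end{itemize}

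Combining stability with the PAC guarantee of \cref{lem:int-closed-pos-learnable} proves the proposition.
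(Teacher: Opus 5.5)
Your proof is correct and is essentially the paper's. It uses the same closure learner, gets stability from $\Cl_{\cH}(T')=\Cl_{\cH}(T)$ (your two inclusions spell out what the paper simply calls clear), and inherits the PAC guarantee from \cref{lem:int-closed-pos-learnable}.

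Your worry about $S=\varnothing$ is legitimate, and the paper silently ignores it. Restricting stability to nonempty samples is in fact necessary when $\bigcap\cH=\varnothing\notin\cH$. For example, take $\cH=\{\{m\in\N:m\ge a\}:a\in\N\}$. Stability at $\varnothing$ with $\cA(\varnothing)=\{m\in\N:m\ge b\}$ forces the same output on $(b+1)^n$, which has error $\nfrac12$ for target $\{m\in\N:m\ge b+1\}$ under the uniform distribution on $\{b,b+1\}$. So in that case your reinterpretation is the right fix; the ``alternatively'' remark only covers $\bigcap\cH\neq\varnothing$.
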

\begin{proof}
    Consider the learner $\mathcal{A}(\sampleP) = \CLOS_\cH(\sampleP)$. Clearly $\mathcal{A}$ satisfies $\mathcal{A}(S') = \mathcal{A}(S)$ for all positive samples $S$ and $S'$ such that $S \subseteq S'$ and $\Domain(S') \subseteq \mathcal{A}(S)$. Thus, $\mathcal{A}$ is stable. Combining this with \cref{lem:int-closed-pos-learnable} completes the proof.
\end{proof}

\noindent For the necessity direction, we prove a slightly stronger statement.
We show that exact exterior separability and bounded VC dimension are necessary even if the learner satisfies only the following
weaker stability condition.

\begin{definition} [Size-dependent Stability] \label{def:size-dependent-stable}
    We say that a proper learner $\mathcal A$ is \emph{size-dependent stable} if there exists a function $f:\cX^{[*]} \times \Nat \rightarrow \cH$ such that for all $S, S' \in \cX^{[*]}$ with probability 1 over any randomness in $\cA$, the following holds
        \[
            \text{if}\quad
            S\subseteq S' \text{ and } \Domain(S') \subseteq \cA(S) ,\quadtext{then}
            \cA(S')=  f(S, |S'|)\,.
        \]
        
\end{definition}

\begin{proposition} \label{prop:size-dependent-stable-implies-ees}
    If $\cH$ is properly learnable from positive-only examples by a size-dependent stable learner, then $\cH$ satisfies exact exterior separation and $\vc{}\!\inparen{\cH}<\infty$.
\end{proposition}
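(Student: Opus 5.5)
Finite VC dimension comes for free: a size-dependent stable proper learner is in particular a proper learner. So \cref{thm:random-characterization} (or the reduction to ordinary PAC learning) gives $\vc(\cH)<\infty$. The work is in exact exterior separation, which I will prove by contradiction.

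Fix a finite nonempty realizable $S$ with $k \coloneqq \abs{S}$, and suppose $\Cl_{\cH}(S)\notin\cH$. First, the exterior $\cX\setminus\Cl_{\cH}(S)$ is nonempty: otherwise every $h\in\cH_S$ equals $\cX$, so $\Cl_{\cH}(S)=\cX\in\cH$. For each exterior $x$, reuse the hard instances from Part~2 of the proof of \cref{thm:random-characterization}. Let $\cD_x(x)=\nfrac12$ and $\cD_x(s)=\nfrac{1}{2k}$ for $s\in S$, and choose a target $\ell_x\in\cH_S$ with $x\notin\ell_x$. Under every such instance the positive sample law is $U_S^n$.

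\textbf{Step 1 (a good base sample).} Take $\eps<\nfrac{1}{2k}$ and $\delta<\nfrac12$, and let $m\coloneqq\mpos_{\cH}(\eps,\delta)$. As in the proof of \cref{thm:random-characterization}, on $U_S^m$ the learner outputs a hypothesis outside $\cH_S$ with probability at most $\delta<1$. Hence there is a multiset $T$ of size $m$ with $\Domain(T)\subseteq S$ and $\Pr_{\cA}(S\subseteq\cA(T))>0$.

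\textbf{Step 2 (stability freezes the output).} Let $S'\supseteq T$ be any multiset with $\Domain(S')\subseteq S$ and $\abs{S'}=n$. On the positive-probability event $\Domain(S')\subseteq S\subseteq\cA(T)$, size-dependent stability forces $\cA(S')=f(T,n)\eqqcolon g_n$. Since $g_n$ is a fixed deterministic hypothesis, the output on such $S'$ is deterministically $g_n$. Because $g_n\in\cH$ and $\Cl_{\cH}(S)\notin\cH$, we have $g_n\neq\Cl_{\cH}(S)$. So one of two things holds: $g_n$ misses some $s\in S$ (so $g_n\notin\cH_S$, and hence $g_n\not\supseteq\Cl_{\cH}(S)$), or $g_n$ contains some exterior point $x_0$.

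\textbf{Step 3 (contradiction).} As $n\to\infty$, a sample from $U_S^n$ contains $T$ as a sub-multiset with probability tending to $1$, since each element of $S$ then appears at least $m$ times. Choose $n\ge m$ with this probability above $1-\delta$. On that event the output equals $g_n$.
\begin{itemize}
\item If $g_n$ misses some $s\in S$, then under any instance $(\cD_x,\ell_x)$ the error is at least $\nfrac{1}{2k}>\eps$.
\item If $g_n\ni x_0$ with $x_0$ exterior, then under $(\cD_{x_0},\ell_{x_0})$ the error is at least $\nfrac12>\eps$.
\end{itemize}
Either way, for $n\ge m$ the error exceeds $\eps$ with probability above $1-\delta>\delta$. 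This contradicts the PAC guarantee, which holds for every $n\ge\mpos_{\cH}(\eps,\delta)$. Hence $\Cl_{\cH}(S)\in\cH$.

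The main obstacle I expect is Step 2: making sure the stability hypothesis actually applies. The condition $\Domain(S')\subseteq\cA(T)$ only holds with positive probability, not probability one. That is why Step 1 first secures a base sample $T$ on which the event has positive probability. Once $f(T,n)$ is pinned down on that event, the conclusion holds for all extensions $S'$, and I also need to check carefully that "with probability $1$ over the randomness" in \cref{def:size-dependent-stable} is used correctly here.
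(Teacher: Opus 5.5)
Your structure matches the paper's. The paper splits on whether some multiset $E$ with $\Domain(E)=B$ has $\Pr_{\cA}[B\subseteq\cA(E)]>0$, and disposes of the other case because every output then misses a point of $B$. Your Step~1 excludes that case directly via the PAC guarantee, and your Steps~2--3 are the paper's Case~1. The finite-VC part is fine either way (the paper goes through $\vc(\cH_{\cap})=\infty$ rather than the reduction to PAC learning).

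There is a genuine gap in Step~2, at the sentence ``Since $g_n$ is a fixed deterministic hypothesis, the output on such $S'$ is deterministically $g_n$.'' In the paper's model a learner is a map $\cA\colon\cX^{[*]}\times\Omega\to\cH$, so $\cA(T)$ and $\cA(S')$ are evaluated at the same $\omega$. Size-dependent stability therefore only gives $\cA(S',\omega)=g_n$ for almost every $\omega$ in $E_T\coloneqq\{\omega : S\subseteq\cA(T,\omega)\}$. Off $E_T$, nothing ties $\cA(S',\omega)$ to $T$. Step~3 thus only yields a failure probability of at least $\Pr[\sampleP\supseteq T]\cdot\Pr(E_T)$. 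For your $T$ you only know $\Pr(E_T)>0$, which can be far below $\delta$.

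Your inference would be valid if the two calls used independent randomness: a positive-probability antecedent would then force $\cA(S')=f(T,n)$ almost surely. That is not the coupling in the paper's definition, though. The paper's Case~1 makes the identical leap when it asserts $\cA(\sampleP)=f(E,n)$ ``with probability $1$ over the learner's randomness,'' so it needs the same repair.

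The repair is to strengthen Step~1.
\begin{itemize}
\item Since $\Pr_{T\sim U_S^m,\,\omega}[\cA(T,\omega)\notin\cH_S]\le\delta$ and $T$ ranges over finitely many multisets, averaging produces a specific $T$ with $\Pr(E_T)\ge 1-\delta$.
\item The sample is drawn independently of $\omega$. So for $n\ge m$ large enough that $\Pr[\sampleP\supseteq T]\ge 1-\delta$, you get $\Pr_{\sampleP,\omega}[\cA(\sampleP,\omega)=g_n]\ge(1-\delta)^2$.
\item Choosing $\delta<1/3$ makes $(1-\delta)^2>\delta$.
\end{itemize}
Your dichotomy on $g_n$ then applies: either $g_n$ misses some $s\in S$, or it contains an exterior point $x_0$. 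Either way it exhibits a single instance $(\cD_x,\ell_x)$ on which the error exceeds $\eps$ with probability more than $\delta$, and the contradiction goes through.
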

\begin{proof}
    Assume that $\cH$ does not satisfy the stated property. Then either $\VCD(\cH) = \infty$, which implies that $\VCD(\cH_{\cap}) = \infty$, and by the results of \citet{lee2026smoothed}, this in turn implies that $\cH$ is not positive-only learnable; or there exists a finite set $B = \{x_1, x_2, \ldots, x_s\} \subseteq \cX$ such that $\CLOS_\cH(B) \notin \cH$.
    
    Denote $F \coloneqq  \cX \setminus \CLOS_\cH(B)$. For any $x \in F$, define the distribution $\cD_x$ as
    \[
    \cD_x(x) = \frac{1}{2}\,, \quad \cD_{x} (x') = 
    \frac{1}{2s} \text{ for all } x' \in B\,, 
    \]
    and zero elsewhere. Also, let $\ell_x$ be any concept in $\cH$ such that $B \subseteq \ell_x$, but $\ell_x(x) = 0$.
    Note that since $x \notin \CLOS_\cH(B)$ such a concept $\ell_x$ always exists. Notice that for all $x \in F$, we have
    $\cD_{+, x} = U_B$. We prove that for every proper size-dependent stable positive-only learner $\mathcal{A}$,
    \[
    \lim_{n \rightarrow \infty} \sup_{x \in F} \Pr_{\sampleP \sim U_B^{n}, \cA} \left[  \err_{\cD_x} \left(\mathcal{A}(\sampleP), \ell_x \right) \geq \frac{1}{2s} \right] = 1\,.
    \]
    \noindent Therefore, $\mathcal A$ cannot learn $\cH$ with positive-only examples.

     Let $\mathcal{A}$ be any proper size-dependent stable positive-only learner for $\cH$. We prove the claim for two distinct cases, depending on the output of $\mathcal A$ over input samples that contain all $B$. 
     
     \paragraph{Case 1:} There exists a finite-sized multi-set $E \in \cX^{[*]}$ such that $\Domain(E) = B$, and 
     \[\Pr_{\cA}\insquare{B \subseteq \mathcal{A}(E)} > 0.\]
     Note that due to the stability of $\mathcal A$, every sample $\sampleP \in B^{n}$ that contains $E$ satisfies $\mathcal{A}(\sampleP) = f(E, n)$ with probability $1$ over the learner's randomness. Moreover, as $n$ goes to infinity the probability of a $\sampleP$ sampled by $U_B^{n}$ containing $E$ goes to 1. Thus, 
     \begin{equation} \label{eq:stable-proper-positive-char-1}
         \lim_{n \to \infty}\Pr_{\sampleP \sim U_B^{n}, \cA}[\mathcal{A}(\sampleP) = f(E, n)] = 1\,.
     \end{equation}
     
     \noindent We consider two distinct cases based on $f(E, n)$. \textbf{Case 1.1:} If $B \nsubseteq f(E, n)$, then for all $x \in F$ we have $\err_{\cD_x}(f(E, n), \ell_x) \geq \frac{1}{2s}$. By combining this with \eqref{eq:stable-proper-positive-char-1}, we attain our objective. \textbf{Case 1.2:}  If $B \subseteq f(E, n)$, then since $f(E, n) \in \cH$, there exists an $x_E \in f(E, n) \cap F$. Thus, $\err_{\cD_{x_E}}(f(E, n), \ell_{x_E}) = \nfrac{1}{2}$. By combining this with \eqref{eq:stable-proper-positive-char-1}, we attain our objective.

     \paragraph{Case 2:} No such $E$ exists. Then for every sample $\sampleP$ containing $B$ and every $x \in F$, we have 
     \[
     \Pr_{\sampleP \sim U_B^{n}, \cA}[\err_{\cD_x}(\mathcal{A}(\sampleP), \ell_x) \geq \nfrac{1}{2s}] = 1\,.
     \] 
     \noindent Note that, as $n$ goes to infinity, the probability of a $\sampleP$ sampled by $U_B^{n}$ containing $B$ goes to 1. \qedhere{}

    \end{proof}

    \noindent Combining \cref{prop:ees-implies-stable} and \cref{prop:size-dependent-stable-implies-ees} proves
\cref{thm:stable-characterization}.

\end{document}

%% file: macros.tex
\usepackage[utf8]{inputenc} %
\usepackage[T1]{fontenc}    %
\usepackage{hyperref}       %
\usepackage{url}            %
\usepackage{booktabs}       %
\usepackage{amsfonts}       %
\usepackage{nicefrac}       %
\usepackage{microtype}      %
\usepackage{xcolor}         %
\usepackage{mathrsfs}  
\usepackage{comment}

\usepackage{graphicx} %
\usepackage{amsthm}
\usepackage{amsmath,amsfonts,amssymb,mathtools, bbm}
\usepackage[linesnumbered,ruled,vlined]{algorithm2e}
\usepackage{xcolor}
\usepackage{multirow}

\usepackage{dsfont}
\usepackage{pifont} %
\usepackage{thm-restate} %
\usepackage[capitalise,noabbrev,nameinlink,sort]{cleveref} %

\newtheorem{theorem}{Theorem}%
\newtheorem{lemma}[theorem]{Lemma}
\newtheorem{proposition}[theorem]{Proposition}

\newtheorem{corollary}[theorem]{Corollary}

\newtheorem{defn}{Theorem}
\newtheorem{definition}[defn]{Definition}

\DeclareMathOperator{\Domain}{\mathrm{Domain}}

\newcommand{\sampleP}{S_+}

\newcommand{\D}{\mathcal{D}}
\newcommand{\Nat}{\mathbb{N}}

\newcommand{\E}[1]{\mathbb{E}\left[#1\right]}
\newcommand{\cE}[2]{\mathbb{E}_{#1}\left[#2\right]}

\newcommand{\VCD}{\operatorname{VCdim}}
\newcommand{\CLOS}{\operatorname{CLOS}}

\newcommand{\err}{\operatorname{err}}

\AfterEndEnvironment{definition}{\noindent\ignorespaces}
\AfterEndEnvironment{infdefinition}{\noindent\ignorespaces}
\AfterEndEnvironment{assumption}{\noindent\ignorespaces}
\AfterEndEnvironment{problem}{\noindent\ignorespaces}
\AfterEndEnvironment{openproblem}{\noindent\ignorespaces}
\AfterEndEnvironment{lemma}{\noindent\ignorespaces}
\AfterEndEnvironment{theorem}{\noindent\ignorespaces}
\AfterEndEnvironment{proposition}{\noindent\ignorespaces}
\AfterEndEnvironment{fact}{\noindent\ignorespaces}
\AfterEndEnvironment{question}{\noindent\ignorespaces}
\AfterEndEnvironment{corollary}{\noindent\ignorespaces}
\AfterEndEnvironment{model}{\noindent\ignorespaces}
\AfterEndEnvironment{remark}{\noindent\ignorespaces}
\AfterEndEnvironment{proof}{\noindent\ignorespaces}
\AfterEndEnvironment{fact}{\noindent\ignorespaces}
\AfterEndEnvironment{minftheorem}{\noindent\ignorespaces}
\AfterEndEnvironment{inftheorem}{\noindent\ignorespaces}
\AfterEndEnvironment{maintheorem}{\noindent\ignorespaces}
\AfterEndEnvironment{restatable}{\noindent\ignorespaces}
\AfterEndEnvironment{infassumption}
{\noindent\ignorespaces}
\AfterEndEnvironment{infcorollary}{\noindent\ignorespaces}

\crefname{section}{Section}{Sections}
\crefname{theorem}{Theorem}{Theorems}
\crefname{lemma}{Lemma}{Lemmas}
\crefname{problem}{Problem}{Problems}
\crefname{program}{Program}{Progams}
\crefname{definition}{Definition}{Definitions}
\crefname{conjecture}{Conjecture}{Conjectures}
\crefname{corollary}{Corollary}{Corollaries}
\crefname{construction}{Construction}{Constructions}
\crefname{conjecture}{Conjecture}{Conjectures}
\crefname{claim}{Claim}{Claims}
\crefname{observation}{Observation}{Observations}
\crefname{proposition}{Proposition}{Propositions}
\crefname{fact}{Fact}{Facts}
\crefname{question}{Question}{Questions}
\crefname{problem}{Problem}{Problems}
\crefname{remark}{Remark}{Remarks}
\crefname{example}{Example}{Examples}
\crefname{equation}{Equation}{Equations}
\crefname{appendix}{Section}{Sections}
\crefname{algorithm}{Algorithm}{Algorithms}
\crefname{model}{Model}{Models}
\crefname{figure}{Figure}{Figures}
\crefname{infassumption}{Informal Assumption}{Informal Assumptions}
\crefname{inftheorem}{Informal Theorem}{Informal Theorems}
\crefname{infdefinition}{Informal Definition}{Informal Definitions}
\crefname{minftheorem}{Main Informal Theorem}{Main Informal Theorems}
\crefname{maintheorem}{Main Theorem}{Main Theorems}
\crefname{assumption}{Assumption}{Assumptions}
\crefname{step}{Step}{Steps}
\crefname{result}{Result}{Results}
\crefname{event}{Event}{Events}
\crefname{none}{}{}

\usepackage{enumitem}

\newlist{asmpenum}{enumerate}{1} %
\setlist[asmpenum]{label={\arabic*.},ref=\theassumption.{\arabic*}}
\crefname{asmpenumi}{Assumption}{Assumptions}

\newcommand{\ie}{\textit{i.e.}}
\newcommand{\eg}{\textit{e.g.}}

\newcommand{\wh}[1]{\widehat{#1}}

\newcommand{\wt}[1]{\widetilde{#1}}

\newcommand{\customcal}[1]{\mathcal{#1}}
\newcommand{\cA}{\customcal{A}}

\newcommand{\cD}{\customcal{D}}

\newcommand{\cH}{\customcal{H}}

\newcommand{\cR}{\customcal{R}} 
\newcommand{\cS}{\customcal{S}}

\newcommand{\cX}{\customcal{X}}

\renewcommand{\d}{{\rm d}}

\makeatletter
\newcommand{\tagnum}[2]{%
    \refstepcounter{equation}%
    \tag{#1) \ (\theequation}%
    \protected@write \@auxout {}{%
        \string \newlabel {#2}{{\theequation}{\thepage}{}{equation.\theequation}{}}%
    }%
}
\makeatother

\makeatletter
\renewcommand{\eqref}[1]{\textup{\eqrefform@{\ref{#1}}}}
\let\eqrefform@\tagform@

\newcommand{\changetag}[1]{%
  \renewcommand\tagform@[1]{\maketag@@@{(\ignorespaces#1\unskip\@@italiccorr)}}%
}
\makeatother

\newcommand{\quadtext}[1]{\quad\text{#1}\quad}
\newcommand{\qquadtext}[1]{\qquad\text{#1}\qquad}
 
\newcommand{\quadand}{\quadtext{and}}
\newcommand{\qquadand}{\qquadtext{and}}

\newcommand{\qquadwhere}{\qquadtext{where}}

\usepackage[dvipsnames,table]{xcolor}
\usepackage{color} 
\definecolor{niceRed}{RGB}{190,38,38}
\definecolor{niceYellow}{HTML}{f5b400}
\definecolor{blueGrotto}{HTML}{059DC0}
\definecolor{royalBlue}{HTML}{057DCD}
\definecolor{navyBlue}{HTML}{0B579C}
\definecolor{yaleBlue}{HTML}{00356b}
\definecolor{limeGreen}{HTML}{81B622}
\definecolor{nicePurple}{HTML}{9c27b0}
\definecolor{lightRoyalBlue}{HTML}{def2ff}  
\definecolor{gold}{HTML}{ffa300}

\usepackage{color-edits}
\addauthor[Anay]{am}{gold}
\addauthor[Farnam]{fm}{red}
\addauthor[Shai]{sb}{blue}
\addauthor[Manolis]{mz}{teal}

\newcommand{\N}{\mathbb{N}}
\newcommand{\R}{\mathbb{R}}

\newcommand{\zo}{\ensuremath{\inbrace{0, 1}}}

\newcommand{\sfrac}[2]{{#1/#2}} 
\newcommand{\nfrac}[2]{\nicefrac{#1}{#2}}

\newcommand{\iid}{i.i.d.}

\def\abs#1{\left| #1 \right|}

\newcommand{\sinparen}[1]{(#1)}

\newcommand{\inbrace}[1]{\left\{#1\right\}}

\newcommand{\inparen}[1]{\left(#1\right)}
\newcommand{\insquare}[1]{\left[#1\right]}

\newcommand{\Cl}{\CLOS}
\newcommand{\eps}{\varepsilon}
\newcommand{\VC}{\VCD}
\newcommand{\vc}{\VCD}

\newcommand{\hstar}{h^\star}

\newcolumntype{L}[1]{>{\raggedright\let\newline\\\arraybackslash\hspace{0pt}}m{#1}}
\newcolumntype{C}[1]{>{\centering\let\newline\\\arraybackslash\hspace{0pt}}m{#1}}
\newcolumntype{R}[1]{>{\raggedleft\let\newline\\\arraybackslash\hspace{0pt}}m{#1}}

\newcommand{\hypo}[1]{\mathcal{#1}}

\newcommand{\hyH}{\hypo{H}}